\documentclass[10pt,a4paper]{article}
\usepackage[english]{babel}
\usepackage[utf8]{inputenc}
\usepackage[T1]{fontenc}
\usepackage[a4paper,margin=1in,marginparwidth=1in]{geometry}
\usepackage{setspace}
\usepackage{amsmath,amssymb,amsthm,mathtools}
\usepackage{graphicx,xcolor,array,booktabs,multirow,tabularx}
\usepackage[authoryear]{natbib}
\usepackage{caption,subcaption}
\usepackage{algorithm}
\usepackage{algpseudocode}
\usepackage{longtable,float,placeins,enumitem,needspace,etoolbox}
\usepackage{tikz}
\usetikzlibrary{arrows.meta,patterns,positioning,fit,backgrounds,calc}
\usepackage{pgfplots}
\pgfplotsset{compat=1.16}
\usepgfplotslibrary{fillbetween}
\usepackage[colorlinks=true,allcolors=blue]{hyperref}
\usepackage{xurl}
\hypersetup{
  pdftitle={When Should AI Agents Ask? Clarification Requests Under Limited Human Attention},
  pdfauthor={Hanzhao Wang; Shengjie Liu; Zhengchao Wang},
  pdfsubject={AI agents, clarification requests, and shared human attention}
}
\theoremstyle{plain}
\newtheorem{theorem}{Theorem}
\newtheorem{proposition}{Proposition}
\newtheorem{lemma}{Lemma}
\newtheorem{corollary}{Corollary}
\newtheorem{definition}{Definition}
\newtheorem{example}{Example}
\theoremstyle{remark}

\newcommand{\E}{\mathbb E}

\newcommand{\ind}[1]{\mathbf{1}\{#1\}}
\definecolor{pcpink}{HTML}{23262B}
\definecolor{pcpmuted}{HTML}{5A5E66}
\definecolor{pcpgrid}{HTML}{8A8E96}
\definecolor{pcpblue}{HTML}{2A78D6}
\definecolor{pcporange}{HTML}{EB6834}

\algrenewcommand\algorithmicrequire{\textbf{Input:}}
\algrenewcommand\algorithmicensure{\textbf{Output:}}
\setlist{nosep,leftmargin=1.6em,topsep=4pt}
\pretocmd{\section}{\Needspace{9\baselineskip}}{}{}
\title{When Should AI Agents Ask?\\Clarification Requests Under Limited Human Attention}
\author{Hanzhao Wang$^{1}$\thanks{Corresponding author: \href{mailto:hanzhao.wang@sydney.edu.au}{hanzhao.wang@sydney.edu.au}.}
\and Shengjie Liu$^{2}$
\and Zhengchao Wang$^{1}$}
\date{\small
$^{1}$The University of Sydney Business School, The University of Sydney\\
$^{2}$Amazon\\
\href{mailto:hanzhao.wang@sydney.edu.au}{hanzhao.wang@sydney.edu.au}\quad
\href{mailto:zycjlsj@amazon.com}{zycjlsj@amazon.com}\\
\href{mailto:zhengchao.wang@sydney.edu.au}{zhengchao.wang@sydney.edu.au}}
\begin{document}
\maketitle
\onehalfspacing
\begin{abstract}
AI agents increasingly carry out delegated tasks in parallel, with one person overseeing several of them at once. In practice, an agent often lacks information its task requires, such as an unstated preference, constraint, or specification. It can ask the human to clarify, or it can search for the information itself, for example in past records, which takes time and effort and may fail. In this paper, we study how to coordinate these decisions across several agents that share one human with limited attention. We build a stylized model in which each agent can search, ask, or continue the parts of its task that do not depend on the missing information, and we formulate the problem as a dynamic program. We first show that whether an agent should ask depends on more than its own uncertainty: an agent with other work to do can be better off asking earlier, and which question the human should answer first depends on all waiting questions, so no ranking computed agent by agent is optimal. We then propose the Pairwise Coordination Policy (PCP), which uses exact one- and two-task problems to estimate how pairs of tasks delay each other when they share the human. Theoretically, PCP is exactly optimal when at most two requirements remain unresolved and costs at most twice the optimum when every unresolved requirement must be resolved by the human. We evaluate PCP on 864 synthetic cases and in a case study with LLM agents. On the synthetic benchmark, PCP's mean gap to optimal cost is 0.07\%, and it reduces cost by an average of 18--31\% relative to three standard rules for when to ask. In the LLM case study, agents completed every task correctly, and PCP achieved the lowest mean cost in all tested settings.
\end{abstract}
\noindent\textbf{Keywords:} AI agents, human--AI collaboration, clarification requests, shared attention, approximate dynamic programming.
\section{Introduction}\label{sec:introduction}

Organizations are increasingly delegating multistep work to AI agents. In McKinsey's 2026 survey, 40\% of respondents from organizations reported scaling agents in at least one business function, up from 27\% in the previous year \citep{mckinsey2026}. Delegation is also changing how individuals organize their work. Usage evidence from Codex shows users assigning tasks to several agents running concurrently, tracking their progress in separate threads, and stepping in when clarification, review, or correction is needed \citep{johnston2026}. A single person may thus oversee several streams of agentic work at once and serve as the shared source of human input for all of them.

This arrangement raises an operational question: when should an agent continue working autonomously, and when should it seek human input? In practice, this choice already depends on how much human capacity is available. OpenTable's customer-service agents, for example, draw on knowledge resources and follow escalation rules whose thresholds can be adjusted when staffing is tight \citep{salesforce2025}. Similarly, an internal study at Anthropic describes employees managing competing requests from agents and developing tools to triage them \citep{wilmers2026}. These examples motivate studying how agents' autonomous effort and their requests for human intervention should be coordinated across tasks.

This question becomes concrete when an agent lacks information it needs to complete a delegated task. Instructions may leave preferences, constraints, or other details unstated. Some gaps require the person who delegated the task to clarify what is needed, and others can be filled from material already available to the agent. Consider a procurement manager whose agents each handle a different purchase. A missing specification may be recoverable from records of previous orders, or it may require the manager's clarification. A developer coordinating several coding agents may face questions about intended behavior whose answers are already in the tests or issue history. Likewise, someone who has delegated travel research, a purchase comparison, and event planning may be asked about preferences recorded in their emails, calendar, or notes. Such a gap need not bring the task to a halt. The agent may still be able to organize records, write code unaffected by the open question, or compare general alternatives while the missing information remains unresolved.

Figure~\ref{fig:motivation} shows a simple instance in Codex \citep{johnston2026}. Two coding agents working for the same person need input at the same time. One is building a daily sales report and asks which time zone defines a business day. The other is implementing a checkout and asks whether a coupon and a loyalty discount can be combined. Both agents have paused, and their questions wait for the same person.

How an agent should handle such a gap is not obvious. Looking for the answer takes time and effort that could otherwise advance the task, and the search may come up empty. When it succeeds, however, the human is spared an interruption and has more attention left for other agents. Asking early has its own appeal: the human can answer while the agent continues with other parts of the task, reducing the time lost waiting. Yet every question competes for the same person's attention, and a question the agent could have answered itself may delay one that requires human input. Asking also need not end the agent's search: while waiting for a response, it can keep looking and withdraw the question if it finds the answer before the human takes it up. The right choice for one agent therefore depends on the human's availability and on what the other agents need. We study how to make these decisions together: how agents should balance finding missing information, continuing their work, and seeking clarification, and which waiting question the human should answer whenever available.

\begin{figure}[t]
\centering
\begin{tikzpicture}[x=1cm,y=1cm,
  shot/.style={inner sep=0pt,draw=pcpgrid!80,line width=0.4pt},
  tag/.style={inner sep=0pt,font=\scriptsize\bfseries},
  ask/.style={-{Stealth[length=5pt,width=4pt]},line width=0.8pt,#1}]
\node[shot,anchor=north west] (S1) at (0,0) {\includegraphics[height=4.8cm]{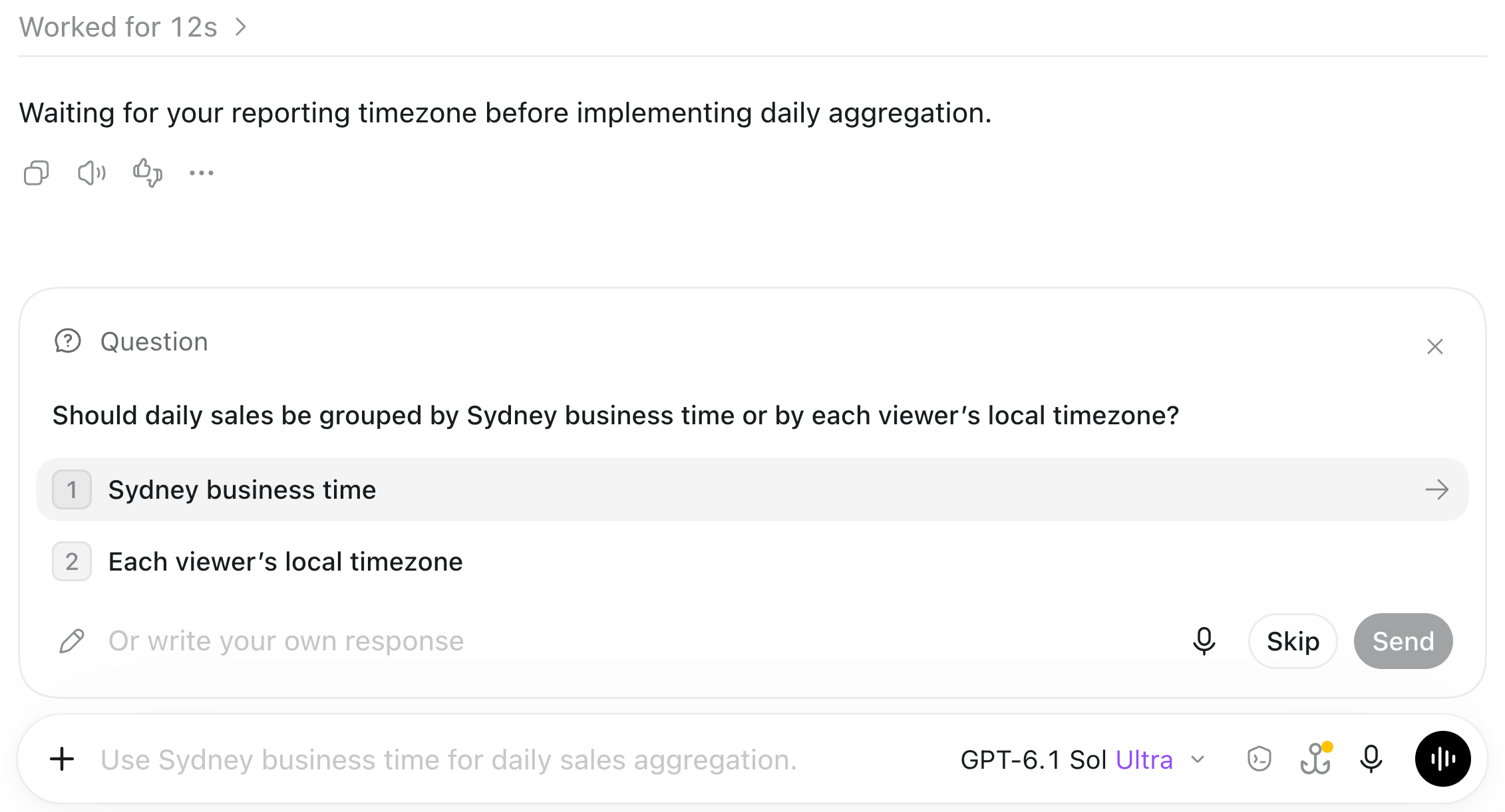}};
\node[shot,anchor=north east] (S2) at (\textwidth,0) {\includegraphics[height=4.8cm]{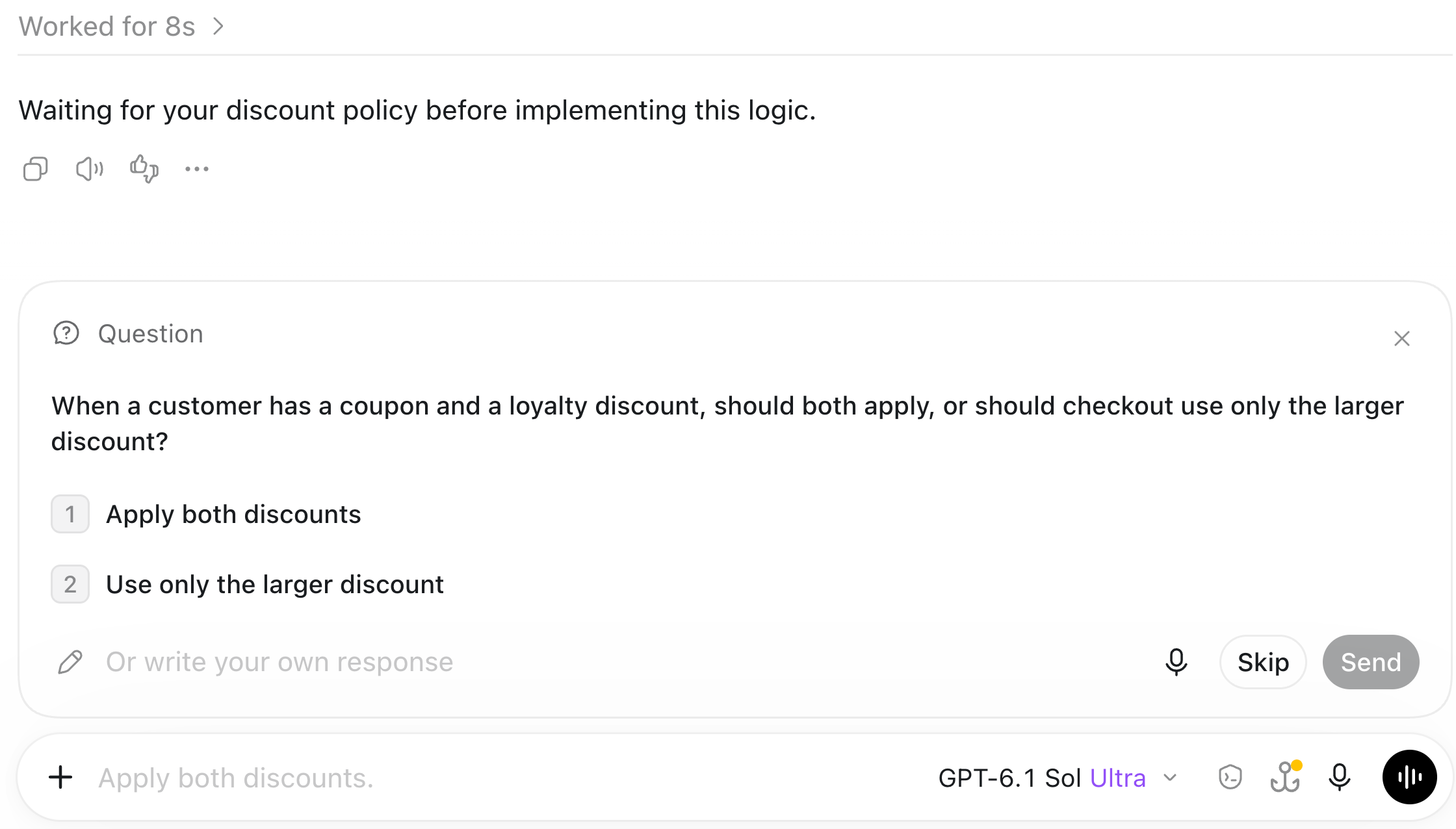}};
\node[tag,text=pcpblue,above=3pt of S1.north west,anchor=south west] {AGENT 1};
\node[tag,text=pcporange,above=3pt of S2.north west,anchor=south west] {AGENT 2};
\coordinate (H) at ($(S1.south east)!0.5!(S2.south west)+(0,-0.95)$);
\fill[pcpink] ($(H)+(0,0.3)$) circle (0.17);
\fill[pcpink] ($(H)+(-0.3,-0.22)$) arc[start angle=180,end angle=0,x radius=0.3,y radius=0.34] -- cycle;
\node[below=0.28cm of H,inner sep=0pt,font=\small\bfseries] {Human};
\draw[ask=pcpblue] (S1.south) to[out=-90,in=180] ($(H)+(-0.5,0)$);
\draw[ask=pcporange] (S2.south) to[out=-90,in=0] ($(H)+(0.5,0)$);
\end{tikzpicture}
\caption{Two Codex coding agents that need input from the same person at the same time.}
\label{fig:motivation}
\end{figure}

We formulate this problem as a dynamic program. Each task has its own agent, which in every period can search existing records for the missing requirement, advance work that does not depend on it, or submit a question to the shared human. The human is available only intermittently and answers at most one question per period. A question stays pending until it is answered, and the agent can keep searching and withdraw the question if a search succeeds. We show that several actions can be removed without loss of optimality, which makes the problem a finite dynamic program, although its size grows exponentially with the number of tasks.

We first study the single-agent case and examine how different settings affect the optimal strategy. We show that an agent whose independent work is complete and whose requirement remains unresolved has an optimal policy characterized by two thresholds on the number of failed searches, which determine whether it should search again or ask the human. We also show that increasing the amount of independent work (which requires no additional information from either the human or a search) can encourage the agent to ask sooner and increase demand on the human. Finally, we study how allowing an agent to withdraw a pending question affects demand on the human, highlighting withdrawal as an important decision option in this setting.

We then consider portfolios with multiple tasks. We first establish a negative result: no policy based solely on each task's individual state is optimal. We then propose the Pairwise Coordination Policy (PCP), which estimates future costs using exactly solvable one- and two-task problems. Each pairwise term measures how much two tasks delay one another by sharing the human and we call this quantity the pairwise price of shared attention. PCP is exactly optimal when at most two requirements remain unresolved and costs no more than twice the optimal cost once no search can succeed.

Numerically, we first conduct a synthetic experiment in which PCP's cost is, on average, within 0.07\% of the optimal cost across 864 portfolios of three or four tasks. PCP also reduces costs by 18\% to 31\% on average compared with common rules for deciding when to ask. A case study in which LLM agents carry out procurement tasks using Codex shows a similar advantage for PCP.

The rest of the paper is organized as follows. Section~\ref{sec:literature-review} reviews related work. Section~\ref{sec:setup} presents the model, Section~\ref{sec:structure} analyzes a single task, Section~\ref{sec:algorithm} develops PCP and its guarantees, Section~\ref{sec:numerics} reports the numerical study and the case study, and Section~\ref{sec:conclusion} concludes.


\section{Literature Review}
\label{sec:literature-review}

This study brings together four streams: human--agent delegation, clarification and sequential information acquisition, allocation of shared human attention, and structured dynamic optimization. Specifically, we examine a setting in which several agents can search for missing requirements or advance independent work, while their clarification requests compete for a single human's attention.

\subsection{Delegation and adjustable autonomy}
\label{subsec:literature-delegation}

Research on agentic information systems treats delegation as an evolving distribution of responsibility between humans and agents. \citet{bairdMaruping2021} develop a conceptual framework for delegation to and from agentic systems, and \citet{zhangSun2026} set out a decision-science agenda centered on the interaction structures through which humans and agents make decisions over time. Behavioral evidence complements these frameworks: \citet{fugenerEtAl2022} show that the performance of human--AI collaboration depends on the direction of delegation and on humans' ability to assess their own capabilities. Earlier work on mixed-initiative interaction and adjustable autonomy treats asking for human input as a sequential decision. \citet{horvitz1999} advocates decision-theoretic reasoning about when to act and when to engage a user, while \citet{scerriEtAl2002} model revisable transfers of control that account for uncertain human response times and their effects on team performance. We take a normative, operational approach. Assuming fixed search and human-answer reliability, we model each question as pending until it is answered or withdrawn after a successful search. While the question is pending, the agent can continue working. We account separately for notification, answering, and waiting costs, allowing each to shape a distinct decision.

\subsection{Clarification and sequential information acquisition}
\label{subsec:literature-clarification}

Choosing between further search and asking relates to the economics of sequential search, in which the value of further information governs both search order and stopping \citep{weitzman1979}. In our setting, repeated attempts target a single requirement, failed searches weakly lower the probability that the next attempt succeeds, and search consumes agent time that could advance independent work. Recent clarification research treats questions as costly information acquisition. \citet{dongEtAl2026} weigh the expected utility gain from asking against the cognitive cost to the user, and \citet{suriEtAl2026} apply the expected value of perfect information to structured tool parameters, distinguishing uncertainty about the user's specification from uncertainty in model predictions. Computational studies further show that how and when clarification enters execution matters: \citet{edwardsSchuster2026} separate detection of underspecified instructions from code execution in a multi-agent scaffold, and \citet{gulatiEtAl2026} find that the value of a clarification depends on the missing information and on the stage of execution at which it arrives. Unlike these studies, we consider multiple distinct tasks that share a human. Searching for missing information on one task preserves the human's capacity to assist with others, while an agent can continue working as it awaits an answer. This setting lets us determine when the presence of independent work makes immediate clarification more attractive and evaluate clarification requests by their impact on portfolio completion costs.

\subsection{Allocation of shared human attention}
\label{subsec:literature-human-attention}

A related stream of research studies human supervision of multiple robots. \citet{dahiyaEtAl2022} formulate operator allocation among semiautonomous robots as a restless-bandit problem and derive conditions for a Whittle-index policy, building on \citet{whittle1988}. \citet{caiEtAl2022} schedule robots that can complete tasks autonomously or faster with human assistance using a mixed-integer formulation with known processing times. \citet{costenEtAl2024} design a bidding mechanism for shared assistance that balances its direct benefit against learning about latent operator performance. In our model, demand for assistance is endogenous: an agent can remove its own need for clarification by finding the information, even after asking. Search competes with work for the agent's time, whereas human service runs in parallel with work. These features change how priorities should be set. Our pairwise policy prices the cost of sharing the human using exactly solved two-task problems. Our setting also connects to scheduling. When every remaining requirement must be supplied by the human, choosing which question to answer is a single-server scheduling problem with unit processing times. In the absence of independent work, serving questions in descending order of weight is optimal, as in Smith's rule \citep{smith1956}.

\subsection{Structured dynamic optimization}
\label{subsec:literature-dynamic-optimization}

Our computational approach builds on approximate dynamic programming with factored value functions. For multiagent planning in factored Markov decision processes, \citet{guestrinEtAl2002,guestrinEtAl2003} use local value functions and coordination graphs. \citet{deFariasVanRoy2003} relate value approximation to control performance in linear-programming approaches, while \citet{boehmerEtAl2020} approximate joint action values using individual utilities and pairwise payoffs. The Pairwise Coordination Policy constructs its components from exactly solved reduced versions of our model. For each pair of tasks, it subtracts the two singleton values from the joint value when both tasks share a human, isolating the cost of sharing that capacity. Each component therefore has an operational interpretation.

\section{Model}\label{sec:setup}

We model a finite batch of delegated tasks that share one human. Throughout, we use the procurement setting of Section~\ref{sec:introduction} as a running example to illustrate the setup: each agent handles one purchase for a manager, and the missing requirement is a specification, such as the maximum acceptable lead time, that may or may not be recorded in earlier orders.

\subsection{Tasks, agents, and the human}

In this study, we assume that each task is delegated to its own agent, as when a person runs several agent threads at once \citep{johnston2026}. Let $n$ be the number of tasks. A task is complete only when both conditions are met. First, the agent must finish the task's \emph{independent work}: steps that do not depend on the missing requirement, such as extracting supplier prices and computing landed costs. Second, the missing requirement must be \emph{resolved}, either by the agent searching existing records or by the human answering a question. Time is divided into periods, each long enough for one search attempt or one unit of independent work. An agent can perform at most one of these activities in a period, so searching and working compete for the agent's time.

To ask, the agent submits a question, which remains \emph{pending} until the human answers it. The human is shared by all tasks and only intermittently available. While its question is pending, an agent can continue its independent work, including in the period in which the human answers. It can also keep searching, and if a search succeeds first, the question is \emph{withdrawn}. A central controller makes all decisions: which questions to submit, which pending question the human answers, and whether each agent searches, works, or waits.

\subsection{Search and information}

We assume the agent does not know whether the needed requirement is in the records. Task $i$'s requirement is recoverable with probability $p_i\in[0,1)$. We exclude $p_i=1$ so that repeated failures eventually make further search unattractive. If the requirement is recoverable, each search attempt finds it independently with probability $\alpha_i\in(0,1]$; if it is not, every search fails. Recoverability and search outcomes are independent across tasks, and a successful search resolves the requirement correctly and observably. In the procurement example, $p_i$ is the probability that the specification appears in earlier orders, and $\alpha_i$ is the probability that one pass through those records finds it, given that it is there. We assume that the agent system knows (or has estimates of) these parameters; addressing cases in which they are unknown is left for future research.

By Bayes' rule, a failed search weakly increases the probability that the requirement is not recoverable. After $k$ failed searches, the probability that the next search succeeds is
\begin{equation}\label{eq:belief}
 q_i(k)=\frac{p_i\alpha_i(1-\alpha_i)^k}{1-p_i+p_i(1-\alpha_i)^k},
\end{equation}
which is weakly decreasing in $k$. Thus, the number of failures summarizes all the information the agent has learned from searching.

\subsection{State and timing}

At each period $t$, let $s_i=(x_i,k_i,z_i)$ denote the observed state of task $i$, suppressing the time index for notational convenience. Here, $x_i$ is the number of independent work units remaining, $k_i$ is the number of failed searches, and $z_i\in\{U,P,R\}$ indicates whether the requirement is unresolved with no question submitted ($U$), unresolved with a pending question ($P$), or resolved ($R$). Task $i$ starts in state $s_i=(b_i,0,U)$, where $b_i\ge0$ denotes its amount of independent work. We also allow a question to be pending initially. The task is complete when $x_i=0$ and $z_i=R$. Figure~\ref{fig:lifecycle} shows how $z_i$ evolves.

\begin{figure}[t]
\centering
\begin{tikzpicture}[>=Stealth,
  st/.style={draw=pcpink, line width=0.6pt, rounded corners=3pt, minimum width=1.75cm,
             minimum height=0.85cm, align=center, font=\footnotesize, fill=white},
  lab/.style={font=\scriptsize, align=center, text=pcpink},
  arr/.style={->, line width=0.6pt, draw=pcpink}]
 \node[st] (U) at (0,0) {Unsubmitted\\[-1pt]$z_i=U$};
 \node[st] (P) at (4.0,0) {Pending\\[-1pt]$z_i=P$};
 \node[st] (R) at (2.0,-2.1) {Resolved\\[-1pt]$z_i=R$};
 \draw[arr] (U) -- node[lab, above] {submit\\question} (P);
 \draw[arr] (U) -- node[lab, left, xshift=-1pt] {search\\succeeds} (R);
 \draw[arr] (P) -- node[lab, right, xshift=1pt] {human answers,\\or search succeeds\\and withdraws} (R);
 \draw[arr] (U.north) ++(-0.3,0) arc[start angle=200, end angle=-20, radius=0.32]
   node[lab, pos=0.5, above, yshift=1pt] {search fails ($k_i{+}1$)};
 \draw[arr] (P.north) ++(-0.3,0) arc[start angle=200, end angle=-20, radius=0.32]
   node[lab, pos=0.5, above, yshift=1pt] {search fails ($k_i{+}1$)};
\end{tikzpicture}
\caption{Status $z_i$ of task $i$'s requirement. Each search succeeds with probability $q_i(k_i)$. }
\label{fig:lifecycle}
\end{figure}

In each period, the human is available with probability $a\in(0,1]$, independently across periods and of all search outcomes. This is a reduced-form description of a person who checks in on agents intermittently while attending to other duties. Events within each period occur in the following order:
\begin{enumerate}
 \item Before learning whether the human is available, the controller submits questions. Each task's question can be submitted at most once.
 \item Availability is revealed. If the human is available, the controller selects at most one unanswered submitted question for service; this may include a question submitted in the current period. Service need not follow a first-come, first-served order.
 \item Each agent searches, works, or waits, knowing whether the human is available and which question, if any, is being served.
 \item Outcomes take effect at the end of the period. Work units are completed, and a served question resolves its requirement because answers are correct. A successful search also resolves its requirement and withdraws any pending question at no cost. Completed tasks leave the system.
\end{enumerate}
We assume that answers and successful searches are correct, separating timing and capacity from the reliability of retrieval; this assumption is consistent with behavioral research \citep{fugenerEtAl2022}.

\subsection{Costs and objective}

We assume there are four types of cost. Each period in which task $i$ is incomplete incurs a cost of $w_i>0$, which we call its \emph{delay weight}. A higher weight marks a more urgent task, such as a supplier decision that holds up a purchase. Delay costs are charged at the beginning of each period. Each search on task $i$ incurs a cost of $c_i>0$, reflecting compute and tool usage. Each submitted question incurs a cost of $\kappa_i\ge0$, representing the interruption caused by notifying the human, and each answer incurs a cost of $\eta\ge0$, representing the human time required to respond. Because submission and answering are charged separately, a question withdrawn before service incurs only the submission cost $\kappa_i$.

Let $C_i$ be the number of periods until task $i$ completes, $N_i^S$ its number of searches, $N_i^Q$ its number of submitted questions, and $N^H$ the total number of human answers. From the joint state $s=(s_1,\ldots,s_n)$, the controller seeks
\begin{equation}\label{eq:objective}
 V^*(s)=\inf_\pi\E_s^\pi\!\left[
 \sum_i w_i C_i+\sum_i c_i N_i^S
 +\sum_i\kappa_i N_i^Q+\eta N^H\right],
\end{equation}
where the infimum is over nonanticipative policies $\pi$, which may depend on the observed history. For a continuation state, $C_i$ counts the periods remaining until task $i$ completes, and the objective excludes fees already incurred.

\subsection{Structural properties}

We first identify restrictions on the controller's decisions that preserve optimality.

\begin{lemma}[Structural properties]\label{lem:reform}
There exists an optimal policy with the following three properties simultaneously.
\begin{enumerate}[label=(\roman*)]
 \item An agent never searches while its question is being answered, performs its remaining work without delay after resolution, and, before resolution, works whenever it does not search and work remains.
 \item At most one new question is submitted per period, and a newly submitted question is served if the human is available in that period.
 \item Task $i$ is never searched after $M_i$ failures, where $M_i=\min\{k\ge0:q_i(k)B_{\max}\le c_i\}$ is finite and $B_{\max}=\sum_iw_ib_i+\frac na\sum_iw_i+\sum_i\kappa_i+n\eta$ bounds the optimal remaining cost from any reachable state.
\end{enumerate}
With these reductions, the optimal value solves a finite dynamic program in which every state-changing transition advances work, failures, submission, or resolution.
\end{lemma}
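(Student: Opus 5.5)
The plan is to prove the three properties by exchange arguments that never increase expected cost. I would apply them in a fixed order so that each modification preserves the properties already established. Throughout I would couple the original and modified policies on the same realizations: the human-availability sequence, the recoverability indicators, and the i.i.d. search-success draws indexed by attempt number. The argument then becomes a pathwise comparison of costs.

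\emph{Property (i).} Searching while one's own question is being served costs $c_i$. The question resolves anyway at period end, so replacing that search with work, or with waiting if no work remains, weakly lowers cost. After resolution the only remaining cost of task $i$ is delay, and working every period minimizes $C_i$. Before resolution, suppose an agent idles while work remains. Swap the idle period with a later work period, or simply do the work now. Every later decision is still feasible, and completion weakly advances. The reason is that completion requires $x_i=0$ and $z_i=R$, and working earlier never delays either condition. This interchange is valid because work does not interact with the human, search outcomes, or other tasks.

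\emph{Property (ii).} If two questions are submitted in the same period, at most one can be served. Postponing the unserved one to its service period (or to never, if it is withdrawn) preserves every service time. It also saves $\kappa_j$ whenever the question would have been withdrawn. The same deferral handles a newly submitted question that is not served while the human is available: it can be resubmitted at its actual service period. One subtlety has to be checked. A pending question could have been served "for free" at some later random availability, and the controller decides submission before availability is revealed. The deferred policy must therefore submit at the start of the period in which the original policy would serve it. That is nonanticipative, because the original service decision depends only on information available at that time plus the availability draw, and submission happens before that draw. I expect the resolution to be this: submitting at the beginning of the intended service period and then serving it if the human is available reproduces the original policy's behavior, since a pending question that is not served incurs no benefit. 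The catch is that the original policy might serve it at availability, but the deferred one has to commit before knowing availability. So I would restructure it as "submit each period until served", which is not allowed since a question is submitted at most once. This is the step I expect to be the main obstacle, and I would handle it by noting that the submission cost $\kappa_i$ is incurred at most once either way. Leaving a question pending without service only matters through withdrawal, and the modified policy keeps the question pending from its first serviceable period. I would need to verify the claim carefully, showing that among pending questions only the served one matters in each period.

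\emph{Property (iii) and finiteness.} First bound the value. The policy "ask immediately, answer in some order, work in parallel" completes every task within $b_i$ plus the number of periods needed for $n$ answers. Each answer needs an expected $1/a$ periods, so the expected cost is at most $B_{\max}$. A search at belief $q_i(k)$ can save at most the entire remaining cost, which is bounded by $B_{\max}$ (the value is nonnegative). If $q_i(k)B_{\max}\le c_i$, then replacing that search with work or waiting is weakly better. Monotonicity of $q_i$ makes this hold for all $k\ge M_i$, and finiteness of $M_i$ follows from $q_i(k)\to0$, which holds because $p_i<1$. With $x_i\le b_i$, $k_i\le M_i$, and $z_i$ taking three values, the state space is finite. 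Under (i)--(iii) each nontrivial transition changes some coordinate monotonically, so the dynamic program is acyclic apart from waiting self-loops when the human is unavailable. These loops are resolved in closed form by the geometric factor $1/a$.
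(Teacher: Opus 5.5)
\paragraph{The gap is in (iii).} The real difficulty is in (iii), not in the step you flagged. Your claim that a search ``can save at most the entire remaining cost'' silently assumes that a failed search leaves you no better off than not searching at all. Let $s_S$, $s_F$, $s_N$ be the successors after a success, after a failure, and with no search, and let $q=q_i(k_i)$. Then
\[
 c_i+qV(s_S)+(1-q)V(s_F)-V(s_N)\;\ge\;c_i-q\bigl[V(s_F)-V(s_S)\bigr]\;\ge\;c_i-qB_{\max}.
\]
The first inequality holds only if $V(s_N)\le V(s_F)$, that is, only if the value is nondecreasing in the failure count. This is true, but it needs its own argument, because the posterior governs every future success probability and update. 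Your proposal never supplies it. The paper proves it in Lemma~\ref{app:lem-optimism} by a coupling. Under the more optimistic posterior $\beta_+$, draw a private Bernoulli $Z$ with mean $\beta_-/\beta_+$. Then run any policy designed for $\beta_-$ on a virtual task whose searches succeed only when the actual search succeeds and $Z=1$. An actual success that the virtual history misses only resolves the task earlier, and afterwards the controller keeps simulating and skips that task's searches, submission, and service.

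\paragraph{From a one-step comparison to an optimal capped policy.} Even granting monotonicity, your comparison is a one-step statement about the optimal continuation value. You do not explain how it yields an optimal policy that never searches past the caps. Deleting a search from a given policy pathwise is not well defined: that policy's continuation depends on the deleted outcome, and a simulated success would treat an unresolved task as resolved. Also, before the caps are known to be harmless, the problem is undiscounted with unboundedly many failure counts, so there is no finite backward induction to run.

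The paper handles this by truncation. Let $V^T_t$ be the optimal cost when a fixed completion rule $F$ (submit everything, never search, serve by index) must be used from period $T$ on. Inside that finite-horizon recursion the exchange above is exact, using $0\le V^T_{t+1}\le B_{\max}$ and monotonicity in $k_i$, so caps cost nothing for each $T$. For any finite-cost $\pi$, $V^T_0\le J^\pi+B_{\max}\Pr^\pi(\tau>T)$ and $\Pr^\pi(\tau>T)\to0$, which gives $V^*\le V^{\rm cap}\le V^T_0\to V^*$. Alternatively, you could cite nonnegative-cost dynamic programming: $V^*$ satisfies Bellman's equation, and any stationary policy attaining its minimum at every state is optimal, so a capped minimizer at every state suffices. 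Only after this step is the state space finite, so that your acyclicity and self-loop argument applies.

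\paragraph{On (ii).} Your tentative fix is exactly the paper's. A plan fixes, before availability is drawn, which question it would serve if the human is available. Submitting that question at the start of the period is therefore nonanticipative. If the human turns out to be unavailable, the question simply stays pending at no extra cost, because the original policy had already paid its $\kappa_j$. Run the original policy on a virtual history that records its own submissions (and, for (i), its own work counts), so that its later decisions remain well defined.
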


Lemma~\ref{lem:reform} shows that the controller can restrict attention to a much smaller set of decisions without losing optimality: agents never sit idle when they have work, at most one new question is submitted per period, and each task has a finite search budget. Section~\ref{sec:algorithm} uses these properties to define the candidate plans of our policy. Appendix~\ref{app:dp} shows that these caps are sufficient and states the resulting dynamic program, and Appendix~\ref{app:proof-reform} proves these properties.

Even with these reductions, solving for the optimal policy exactly remains computationally challenging. A single task $i$ has at most $3(b_i+1)(M_i+1)$ states, so the joint state space can contain as many as $\prod_{i=1}^n 3(b_i+1)(M_i+1)$ states. We call the decisions made in one period a \emph{period plan}. A period plan specifies which question, if any, is submitted and, for each possible availability outcome, which question is served and whether each agent searches, works, or waits. The number of period plans grows exponentially with $n$, so exact dynamic programming is practical only for small portfolios. The following sections develop additional structure in the problem and introduce PCP, an efficient algorithm for larger portfolios.

\section{When should a single agent search, ask, or work?}\label{sec:structure}

We begin with a single task to understand how an agent should choose among searching, asking, and working. In this setting, the optimal policy admits an explicit characterization that reveals how costs and timing shape the decision to ask. The same forces govern each agent's choices in a portfolio and help explain the behavior of the optimal policy and PCP in Sections~\ref{sec:algorithm} and~\ref{sec:numerics}. The single-task value functions also serve as building blocks for PCP. Because this section considers one task, we drop the index $i$.

\subsection{When to stop searching and ask}\label{subsec:blocked}

We start with the simplest case: a task whose independent work is finished ($x=0$). We call such a task \emph{blocked} because only its requirement remains; its agent can therefore only search or wait for the human. The question is how many failed searches the agent should tolerate before asking, and whether it should continue searching afterward. The trade-off is subtle because these costs arise at different times: the submission fee is paid when the question is submitted, the answer fee when the human responds, and delay costs accrue while the task waits. The next theorem shows that an optimal policy can be characterized by two thresholds on the number of failed searches: one before asking and one after.

\begin{theorem}[Blocked-task thresholds]\label{thm:thresholds}
For one task with $x=0$, there exists an optimal policy with an asking threshold $k_{\rm ask}$: an unsubmitted task searches below this threshold and submits at or above it, where ties between searching and submitting are broken in the same direction at every failure count. Conditional on a pending question and an unavailable human, search is strictly preferable to waiting if and only if
\begin{equation}\label{eq:pendingcut}
 q(k)\left(\eta+\frac wa\right)>c.
\end{equation}
Thus pending search also stops at a failure-count threshold. The two thresholds have no universal ordering.
\end{theorem}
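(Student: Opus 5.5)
The plan is to analyze the single blocked task as a two-layer stopping problem, handling the pending layer first. Once the pending values are known, they serve as the terminal payoff for the unsubmitted layer. Throughout I use Lemma~\ref{lem:reform}: with one task, a pending question is served whenever the human is available. Serving is never harmful, since it resolves the task at cost $\eta$ against an expected continuation of at least $\eta$. Search is also capped at $M$ failures, so the program is finite and backward induction in $k$ is available.

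\emph{Pending layer.} Let $W=\eta+w/a$ be the expected remaining cost of a pending question when the agent never searches: it pays $w$ per period for a geometric number of periods with mean $1/a$, plus one answer fee. Search matters only in periods where the human is unavailable, because by Lemma~\ref{lem:reform}(i) the agent does not search while being answered. Fix $k$ with the human unavailable. If waiting is optimal there, the state at the start of the next period is again pending with count $k$. By stationarity the same choice recurs, so the value of the waiting branch is exactly $W$ from the next period on. Searching once and then following any policy yields $c+(1-q(k))V_P(k+1)$ from that point on. Since $V_P(k+1)\le W$, searching once and then waiting forever already costs $c+(1-q(k))W$. Therefore searching beats waiting whenever $q(k)W>c$, which is \eqref{eq:pendingcut}. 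Conversely, suppose $q(k)W\le c$. Then $q(j)W\le c$ for all $j\ge k$ because $q$ is weakly decreasing. An induction from $M$ downward then shows $V_P(j)=W$ for $j\ge k$: each search costs at least what it saves, namely $q(j)$ times a continuation of at most $W$. So waiting is weakly optimal there. The strict equivalence follows, and monotonicity of $q$ turns it into a failure-count threshold $K_P=\min\{k:q(k)W\le c\}$.

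\emph{Unsubmitted layer.} In state $(k,U)$ the controller compares two values. Submitting gives $\mathrm{Sub}(k)=\kappa+V_P(k)$, with service in the current period if the human is available, per Lemma~\ref{lem:reform}(ii). Searching gives $S(k)=c+w+(1-q(k))V_U(k+1)$, where $V_U=\min(S,\mathrm{Sub})$ and $V_U(M)=\mathrm{Sub}(M)$. The threshold claim amounts to single crossing: if $\mathrm{Sub}(k)\le S(k)$, then $\mathrm{Sub}(k+1)\le S(k+1)$. I would prove this by contradiction with an interchange argument. Suppose submitting is weakly optimal at $k$ but searching is strictly optimal at $k+1$. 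Compare the policy that searches once at $k$ and then submits at $k+1$ with the policy that submits at $k$ and runs the pending search rule. The two policies spend the same search budget, but the second receives human service in parallel, and in addition its first search has success probability $q(k)\ge q(k+1)$. Writing both costs out and using that $V_P$ is nondecreasing in $k$ (more failures make pending search less valuable), I expect to obtain $S(k)<\mathrm{Sub}(k)$, which contradicts the assumption. Breaking ties consistently, always toward submitting or always toward searching, then yields a single index $k_{\rm ask}$. This step is the main obstacle. The comparison must handle the fact that pending search is only available during unavailable periods, and it must track how $\kappa$ and the within-period service combine. If a direct interchange is too delicate, I would instead prove by backward induction that $S(k)-\mathrm{Sub}(k)$ is nondecreasing in $k$, using $q(k)\downarrow$ and $V_U(k+1)\uparrow$.

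\emph{No ordering.} Finally, two numerical instances suffice to show that the thresholds have no universal order. First take a large $\kappa$ with $\eta+w/a$ small. An unsubmitted search then also saves $\kappa$, so the agent keeps searching past $K_P$ before asking and $k_{\rm ask}>K_P$. Next take $\kappa=0$ with $a$ small: asking immediately is cheap, while pending search remains valuable because $W$ is large, so $k_{\rm ask}=0<K_P$.
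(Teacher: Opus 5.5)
\textbf{Gap in the asking threshold.} You flag this step yourself as the main obstacle, and it is where the proof breaks. Single crossing in the unsubmitted layer is the whole content of the asking-threshold claim, and neither route you sketch proves it.

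The interchange does not work as described. The advantages you list for submitting at $k$ (parallel service, a pending search at the higher probability $q(k)$) favour submitting at $k$. They therefore cannot produce the contradiction $S(k)<\mathrm{Sub}(k)$.

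The fallback also fails. Both $S(k)$ and $\mathrm{Sub}(k)=\kappa+V_P(k)$ are nondecreasing, so $q\downarrow$ and $V_U\uparrow$ cannot sign the change in their difference. Moreover, monotonicity of $V_P$ bounds the increments of $\mathrm{Sub}$ from below. What you need is an upper bound: asking must not become costlier faster than searching does.

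The missing ingredient is an explicit pending value.
\begin{itemize}
\item Enumerating the undominated pending plans gives $V^P_k=\min\{w+Y_k,\;w+a\eta+(1-a)Y_k,\;W\}$ with $Y_k=c+(1-q_k)V^P_{k+1}$.
\item Downward induction from the cap shows that $Y_k$ and $V^P_k$ are nondecreasing.
\item This minimum has slope $1$, $1-a$, or $0$ in $Y$. Hence $w+Y_k-V^P_k=\max\{0,\,a(Y_k-\eta),\,w+Y_k-W\}$ is nondecreasing.
\item Do not work with $S(k)$, which drags in $V_U(k+1)$. Instead compare ``search once, submit after a failure'' with ``submit now''. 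The difference is $\psi_k=w+Y_k-V^P_k-\kappa q_k$, which is nondecreasing and satisfies $\psi_M\ge w>0$ at the cap.
\item Downward induction then shows that submitting is optimal at every $k\ge k_{\rm ask}=\min\{k:\psi_k\ge0\}$. For $k<k_{\rm ask}$, searching strictly beats submitting.
\end{itemize}
The same $\psi_k$ also turns your ordering examples into proofs. The pending quantities do not involve $\kappa$, and the fee enters only through $-\kappa q_k$. Your large-fee example additionally needs $q_k>0$ up to the pending threshold, for example $p>0$ and $0<\alpha<1$.

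\textbf{False premise about service.} Lemma~\ref{lem:reform}(ii) only says that a \emph{newly submitted} question is served when the human is available. Serving a pending question can be strictly harmful, because a likely, cheap search may resolve the task for less than $\eta$. For example, take $a=\alpha=1$, $p=0.9$, $c=0.01$, and $w=\eta=1$. A pending task with no failures pays $2$ if served, but $1+0.01+0.1\cdot2=1.21$ if it declines service and searches. This is the branch $w+Y_k$ above, and it is optimal exactly when $Y_k<\eta$.

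Your pending equivalence still holds, but the stationarity step must be argued without the premise. Suppose waiting is optimal when the human is unavailable. Then the available outcome cannot decline service: searching there would give value $w/a+Y_k>Y_k$, so searching would beat waiting, and waiting there never changes the state. Hence it serves, and the value is $W$.

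The premise happens to be harmless for the threshold itself. With $\kappa>0$, submitting and then declining service is dominated by postponing the submission. It needs that justification rather than the one you give.
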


Theorem~\ref{thm:thresholds} shows that a blocked agent needs to track only how many of its searches have failed. Appendix~\ref{app:blocked} gives the exact characterization of $k_{\rm ask}$, and Appendix~\ref{app:proof-blocked} proves the theorem. Before asking, the agent searches until $k_{\rm ask}$ failures and then submits its question. After asking, it keeps searching in periods when the human is unavailable for as long as~\eqref{eq:pendingcut} holds. Condition~\eqref{eq:pendingcut} compares the cost of one more search with its expected benefit: with probability $q(k)$, the search succeeds and avoids the answer fee $\eta$ and an expected wait of $1/a$ periods for the human, each of which costs $w$. The submission fee is avoidable before asking and sunk afterward, so the two thresholds can fall in either order. Whether the agent searches therefore depends on its current belief and on whether it has already asked.

For example, let $w=\eta=1$, $a=0.5$, $c=0.1$, $p=0.6$, and $\alpha=0.5$. Condition~\eqref{eq:pendingcut} holds for $k\le4$, so a pending agent keeps searching while the human is unavailable until five searches have failed, whatever the submission fee. The asking threshold rises with the submission fee: the agent asks at once when $\kappa\le1.5$, after one failure when $\kappa=3$, after three failures when $\kappa=8$, and after six failures when $\kappa=60$. With a small fee, the agent asks at once and keeps searching while it waits for the answer. With a very large fee, it searches beyond the point at which a pending agent would stop, because one more success still saves the fee.

\subsection{The influence of independent work}\label{subsec:overlap}

Most tasks still have work left when an agent discovers that a requirement is missing. In the procurement example, the agent may still need to compute landed costs when it finds that the acceptable lead time is unknown. We now ask how such independent work changes the decision to ask. One might expect that work makes asking less urgent, because the agent has something useful to do while it searches. The next theorem shows that the opposite can hold. To isolate the effect of work, it compares two tasks that are identical except that one has a unit of work left, and it sets $\alpha=1$, so that a single search reveals whether the requirement is in the records.

\begin{theorem}[Work overlap and human demand]\label{thm:overlap}
Consider one task with $\alpha=1$, $w,c,\kappa,\eta>0$, and $a\in(0,1]$. Define
\begin{equation}\label{eq:thetas}
 \theta_0=\frac{w+ac}{\kappa+w+a\eta},\qquad
 \theta_1=\frac{w+ac}{\kappa+(1-a)w+a\eta}.
\end{equation}
If
\begin{equation}\label{eq:overlapregion}
 \max\{c/\eta,\theta_0\}<p<\min\{1,\theta_1\},
\end{equation}
the optimal initial actions differ between two otherwise identical tasks:
\begin{enumerate}[label=(\roman*)]
 \item With zero work units, search first and submit only after failure.
 \item With one work unit, submit immediately. If the human is available, receive the answer while working. Otherwise, search.
\end{enumerate}
Adding that work unit raises the probability of a human answer by $ap$ and reduces expected searches by $a$. The interval in~\eqref{eq:overlapregion} is nonempty exactly when
\begin{equation}\label{eq:nonempty}
 c<\eta,\qquad
 0<\kappa<w\left[\frac{\eta}{c}-(1-a)\right].
\end{equation}
\end{theorem}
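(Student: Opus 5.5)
With $\alpha=1$ a single search reveals recoverability exactly: by~\eqref{eq:belief}, $q(0)=p$ and $q(k)=0$ for $k\ge1$. Part~(iii) of Lemma~\ref{lem:reform} therefore caps the search budget at one attempt, and the reachable state space is tiny. The plan is to enumerate, for both the zero-work and one-work tasks, every policy that survives the restrictions of Lemma~\ref{lem:reform}, write each expected cost in closed form, and compare them. Throughout, the delay cost $w$ is charged at the start of every incomplete period. After a failed search, the only continuation is a pending question; its cost is $\eta+w/a$ from the period it is first eligible for service, plus $\kappa$ if it has not yet been submitted.

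\textbf{Zero work.} There are three candidate plans.
\begin{itemize}
\item \emph{Search first, then submit on failure:} cost $w+c+(1-p)(\kappa+\eta+w/a)$.
\item \emph{Submit immediately:} if the human is available the question is answered at once. Otherwise the agent either searches or waits. Theorem~\ref{thm:thresholds}'s condition~\eqref{eq:pendingcut} with $k=0$ says searching is preferred when $p(\eta+w/a)>c$, which holds because $p>c/\eta$. This gives cost $\kappa+w+a\eta+(1-a)\bigl[c+(1-p)(\eta+w/a)\bigr]$.
\item \emph{Submit and never search:} cost $\kappa+\eta+w/a$. This is dominated by the previous plan because $p>c/\eta$.
\end{itemize}
Subtracting the first cost from the second and simplifying should give the inequality ``search first'' $\iff p>\theta_0$, since the $(1-a)$ terms partially cancel. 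I will verify this linear-in-$p$ comparison directly.

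\textbf{One work unit.} Here I enumerate the first-period options:
\begin{itemize}
\item \emph{Work without submitting:} this reduces to the zero-work problem after one period of delay.
\item \emph{Search without submitting:} success leaves one work period; failure leads to submission while working.
\item \emph{Submit immediately:} if the human is available, the agent works while being answered, so the task finishes in one period at cost $w+\kappa+\eta$. If the human is unavailable, the agent either searches or works.
\end{itemize}
For each option I will use Lemma~\ref{lem:reform}(i): no idle periods, and work immediately after resolution. The claimed plan is ``submit immediately, and search if the human is unavailable.'' I will compare it against the best of the other branches, using the zero-work comparison for the ``work first'' branch. The key comparison is between submit-immediately and search-first. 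The factor $(1-a)w$ in the denominator of $\theta_1$ should come from the fact that, when the question is answered in the first period, the work period overlaps the human's service and one period of delay is saved. This yields the condition ``submit immediately'' $\iff p<\theta_1$. The remaining comparisons, such as working rather than searching when the human is unavailable, should be dominated under $p>c/\eta$.

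\textbf{Demand changes and nonemptiness.} The demand statements then follow by counting. With zero work, the probability of a human answer is $1-p$ and the expected number of searches is $1$. With one unit, they are $a+(1-a)(1-p)$ and $1-a$, which gives the differences $ap$ and $a$.

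For nonemptiness of~\eqref{eq:overlapregion}, I note that $\theta_0<\theta_1$ always holds, since $(1-a)w<w$. The interval is then nonempty if and only if $c/\eta<1$ and $c/\eta<\theta_1$. Cross-multiplying the latter gives $c\kappa+c(1-a)w<w\eta$, which is the second inequality in~\eqref{eq:nonempty}; the condition $\kappa>0$ is part of the hypothesis. I also need to check that $\theta_0<1$ then holds, or that it is implied, so that the interval stays nonempty.

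The main obstacle is making the one-work enumeration exhaustive. Candidates include searching while the question is pending versus working while it is pending, and the ordering of work and search after a failure. Here I will lean on Lemma~\ref{lem:reform} to prune policies and argue each remaining branch by a direct linear-in-$p$ cost difference.
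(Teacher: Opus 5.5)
Your proposal is correct and follows the paper's proof essentially step for step. It uses the same closed-form comparisons: $S_0=w+c+(1-p)(\kappa+W)$ against the submit-now plans for zero work, giving $p>\theta_0$; and $S_1$ against $A_1=\kappa+w+a\eta+(1-a)Y_1$ for one unit, giving $p<\theta_1$, where $W=\eta+w/a$, $Y_0=c+(1-p)W$ and $Y_1=Y_0+pw$. Work-first costs $S_1+(1-p)w$, and the counting for the demand changes and the nonemptiness reduction are also the paper's, with $\theta_0<1$ following at once from $c<\eta$.

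The step you defer, search versus work when the human is unavailable, does hold under $p>c/\eta$. Note, though, that after working the blocked pending value $V^P_0$ can equal $w+Y_0$, meaning available service is declined in favor of search, because Lemma~\ref{lem:reform}(ii) forces service only for newly submitted questions. It can also equal $w+a\eta+(1-a)Y_0$. The paper checks that $V^P_0-Y_1$ equals $(1-p)w$ or $a(p\eta-c)$ in these two regimes, and both are positive.
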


Theorem~\ref{thm:overlap} shows that giving an agent useful independent work can make it ask the human sooner and therefore increase human demand. Within the interval~\eqref{eq:overlapregion}, adding one work unit switches the optimal first move from searching to asking. The two cutoffs have direct interpretations. Above $\theta_0$, a task without work prefers to search before asking. Below $\theta_1$, a task with one work unit prefers to ask immediately. The lower bound $p>c/\eta$ ensures that searching remains worth its cost even when an answer is on its way.

The intuition is that independent work changes the cost of waiting. Without work, waiting for the human wastes the agent's time, so an agent with a promising search first tries to resolve the requirement itself. With work, waiting is productive: an answer received now overlaps work that must be done anyway, whereas a search displaces that work. In the procurement example, an agent that still has to compute landed costs should ask the manager about the lead time right away and do the computation while it waits, whereas an agent with nothing left to do should check earlier orders first. Numerically, for $w=\kappa=\eta=1$, $c=0.1$, and $a=0.5$, the interval is $0.42<p<0.525$, and at $p=0.5$ the probability of a human answer rises from $0.5$ to $0.75$. Figure~\ref{fig:regions} shows how the region varies with $a$. The proof is in Appendix~\ref{app:proof-overlap}.

\subsection{The influence of question withdrawal}\label{subsec:withdraw}

In our model, an agent may continue searching after asking and withdraw its question if a search succeeds. Some agent interfaces can disable this option. In those cases, asking acts as a handoff: once the agent asks, it stops searching for the requirement and waits for the answer, although it may continue other work. We now ask how much the option to withdraw is worth and how it affects demand on the human. To isolate the effect of withdrawal, consider the \emph{no-withdrawal model}, which differs from ours only in that an agent cannot search after submitting a question. Withdrawal adds an option and therefore cannot increase the optimal cost. Because some questions may be withdrawn before service, one might also expect withdrawal to reduce demand on the human. The next result shows that it can have the opposite effect.

\begin{proposition}[Withdrawable questions]\label{prop:withdraw}
Consider one task with $\alpha=1$, one work unit, $w,c,\kappa,\eta>0$, and $a\in(0,1)$. With $\theta_1$ as in~\eqref{eq:thetas}, define
\begin{equation}\label{eq:theta_nw}
 \theta_{\rm nw}=\frac{a(w+c)}{a\kappa+a\eta+(1-a)w}.
\end{equation}
If
\begin{equation}\label{eq:withdrawregion}
 \max\{c/\eta,\theta_{\rm nw}\}<p<\min\{1,\theta_1\},
\end{equation}
the optimal initial actions differ between the two models:
\begin{enumerate}[label=(\roman*)]
 \item Without withdrawal, search first and submit only after failure.
 \item With withdrawal, submit immediately. If the human is available, receive the answer while working. Otherwise, search, and withdraw the question if the search succeeds.
\end{enumerate}
Allowing withdrawal lowers the optimal expected cost by
\begin{equation}\label{eq:withdrawsaving}
 w+ac-p\bigl[\kappa+(1-a)w+a\eta\bigr]>0
\end{equation}
and the expected completion time by $1-(1-a)p$. It raises the probability of a human answer by $ap$ and the probability of submission by $p$, and it reduces expected searches by $a$. The interval in~\eqref{eq:withdrawregion} is nonempty exactly when
\begin{equation}\label{eq:withdrawnonempty}
 c<\eta,\qquad
 \kappa<w\left[\frac{\eta}{c}-(1-a)\right],\qquad
 (2a-1)w<a(\kappa+\eta-c).
\end{equation}
\end{proposition}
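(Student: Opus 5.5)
The plan is to solve both single-task models in closed form and compare them. With $\alpha=1$ the belief is trivial: $q(0)=p$ and $q(k)=0$ for $k\ge1$. So after one failed search the requirement is known to need the human, and Lemma~\ref{lem:reform}(iii) forbids further search. The reachable state space is then tiny: $x\in\{0,1\}$, $k\in\{0,1\}$, $z\in\{U,P,R\}$. Lemma~\ref{lem:reform}(i)--(ii) also fix most decisions. Work is done whenever the agent does not search, a new question is served at once if the human is available, and a blocked pending task with $k=1$ just waits. Each model's first-period decision therefore reduces to a short list of candidate plans: \emph{search first}, \emph{submit first} (with or without searching in an unavailable period), and \emph{work first, then decide}. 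I would compute each continuation value by backward induction, starting from the terminal states. A blocked pending task costs $w/a+\eta$; a blocked unsubmitted task with $k=1$ costs $\kappa+\eta+w/a$; a task with one work unit and a resolved requirement costs $w$.

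For the withdrawal model I would evaluate the plan in (ii). Its cost is
\[
A=\kappa+w+a\eta+(1-a)\Bigl[c+pw+(1-p)\Bigl(w+\eta+\tfrac{(1-a)w}{a}\Bigr)\Bigr].
\]
Here the last term comes from the state $(1,1,P)$: the agent works next period while the human answers with probability $a$, and otherwise it waits blocked. For the no-withdrawal model I would evaluate search-first,
\[
B=c+w+pw+(1-p)\bigl(\kappa+\eta+w/a\bigr),
\]
and also submit-first with pure work and work-first. Next I would check that, on the region~\eqref{eq:withdrawregion}, the plan in (i) is optimal in the no-withdrawal model and the plan in (ii) is optimal with withdrawal. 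After simplification, each pairwise comparison should be linear in $p$. I expect the comparisons to sort out as follows.
\begin{itemize}
\item In the no-withdrawal model, search-first beating submit-first should reduce to $p>\theta_{\rm nw}$.
\item With withdrawal, submit-first beating search-first (whose value is the same $B$, since withdrawal is irrelevant before submission) should reduce to $p<\theta_1$, exactly as in Theorem~\ref{thm:overlap}.
\item The condition $p>c/\eta$ should guarantee that searching in an unavailable period after submitting beats pure work-and-wait. It should also rule out work-first plans.
\end{itemize}

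The saving~\eqref{eq:withdrawsaving} is then $B-A$, and I would verify its expanded form algebraically; its positivity is exactly $p<\theta_1$. The remaining comparative statics come from simple counts. The human-answer probability is $a+(1-a)(1-p)$ versus $1-p$, a difference of $ap$. The submission probability is $1$ versus $1-p$. Expected searches are $1-a$ versus $1$. The completion-time difference follows from the same path enumeration. Nonemptiness~\eqref{eq:withdrawnonempty} is the statement $\max\{c/\eta,\theta_{\rm nw}\}<\min\{1,\theta_1\}$, split into four inequalities:
\begin{itemize}
\item $c/\eta<1$ gives $c<\eta$;
\item $c/\eta<\theta_1$ rearranges to $\kappa<w[\eta/c-(1-a)]$;
\item $\theta_{\rm nw}<\theta_1$ should simplify, after clearing denominators, to $(2a-1)w<a(\kappa+\eta-c)$;
\item $\theta_{\rm nw}<1$ should be implied by the others, or be automatic given $a<1$.
\end{itemize}

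The main obstacle I expect is the exhaustiveness of the candidate plan list, not the algebra. In particular, I would need to rule out work-first plans in which the agent postpones the search-versus-ask decision to the blocked state $x=0$. I would handle these by invoking Theorem~\ref{thm:overlap}(i)/Theorem~\ref{thm:thresholds} for the resulting blocked task and showing the plan is dominated whenever $p>c/\eta$. I would also need to check carefully that the strict inequalities make the stated actions uniquely optimal.
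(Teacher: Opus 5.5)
Your overall route is the paper's: enumerate the first-period options on the $\alpha=1$ state space. Your $B$ is the paper's $S_1=w+c+pw+(1-p)(\kappa+W)$, and your $A$ is $A_1=\kappa+w+a\eta+(1-a)Y_1$, since $w+\eta+(1-a)w/a=\eta+w/a=W$. The comparisons $S_1<\kappa+W\iff p>\theta_{\rm nw}$ and $S_1>A_1\iff p<\theta_1$ are right. However, two steps fail as written.

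\textbf{Nonemptiness.} You have swapped two of the four inequalities.
\begin{itemize}
\item $\theta_{\rm nw}<1$ is exactly $a(w+c)<a\kappa+a\eta+(1-a)w$, i.e.\ $(2a-1)w<a(\kappa+\eta-c)$. This is the third condition in \eqref{eq:withdrawnonempty}. It is neither automatic for $a<1$ nor implied by the others. Take $a=0.9$, $w=10$, $\kappa=\eta=1$, $c=0.1$: the first two conditions hold and $\theta_{\rm nw}\approx3.25<\theta_1\approx3.48$, yet $\theta_{\rm nw}>1$, so the region is empty.
\item $\theta_{\rm nw}<\theta_1$ does not simplify to that condition. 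With $N_1=w+ac$ and $D_1=\kappa+(1-a)w+a\eta$, you have $\theta_{\rm nw}=[N_1-(1-a)w]/[D_1-(1-a)\kappa]$. Cross-multiplying for $a<1$ gives $wD_1>\kappa N_1$, i.e.\ $(1-a)w^2+a(\eta w-\kappa c)>0$.
\item This last inequality is the redundant one. The second condition gives $\kappa c<w[\eta-(1-a)c]$, so its left side exceeds $(1-a)w^2+a(1-a)wc>0$.
\end{itemize}

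\textbf{Work-first with withdrawal.} You cannot use Theorem~\ref{thm:overlap}(i) to value the blocked state $(0,0,U)$. That result needs $p>\theta_0$, which \eqref{eq:withdrawregion} does not imply, because $\theta_{\rm nw}$ can lie below $\theta_0$. For example, take $w=\kappa=\eta=1$, $c=0.1$, $a=0.5$, $p=0.4$. This point is inside the region and below $\theta_0=0.42$, and there the blocked task submits at once: $\kappa+V^P_0=3.45<S_0=3.5$.

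So work-first costs $w+\min\{S_0,\kappa+V^P_0\}$, and you must beat both branches.
\begin{itemize}
\item The first branch is $w+S_0=S_1+(1-p)w>A_1$, which uses $p<\theta_1$.
\item For the second branch, $w+\kappa+V^P_0-A_1=V^P_0-a\eta-(1-a)Y_1$ must be positive for each candidate $w+Y_0$, $w+a\eta+(1-a)Y_0$, $W$ for $V^P_0$. The first candidate needs $Y_1>\eta$, which follows from $p<\theta_1$. The third gives $w+(1-a)(p\eta-c)+(1-a)^2pw/a$, which needs $p\eta>c$.
\end{itemize}
Thus $p>c/\eta$ alone does not exclude work-first; you also need $p<\theta_1$.

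In the no-withdrawal model, by contrast, no condition is needed. There $(0,0,U)$ has value $\min\{\kappa+W,S_0\}$, and work-first is dominated because $w+S_0=S_1+(1-p)w>S_1$ and $w+\kappa+W>\kappa+W$.
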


Within the interval~\eqref{eq:withdrawregion}, withdrawal switches the optimal first move from searching to asking, lowering total cost while raising human demand. The new cutoff $\theta_{\rm nw}$ plays the role of $\theta_0$: above it, a task that cannot withdraw its question prefers to search before asking. Unlike Theorem~\ref{thm:overlap}, the proposition requires $a<1$, because at $a=1$ the cutoffs $\theta_{\rm nw}$ and $\theta_1$ coincide and the interval is empty. Appendix~\ref{app:proof-withdraw} gives the proof.

Intuitively, without withdrawal, asking forfeits the chance to resolve the requirement autonomously, so an agent with a promising search postpones its question and delays its independent work while searching. Withdrawal removes this forfeit and makes early questions cheaper. For the parameters of the earlier illustration, the interval is $0.367<p<0.525$. At $p=0.5$, withdrawal lowers the optimal cost from $3.60$ to $3.55$: faster completion saves $0.75$ in delay costs and fewer searches save $0.05$, which outweighs the additional $0.50$ in submission fees and $0.25$ in answer fees, and the probability that the human answers rises from $0.50$ to $0.75$. The one-work-unit policy in Theorem~\ref{thm:overlap} relies on withdrawal in exactly this way, and Figure~\ref{fig:regions} shows that the two regions overlap substantially.

\begin{figure}[t]
\centering
\begin{tikzpicture}
\begin{axis}[
  width=\columnwidth, height=0.64\columnwidth,
  xmin=0, xmax=1, ymin=0, ymax=0.65,
  xlabel={Human availability $a$}, ylabel={Recoverability $p$},
  xtick={0,0.2,0.4,0.6,0.8,1}, ytick={0,0.2,0.4,0.6},
  tick label style={font=\scriptsize, text=pcpmuted}, label style={font=\footnotesize, text=pcpink},
  axis line style={draw=pcpgrid}, tick style={draw=pcpgrid},
  axis lines=left, clip=true,
  legend style={font=\scriptsize, draw=none, fill=none,
                at={(0.5,-0.24)}, anchor=north, cells={anchor=west}, row sep=1pt},
  legend image code/.code={\fill[#1] (0cm,-0.09cm) rectangle (0.32cm,0.11cm);},
]
 \addplot[name path=thone, draw=pcpink, line width=0.9pt, domain=0:1, samples=101, forget plot] {(1+0.1*x)/2};
 \addplot[name path=thzero, draw=pcpblue, line width=0.9pt, densely dashed, domain=0:1, samples=101, forget plot] {(1+0.1*x)/(2+x)};
 \addplot[name path=thnw, draw=pcporange, line width=0.9pt, densely dotted, domain=0:1, samples=201, forget plot] {max(0.1, 1.1*x/(1+x))};
 \addplot[pcpblue, fill opacity=0.28, draw=none, area legend] fill between[of=thone and thzero];
 \addlegendentry{Work changes the decision (Thm.~\ref{thm:overlap})}
 \addplot[pattern=north east lines, pattern color=pcporange, draw=none, area legend] fill between[of=thone and thnw];
 \addlegendentry{Withdrawal changes the decision (Prop.~\ref{prop:withdraw})}
 \node[font=\scriptsize, text=pcpink, anchor=south] at (axis cs:0.9,0.548) {$\theta_1$};
 \node[font=\scriptsize, text=pcpblue, anchor=north] at (axis cs:0.9,0.37) {$\theta_0$};
 \node[font=\scriptsize, text=pcporange, anchor=west] at (axis cs:0.24,0.17) {$\max\{c/\eta,\theta_{\rm nw}\}$};
 \fill[pcpink] (axis cs:0.5,0.5) circle (1.6pt);
 \node[font=\scriptsize, text=pcpink, anchor=south] at (axis cs:0.5,0.532) {$a=p=0.5$};
\end{axis}
\end{tikzpicture}
\caption{Regions of Theorem~\ref{thm:overlap} and Proposition~\ref{prop:withdraw} for $w=\kappa=\eta=1$ and $c=0.1$. In both, an agent with one work unit and withdrawable questions asks immediately, whereas the same agent without work (above $\theta_0$) or without withdrawal (above $\theta_{\rm nw}$) would search first. Both regions also require $p>c/\eta=0.1$, which binds only for the withdrawal region when $a<0.1$. The withdrawal region closes at $a=1$, where $\theta_{\rm nw}=\theta_1$.}
\label{fig:regions}
\end{figure}
Together, the two results offer a management insight: we should not judge agents or interface designs solely by how rarely they ask questions. Useful independent work can make the optimal agent ask sooner and the human answer more often, and the option to withdraw a question can do so while lowering total cost. Both mechanisms persist beyond the single search attempt and single work unit assumed in the two results. When search can be repeated, one more unit of work raises the optimal expected number of human answers in at least 88\% of the settings we examine, and withdrawal lowers the optimal cost in every setting while raising the number of answers in 19\% to 40\% of them; both effects also appear when four tasks share one human (more results can be found in Appendix~\ref{app:num-mech}). Theorem~\ref{thm:overlap} and Proposition~\ref{prop:withdraw} describe each agent's incentives in isolation. In a portfolio, however, every question can delay other agents waiting for the human. The next section shows how to account for this interaction.

\section{Coordinating a portfolio: the Pairwise Coordination Policy}\label{sec:algorithm}

We now consider portfolios of tasks that share one human. Each agent still faces the tradeoffs of Section~\ref{sec:structure}, but its choices also affect the other agents: when the human answers one question, service for other waiting questions is delayed. Exact dynamic programming captures these interactions but becomes intractable as the portfolio grows (Section~\ref{sec:setup}). We therefore seek a computationally efficient policy that accounts for this congestion.

\subsection{Task-local priorities can fail}\label{subsec:noindex}

A simple way to coordinate the agents is to assign each waiting question a priority score based only on its task, such as its weight or the probability that a search will find the requirement, and have the human answer the question with the highest score. Index policies of this kind are optimal (or numerically perform well) in many scheduling and bandit problems, including restless-bandit models of shared assistance \citep{dahiyaEtAl2022}, and they are easy to implement. However, the next proposition and example show that no such task-local score is optimal in our setting: which question should be answered first depends on the other tasks in the portfolio.

\begin{proposition}[No optimal task-local priority]\label{prop:noindex}
There is no score $r$, computed from a task's own parameters and current state together with the common parameters $a$ and $\eta$, such that in every portfolio each task with the largest score is an optimal choice for the next human service.
\end{proposition}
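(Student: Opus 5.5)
The plan is a counterexample. I will build two portfolios that both contain the same two submitted tasks $A$ and $B$, with identical parameters and states in each. In the first portfolio $B$ is the unique optimal next service. In the second, adding a third task $C$ makes $A$ the unique optimal next service.

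\textbf{Why this suffices.} A score $r$ depends only on a task's own parameters and state together with $a$ and $\eta$. So $r(A)$ and $r(B)$ are the same numbers in both portfolios.
\begin{itemize}
\item In the first portfolio, $B$ must attain the maximum score. If $r(A)\ge r(B)$ there, then $A$ would be among the tasks with the largest score yet not optimal. Hence $r(B)>r(A)$.
\item In the second portfolio the same reasoning gives $r(A)>r(B)$. It also requires that $C$ is either ineligible for service or strictly suboptimal, so that the maximiser is forced to be $A$ or $B$.
\end{itemize}
These two strict inequalities contradict each other.

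\textbf{Construction.} I take $a=1$, which makes the dynamics deterministic apart from search outcomes. Take $p_A=p_B=0$, so both questions must be answered by the human; both are pending and the timing is forced.
\begin{itemize}
\item $A$ has $x_A=2$ work units and a large weight $w_A$. It completes at time $\max\{2,\text{answer time}\}$, so it has one period of slack.
\item $B$ is blocked ($x_B=0$) with a smaller weight $w_B$.
\end{itemize}
In the pair $\{A,B\}$ the two orders cost:
\begin{itemize}
\item serve $B$ then $A$: $w_B+2w_A$;
\item serve $A$ then $B$: $2w_A+2w_B$.
\end{itemize}
So $B$ is strictly optimal. This uses Lemma~\ref{lem:reform}(i): agents work whenever they do not search, so $A$'s completion time is determined by its work and its answer time.

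For the second portfolio I add a task $C$ with $z_C=U$, $x_C=0$, $\alpha_C=1$, a moderate $p_C$, a very large weight $w_C$, and a large submission fee $\kappa_C$. The intended behaviour of $C$ is:
\begin{itemize}
\item $C$ optimally searches once in period 1 rather than asking, by the blocked-task threshold logic of Theorem~\ref{thm:thresholds} with $\kappa_C$ large.
\item If that search fails (probability $1-p_C$), $C$ submits in period 2 and, given $w_C$ huge, must be served in period 2.
\end{itemize}
The effect on the first service decision is as follows.
\begin{itemize}
\item If $B$ is served in period 1, then with probability $1-p_C$ task $A$ is pushed to period 3.
\item If $A$ is served first, the conflict disappears and only $B$ waits, at cost $w_B$ per extra period.
\end{itemize}
Comparing expected costs, $A$ is strictly better first whenever $(1-p_C)\,w_A>c\cdot w_B$ for an explicit small constant $c$; I will compute it exactly.

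\textbf{Main obstacle.} The hard part is verifying optimality in the three-task instance against all period plans, not just the two service orders. In particular I must exclude the following alternatives:
\begin{itemize}
\item $C$ submitting in period 1 and being served immediately, which would make $C$ the optimal choice and break the argument;
\item $C$ searching more than once;
\item idle or search actions by $A$ and $B$.
\end{itemize}
I would first use Lemma~\ref{lem:reform} to restrict to its reduced policy class. I would then pick explicit numbers (e.g.\ $w_B=1$, $w_A=3$, $w_C=100$, $p_C=1/2$, $c_C$ small, $\kappa_C$ large) and enumerate the finite reduced dynamic program directly. I would choose $\kappa_C$ between the threshold that makes an immediate question unattractive and the one at which $C$ would abandon asking after a failure.

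If the unsubmitted-$C$ route proves delicate, the fallback is to make $C$ a pending task with work remaining, so that it is not optimal to serve immediately but will compete for the human's period-2 slot.
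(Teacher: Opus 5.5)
Your proof has the same skeleton as the paper's: two portfolios share tasks $A$ and $B$ in identical states, the unique optimal first service is reversed between them, and this forces both $r(A)>r(B)$ and $r(B)>r(A)$. Your witness is different, though, and it works once the numbers are fixed.

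\textbf{How the witnesses differ.} The paper uses three blocked, unsubmitted tasks, and its reversal is driven by search. With the never-recoverable task $C$ present, it pays to serve the unlikely-to-be-found $B$ while the urgent $A$ searches, because $A$'s likely success frees the human for $C$. Verifying this needs intermediate values and a table of first-period plans. In your instance, $A$ and $B$ cannot be found by search at all, and all randomness sits in one search by $C$. The reversal comes from work slack instead. In the pair, $A$'s two work units let it absorb a period-2 answer; in the triple, $C$'s possible claim on period~2 removes that slack. This reduces verification to a few closed-form comparisons. It also shows that priorities depend on the portfolio even for tasks with no useful search. It illustrates less directly the paper's point that search relieves congestion.

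\textbf{A correction to your heuristic comparison.} Suppose $A$ is served first and $C$'s search fails. Then $C$ takes period~2 and $B$ is pushed to period~3, so $B$ waits two extra periods, not one. Let $a=1$ and $w_C>w_A>w_B$. Write $J_X$ for the optimal cost given first service $X$, where $C$ searches in period~1 unless it is the one served. Exact computation in the three-task portfolio gives
\[
J_B-J_A=(1-p_C)(w_A-w_B)-w_B,\qquad
J_C-J_A=p_C(\kappa_C+\eta+w_B)-c_C-(1-p_C)w_C,\qquad
J_\emptyset-J_A=w_B+(1-p_C)w_A .
\]
In the pair, $J_A-J_B=w_B$.

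\textbf{Consequences for your choice of parameters.}
\begin{itemize}
\item The condition you need is $(1-p_C)w_A>(2-p_C)w_B$, not a small multiple of $w_B$. Your suggested $w_A=3$, $w_B=1$, $p_C=1/2$ gives an exact tie, so it would not establish the strict reversal.
\item Since $\alpha_C=1$, a failed $C$ has no alternative to asking. So $\kappa_C$ needs only the lower bound $p_C(\kappa_C+\eta+w_B)>c_C+(1-p_C)w_C$; there is no upper threshold.
\end{itemize}

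\textbf{Parameters that work.} Take $w_A=10$, $w_B=1$, $w_C=100$, $p_C=1/2$, $c_C=0.1$, $\eta=0.5$, $\kappa_C=200$. In the triple the first-service costs are $273.85$ for $A$, $277.35$ for $B$, $324.5$ for $C$, and $279.85$ for serving no one. In the pair they are $22$ for $B$ versus $23$ for $A$. The remaining plans are dominated: $C$ waiting, $C$ submitting without being served, and useless searches by $A$ or $B$.
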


\begin{example}[Portfolio-dependent priority]\label{ex:priority}
Let $a=1$, $x_i=0$, $\alpha_i=1$, $c_i=0.01$, $\kappa_i=0.05$, and $\eta=0.05$. Tasks $A,B,C$ have $(w_i,p_i)=(4,0.75),(1,0.10),(0.5,0)$. Optimizing all remaining decisions conditional on first human service gives
\begin{center}
\begin{tabular}{lrrrr}
\toprule
Portfolio & Serve $A$ & Serve $B$ & Serve $C$ & Serve none\\
\midrule
$A,B$ & $6.100$ & $6.135$ & --- & $7.260$\\
$A,B,C$ & $7.650$ & $7.360$ & $7.860$ & $8.935$\\
\bottomrule
\end{tabular}
\end{center}
\end{example}

Proposition~\ref{prop:noindex} implies that the human's priorities must depend on the portfolio as a whole, and Example~\ref{ex:priority} shows it. Tasks $A$ and $B$ have the same parameters and states in both portfolios. In $\{A,B\}$, serving $A$ is uniquely optimal, which requires $r(A)>r(B)$. In $\{A,B,C\}$, serving $B$ is uniquely optimal, which requires $r(B)>r(A)$. The reversal has a simple explanation. Task $A$ is urgent and likely to be found in the records, task $B$ is unlikely to be found, and task $C$ can only be resolved by the human. In $\{A,B\}$, the urgent task is served first while $B$ searches, because $B$'s likely wait for a later answer costs little. Once $C$ is present, serving $A$ first would leave both $B$ and $C$ waiting for the human with high probability. Serving $B$ first while $A$ searches is better, because $A$'s search succeeds with probability $0.75$ and then leaves the human free for $C$. Appendix~\ref{app:proof-noindex} gives the calculation of the example. Although no task-local priority is optimal in general, approximate indices, including priorities that respond to a system-wide attention price, can still be useful, as numerical experiments in Appendix~\ref{app:num-mech} illustrate. A good policy should nonetheless account for how tasks interact through the human. The next subsection develops a pairwise approach.

\subsection{Pairwise prices of shared attention}\label{subsec:pairwise}

To account for interactions without solving the full dynamic program, we estimate the portfolio's future cost using smaller problems that can be solved exactly: each task on its own and each pair of tasks sharing one human. Let $V_i(s_i)$ be the optimal cost of task $i$ when it has its own human, and let $V_{ij}(s_i,s_j)$ be the optimal cost of tasks $i$ and $j$ when they share one human. Both problems use the same availability and cost parameters and the search caps from Lemma~\ref{lem:reform}. Appendix~\ref{app:implementation} describes how we compute these values using the reduced dynamic program~\eqref{eq:app-dp}. Define
\begin{align}
 \Delta_{ij}(s_i,s_j)&=V_{ij}(s_i,s_j)-V_i(s_i)-V_j(s_j),\label{eq:delta}\\
 \widehat V_{\rm PCP}(s)&=\sum_iV_i(s_i)+\sum_{i<j}\Delta_{ij}(s_i,s_j).\label{eq:critic}
\end{align}
We call $\widehat V_{\rm PCP}$ the \emph{pairwise approximation} and $\Delta_{ij}$ the \emph{pairwise price of shared attention}: the additional expected cost that tasks $i$ and $j$ incur when they share one human, compared with each having a human of its own. The price is nonnegative, because separate human capacity relaxes the pair's shared-capacity constraint (Lemma~\ref{app:lem-projection}). Resolved tasks contribute only their deterministic remaining-work cost and have zero prices with other tasks. Setting every price to zero gives the \emph{singleton approximation} $\sum_iV_i(s_i)$, which captures each task's own tradeoffs but ignores congestion. Before using the approximation, we identify cases in which it gives the optimal value.

\begin{proposition}[Optimal cases]\label{prop:pair_exact}
The pairwise approximation~\eqref{eq:critic} gives the optimal value when at most two requirements remain unresolved. It also gives the optimal value for any number of unresolved tasks when no remaining search can succeed and $x_i=0$ for every unresolved task. In the latter case,
\begin{equation}\label{eq:humanboundary}
 \Delta_{ij}=\frac{\min(w_i,w_j)}a,
\end{equation}
and descending-weight human service is optimal.
\end{proposition}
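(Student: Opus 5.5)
The plan treats the two claims separately, since each reduces to a decomposition of the optimal value.

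\emph{First claim: at most two unresolved requirements.} Resolved tasks no longer interact with anyone. By Lemma~\ref{lem:reform}(i), each resolved task just works off its remaining units, at deterministic cost $V_k(s_k)=w_kx_k$. It never uses the human, and its delay cost does not depend on what the other tasks do. So the optimal value splits as $V^*(s)=\sum_{k\,\text{resolved}}V_k(s_k)+V^*_{\rm unres}$, where $V^*_{\rm unres}$ is the optimal value of the subproblem made up of the unresolved tasks only. One has to check that resolved tasks place no constraint on the unresolved ones. This holds because agents are separate, the human serves only questions, and resolved tasks submit none.

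The subproblem then has three cases. If it is empty, the approximation is trivially exact. If it is a single task $i$, the subproblem is exactly the singleton problem with its own human, so its value is $V_i(s_i)$. If it is a pair $i,j$, its value is $V_{ij}(s_i,s_j)$. Because the paper notes that prices involving a resolved task vanish, the sum $\sum_iV_i+\sum_{i<j}\Delta_{ij}$ collapses to the resolved work costs plus either $V_i$ or $V_i+V_j+\Delta_{ij}=V_{ij}$. Either way it equals $V^*(s)$.

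\emph{Second claim: no search can succeed and every unresolved task is blocked.} Here $q_i(k_i)=0$ for every unresolved task, so searching is pure cost and never happens. I would also argue that every unresolved task should have a question submitted. If it is not yet submitted, submitting now and serving it at once weakly dominates deferring, by the same argument as Lemma~\ref{lem:reform}(ii). The submission fees $\kappa_i$ and answer fees $\eta$ are then sunk or unavoidable, namely $n$ answers in total, so they cancel in the decomposition, each appearing once in $V_i$ and once in $V_{ij}$.

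What remains is single-server scheduling with unit jobs and geometric availability. Service opportunities arrive at the available periods $\tau_1<\tau_2<\cdots$, with $\E[\tau_m]=m/a$. A task served at the $m$-th opportunity has completion time $\tau_m$, so the expected delay cost is $\sum_i w_i\,m(i)/a$. By the rearrangement inequality this is minimized by serving in descending weight order, an interchange argument in the spirit of Smith's rule. Any state-dependent policy is still a random permutation, so the bound holds pathwise in expectation.

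With the weights sorted as $w_{(1)}\ge\cdots\ge w_{(n)}$, the optimal delay cost is $\frac1a\sum_m m\,w_{(m)}=\frac1a\sum_i w_i+\frac1a\sum_{i<j}\min(w_i,w_j)$. The first sum comes from each task's own service, and the second counts each pair once, charged the lighter weight for waiting one extra opportunity. The singleton value is $V_i=\kappa_i'+\eta+w_i/a$, where $\kappa_i'$ is the fee if not yet submitted. The pair value is $V_{ij}=V_i+V_j+\min(w_i,w_j)/a$. This gives \eqref{eq:humanboundary}, and summing reproduces $V^*$ exactly.

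\emph{Main obstacle.} The main difficulty is the second claim, specifically justifying that the optimal policy in the no-search regime is a pure service ordering. The argument must handle fee timing and the per-period charging of delay, where charging at the start of a period gives completion time $\tau_m$ counted consistently. It must also show that submitting all questions early is optimal, so that fees decouple. I would first prove the submission claim by an exchange argument: moving a submission earlier never delays service and incurs the same fee. Then I would apply the ordering interchange conditional on the availability sequence.
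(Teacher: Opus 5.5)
Your proposal is correct and follows the paper's proof essentially step for step: resolved tasks split off additively, so the approximation collapses to $C_R+V_i$ or $C_R+V_{ij}$. In the blocked, human-only case, a pathwise rearrangement bound against the available periods $\tau_\ell$, with $\E\tau_\ell=\ell/a$ and attained by descending-weight service, gives $V^*$, and applying the same bound to one and two tasks gives $\Delta_{ij}=\min(w_i,w_j)/a$. You should state the lower bound as ``the $\ell$th service occurs no earlier than $\tau_\ell$'' so that idling policies are covered, and you can drop the separate exchange argument for early submission, since each unresolved task pays $\eta+\kappa_i\ind{z_i=U}$ exactly once under any policy and can be submitted and served in the same period.
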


Proposition~\ref{prop:pair_exact} shows that the pairwise approximation loses nothing in two situations. The first is a consistency property of the construction: when at most two tasks still need their requirements, the approximation contains the exact value of that pair. The second arises when every remaining requirement must come from the human and no work remains. Serving in descending weight is then Smith's rule for jobs of equal length \citep{smith1956}, and the price of attention takes a transparent form. Appendix~\ref{app:proof-pair} gives the proof, and Corollary~\ref{app:cor-interp} extends the second case to tasks with at most one unit of work.

When neither task can avoid human service and neither has work left, the task served second waits an extra $1/a$ periods in expectation. Thus, the pairwise price equals the lower delay weight multiplied by this expected wait. For example, if the manager is available with probability $0.5$ and two blocked tasks have delay weights $4$ and $1$, the task with weight $1$ is served second and waits two extra periods on average, so $\Delta_{ij}=2$. For general portfolios, $\widehat V_{\rm PCP}$ need not be either a lower or an upper bound on $V^*$. Nevertheless, Section~\ref{sec:cert} establishes guarantees for PCP based on its action-selection rule.

\subsection{Pairwise Coordination Policy}\label{subsec:policy}

Pairwise Coordination Policy (PCP) uses the pairwise approximation as an estimate of the cost to go. In each period, it chooses the plan that minimizes the immediate cost plus the expected approximate cost of the next state. Formally, let $g_d(s)$ be the expected immediate cost of a period plan $d$ (Section~\ref{sec:setup}) and $P_d$ its transition law. Let $\mathcal D_+(s)$ contain the plans that satisfy the reductions of Lemma~\ref{lem:reform} and change the state with positive probability. Definition~\ref{app:def-reduced} in Appendix~\ref{app:dp} defines this set formally, and Proposition~\ref{app:prop-dp} shows that restricting attention to it preserves the optimum. PCP selects
\begin{equation}\label{eq:greedy}
 d_{\rm PCP}(s)\in\arg\min_{d\in\mathcal D_+(s)}
 \left\{g_d(s)+\sum_{s'}P_d(s,s')\widehat V_{\rm PCP}(s')\right\},
\end{equation}
executes the selected plan, observes the next state, and repeats. By Proposition~\ref{prop:pair_exact}, PCP is optimal from any state on the two exact boundaries when the minimization in~\eqref{eq:greedy} is exact. The \emph{singleton policy} makes the same choice with the singleton approximation in place of $\widehat V_{\rm PCP}$. Comparing the two policies isolates the value of pricing shared attention. Each answer occupies the human for a period and delays the other tasks waiting for answers. The pairwise prices charge this delay to the decision that causes it, whereas the singleton policy ignores it and therefore tends to ask too often and search too little. Section~\ref{sec:numerics} shows that this congestion effect is the main source of PCP's advantage. Algorithm~\ref{alg:pcp} summarizes PCP.

When multiple plans attain the minimum in~\eqref{eq:greedy}, PCP breaks ties lexicographically: it first prefers plans that serve a question, then plans that submit one, and finally plans that use fewer searches. The discussion following Theorem~\ref{thm:pcp_guarantee} explains the rationale for this rule, and Section~\ref{subsec:computation} describes how PCP is computed.

\begin{algorithm}[!htb]
\caption{Pairwise Coordination Policy (PCP)}\label{alg:pcp}
\begin{algorithmic}[1]
\Require initial state $s$, parameters $a$, $\eta$, and $(w_i,p_i,\alpha_i,c_i,\kappa_i)_{i=1}^n$, caps $(M_i)_{i=1}^n$, and a tie-breaking rule
\Ensure a period plan $d_{\rm PCP}(s)$ in each period until all tasks complete
\While{some task is incomplete}
 \State obtain $V_i$ and $V_{ij}$ for unresolved tasks and pairs
 \State $d\gets$ a minimizer of~\eqref{eq:greedy} over $\mathcal D_+(s)$
 \State submit $d$'s question, if any, and observe availability
 \State execute $d$'s service and agent actions
 \State observe the outcomes and update $s$
\EndWhile
\end{algorithmic}
\end{algorithm}

\subsection{Performance guarantees}\label{sec:cert}

Because PCP relies on an approximation, we want to know how far its cost can be above the optimum. We consider two cases. When no remaining search can succeed, every unresolved requirement must be resolved by the human, and the problem has enough structure for a worst-case bound. When search remains useful, we give a bound that can be checked for each instance. For a policy $\pi$, let $J^\pi(s)$ denote its expected total cost from state $s$, so that $V^*(s)=\inf_\pi J^\pi(s)$. Let $J^{\rm PCP}(s)$ denote PCP's expected total cost from state $s$. Set $H=\widehat V_{\rm PCP}$, let $\omega(s)=\sum_{i\text{ incomplete}}w_i$ denote the delay cost in the current period, and define the one-period Bellman defect
\begin{equation}\label{eq:pcp_residual}
 \rho(s)=\min_{d\in\mathcal D_+(s)}\bigl\{g_d(s)+P_dH(s)\bigr\}-H(s),
\end{equation}
where $P_dH(s)=\sum_{s'}P_d(s,s')H(s')$, with $H=\rho=0$ at terminal states. The next theorem bounds PCP's cost relative to the optimum, first when every unresolved requirement must be resolved by the human and then in the full model.

\begin{theorem}[PCP performance guarantees]\label{thm:pcp_guarantee}
Use exact singleton and pair values, the caps of Lemma~\ref{lem:reform}, and exact minimization in~\eqref{eq:greedy}.
\begin{enumerate}[label=(\roman*)]
 \item \textbf{Human-resolution regime.} Suppose no remaining search can succeed. Let $\mathcal A$ contain the unresolved tasks and define
 \begin{align*}
 K&=\sum_{i:z_i=R}w_ix_i+
       \sum_{i\in\mathcal A}(\eta+\kappa_i\ind{z_i=U}),\\
 B&=\sum_{i\in\mathcal A}w_i\max\{x_i,1\},\qquad
 w_{\mathcal A}=\sum_{i\in\mathcal A}w_i,\\
 Q&=\frac1a\left(w_{\mathcal A}+\sum_{\substack{i<j\\i,j\in\mathcal A}}\min(w_i,w_j)\right).
 \end{align*}
 For every tie-breaking rule,
 \begin{equation}\label{eq:pcp_twofactor}
 J^{\rm PCP}(s)\le K+B+Q-w_{\mathcal A}
 \le 2V^*(s)-K-w_{\mathcal A}\le2V^*(s).
 \end{equation}
 If ties are broken toward plans that neither submit nor serve, there are instances in this regime with $J^{\rm PCP}/V^*$ arbitrarily close to $\varphi=(1+\sqrt5)/2\approx1.618$.
 \item \textbf{Full search model.} On a set of capped states containing the initial state and closed under every positive-probability successor of every plan in $\mathcal D_+(s)$, suppose constants $\epsilon_-\ge0$ and $\epsilon_+\in[0,1)$ satisfy
 \begin{equation}\label{eq:residual_condition}
 -\epsilon_-\omega(s)\le\rho(s)\le\epsilon_+\omega(s)
 \end{equation}
 at every unfinished state. Then
 \begin{equation}\label{eq:pcp_full_ratio}
 J^{\rm PCP}(s)\le\frac{H(s)}{1-\epsilon_+}
 \le\frac{1+\epsilon_-}{1-\epsilon_+}V^*(s).
 \end{equation}
\end{enumerate}
\end{theorem}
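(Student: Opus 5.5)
Part (ii) is the easier half, so I would prove it first. The tool is a standard approximate-policy-evaluation argument. Let $d_t$ be the plan PCP selects at the state $s_t$ visited at time $t$. By definition of PCP in~\eqref{eq:greedy} and of $\rho$ in~\eqref{eq:pcp_residual},
\[
g_{d_t}(s_t)+P_{d_t}H(s_t)=H(s_t)+\rho(s_t)\le H(s_t)+\epsilon_+\omega(s_t).
\]
Since $\omega(s_t)$ is exactly the delay component of $g_{d_t}(s_t)$ and all costs are nonnegative, we have $\omega(s_t)\le g_{d_t}(s_t)$. Hence $(1-\epsilon_+)g_{d_t}(s_t)\le H(s_t)-\E[H(s_{t+1})\mid s_t]$.

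I would then sum this along the PCP trajectory up to time $T$ and take expectations. The sum telescopes to $H(s)-\E H(s_T)$. Two facts let me pass to the limit $T\to\infty$:
\begin{itemize}
\item $H\ge0$, since singleton values are nonnegative and prices are nonnegative by Lemma~\ref{app:lem-projection}.
\item The state space is finite and closed, and every plan in $\mathcal D_+$ changes the state with positive probability. Together with Lemma~\ref{lem:reform}, this means each transition advances work, failures, submission or resolution, so the process terminates almost surely.
\end{itemize}
With $H=0$ at terminal states, monotone convergence then gives $(1-\epsilon_+)J^{\rm PCP}(s)\le H(s)$.

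For the second inequality in~\eqref{eq:pcp_full_ratio}, I would apply the same telescoping to an optimal policy $\pi^*$ restricted to $\mathcal D_+$, which is legitimate by Proposition~\ref{app:prop-dp}. The lower residual bound gives $H(s_t)\le g_{d}(s_t)+P_dH(s_t)+\epsilon_-\omega(s_t)$ for every plan $d$, in particular for $d=\pi^*(s_t)$. Using $\omega\le g$ again and summing yields $H(s)\le(1+\epsilon_-)V^*(s)$. Termination of $\pi^*$ holds for the same reason as before.

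For part (i), the plan is to run the same scheme with an explicit potential and an explicit lower bound on $V^*$. Since no search can succeed, the caps of Lemma~\ref{lem:reform} rule out searching, so only submission, service and work remain.

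\emph{Lower bound.} I would show $V^*(s)\ge K+B'$, where $B'$ is the delay cost under the relaxation in which each unresolved task also has its own human. In that relaxation each $i\in\mathcal A$ waits a geometric time while working in parallel, so its completion time is at least $\max\{x_i,1\}$. The human is shared, so service times are ordered. By the Smith-rule computation behind Proposition~\ref{prop:pair_exact}, the $k$th served task waits at least $k/a$ in expectation. This gives $V^*\ge K+\max(B,Q)$, up to the $w_{\mathcal A}$ overlap that accounts for the "$-w_{\mathcal A}$" terms. Then $K+B+Q-w_{\mathcal A}\le 2V^*-K-w_{\mathcal A}$ follows from $B+Q-w_{\mathcal A}\le 2(V^*-K)-w_{\mathcal A}$, using $V^*-K\ge\max(B,Q)$ together with the refinement that both bounds share the first period's cost.

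\emph{Upper bound for PCP.} Here I would show that $\Phi(s)=K+B+Q-w_{\mathcal A}$ is a supersolution under PCP's chosen plan. The idea is to compare PCP's greedy choice with the specific plan "submit and serve the heaviest unresolved task, everyone else works", which PCP's value can only improve on when evaluated with $H$. The claim to verify is then that $H\le\Phi$ on this regime and that the Bellman inequality $g+PH\le H$ holds along PCP's trajectory. Proposition~\ref{prop:pair_exact} and Corollary~\ref{app:cor-interp} supply $\Delta_{ij}=\min(w_i,w_j)/a$ when work is small; for larger work, pair values are bounded by the sequential completion times.

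The main obstacle is this supersolution step for tasks with remaining work. The price $\Delta_{ij}$ can then be smaller than $\min(w_i,w_j)/a$, because work overlaps waiting. So the greedy plan might defer service for a while, and I must show that this deferral costs at most the slack $B$. I would try a pairwise charging argument: each period a question is not served is charged to work being done, which is bounded by the $w_i\max\{x_i,1\}$ term.

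Finally, for the $\varphi$ lower-bound example, I would take two blocked human-only tasks with weights $1$ and $\varphi-1$, with $a=1$ and submission fees tuned so that the greedy comparison is exactly tied. Adversarial tie-breaking then idles one period. Optimizing the cost ratio over the weights gives the golden ratio as the limit.
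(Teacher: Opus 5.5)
Your part (ii) is correct and is the paper's argument (Proposition~\ref{app:prop-defect} with $\delta=0$): use $\omega\le g_d$, telescope with $H\ge0$ and $H=0$ at terminal states, and use the progress guard for termination. Part (i), however, has two genuine gaps.

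\textbf{The upper bound on PCP.} Your route to $J^{\rm PCP}\le K+B+Q-w_{\mathcal A}$ rests on the Bellman inequality $g_d+P_dH\le H$ along PCP's trajectory. That inequality is false for every tie rule. Take three tasks with $w_i=1$, $x_i=2$, $p_i=0$, $a=1$ and $\kappa_i=\eta=0$. At $a=1$, Lemma~\ref{app:lem-humanpair} gives $V_i=\max\{x_i,1\}$ and $\Delta_{ij}=\ind{x_i\le1,\,x_j\le1}$, so $H(s_0)=6$, while $V^*(s_0)=2+2+3=7$. If $\rho\le0$ held along PCP's path, telescoping would give $J^{\rm PCP}(s_0)\le H(s_0)<V^*(s_0)$, which is impossible. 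Comparing PCP with a heaviest-first plan in terms of $H$ also says nothing about your potential $K+B+Q-w_{\mathcal A}$, because PCP does not optimize against it.

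The missing ingredient is the service-priority property (Lemma~\ref{app:lem-priority}). In this regime the fees cancel from the greedy objective, which reduces to choosing the served task $j$ to maximize $\Gamma_j=w_j\bar a^{y_j}/a+\sum_{k\ne j}\Delta_{jk}(y_j,y_k)$, where $y_j=(x_j-1)^+$. If some unresolved $i$ has $x_i\le1$, then $\Gamma_i\ge w_i/a>0$, and $\Gamma_i>\Gamma_j$ whenever $w_j<w_i$. So PCP serves in every available period and never serves a lighter task ahead of $i$.

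The bound then comes from pathwise charging, not a supersolution. Let $\ell_i=\max\{x_i,1\}$ and write $C_i=(\ell_i-1)+\ind{T_i<\ell_i}+\sum_{t\ge\ell_i}\ind{T_i\ge t}$. Since $\{T_i\ge t\}$ is decided before period $t$'s availability is drawn, the expected sum equals $\frac1a$ times the expected number of \emph{available} periods $t\ge\ell_i$ with $T_i\ge t$. Each such period other than $T_i$ is charged to the distinct pair formed by $i$ and the task served then, at cost $\min(w_i,w_j)$.

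Your charging heuristic targets the wrong periods. Periods while $i$ still has work are already paid for by $B$. The danger is periods after its work is done in which another task is served, and only the priority lemma controls those. Note also that the $-w_{\mathcal A}$ comes from this upper bound. The middle inequality in~\eqref{eq:pcp_twofactor} needs only $V^*\ge K+\max\{B,Q\}$, which gives $B+Q\le2(V^*-K)$ directly, with no shared-first-period refinement.

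\textbf{The $\varphi$ example.} Your instance cannot work. With at most two unresolved tasks, Proposition~\ref{prop:pair_exact} gives $H=V^*$ on an invariant set, so every exact minimizer of~\eqref{eq:greedy} that satisfies the progress guard attains $V^*$, whatever the tie rule. The same holds for any number of blocked tasks in this regime. Fees cannot manufacture a tie either. They cancel from PCP's objective here, and a blocked task has $\Gamma_i\ge w_i/a>0$, so service strictly beats leaving the human idle. When every task is blocked, an idle plan is not even in $\mathcal D_+(s)$.

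The paper's instance needs many tasks and long work: $n$ identical tasks with common work $b$, $a=1$ and zero fees. While $x\ge3$, all pairwise prices at the successors vanish, because any two tasks can still be answered within their remaining work. So every plan ties, and the no-submission rule postpones all questions until $x=2$. This gives $J^{\rm PCP}=nb+(n-1)(n-2)/2$ against $V^*=\sum_{j\le n}\max\{b,j\}$, and the ratio tends to $\varphi$ along $b=\lfloor n/\varphi\rfloor$ as $n\to\infty$. The limit comes from the ratio $b/n$, not from optimizing weights. The bottleneck is visible only to groups of three or more tasks, which is exactly why no two-task example can exhibit it.
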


Part~(i) shows that when no search can still succeed, PCP costs at most twice the optimum for any availability, weights, work, and fees, whichever way ties are broken. The bound is computable from the current state alone, without $V^*$ or an evaluation of PCP. Appendix~\ref{app:human} shows that it holds more generally, whenever the search caps of Lemma~\ref{lem:reform} rule out further search.

The guarantee relies on a property of PCP's decisions (Lemma~\ref{app:lem-priority} in Appendix~\ref{app:human}): once a task has at most one unit of work remaining, PCP never serves a lower-weight task ahead of it and never leaves an available human-service opportunity unused while the task remains unresolved. Each additional period of service delay can then be charged to a distinct pair, at a cost of at most $\min(w_i,w_j)/a$—the pairwise price in Proposition~\ref{prop:pair_exact}. The factor of two comes from comparing PCP's cost with two lower bounds on the optimum: one based on remaining work and the other on the queue for human answers.

Although the bound holds for every tie rule, ties affect PCP's actual cost, which motivates the tie rule of Section~\ref{subsec:policy}. When the human is always available and tasks still have ample work, a pairwise price can be zero even if the human cannot answer all the tasks in time, because any two of them can be answered before either needs its answer. Answering a question now and answering it later then have the same approximate value, and a rule that postpones questions at such ties may wait until the prices turn positive, when the human can no longer keep up. This is the source of the lower bound $\varphi$ in part~(i), which Appendix~\ref{app:golden} derives for portfolios of identical tasks. On the same portfolios, preferring plans that serve a question is optimal. Appendix~\ref{app:proof-human} proves part~(i).

Part~(ii) covers portfolios in which search remains useful. It gives an instance-specific certificate: if the pairwise approximation nearly satisfies the Bellman equation, then PCP is nearly optimal. The condition depends only on the singleton and pair tables and the one-period plans, so it can be checked without computing $V^*$ or evaluating PCP. However, checking it at every state in the theorem's set costs about as much as solving the exact dynamic program. Thus, it is most useful for confirming PCP's quality on instances that can still be analyzed exactly, as in Section~\ref{subsec:num-guarantee}. Appendix~\ref{app:defect} expresses $\rho$ in terms of disagreements between singleton and pair optimal decisions and extends the bound to inexact minimization, and Appendix~\ref{app:proof-defect} gives the proof.

Intuitively, $\rho(s)$ measures the difference between the best one-period look-ahead value and the approximation itself. If this difference is small relative to the delay cost incurred in each period, approximation errors cannot accumulate faster than that cost. For example, if $\epsilon_-=\epsilon_+=0.05$, the ratio is at most $1.05/0.95\approx1.105$. The guarantee in part~(ii) is conditional: the model does not generally ensure that the residual bound in~\eqref{eq:residual_condition} holds. 

\subsection{Computation}\label{subsec:computation}

PCP requires $n$ singleton value tables and $n(n-1)/2$ pair value tables. Each pair table is computed by solving an exact two-task dynamic program over the product of the two tasks' state spaces; task $i$ has at most $3(b_i+1)(M_i+1)$ states. Tables can be reused across decisions. The pairwise approximation is a sum of singleton and pair terms, each depending on the search outcomes of at most two tasks, so evaluating one candidate plan requires $O(n^2)$ table lookups. Because the number of plans grows exponentially with $n$, our implementation uses coordinate search (Appendix~\ref{app:implementation}). For each of the $n+1$ submission choices and each availability outcome, coordinate search starts from two plans and applies a fixed number of sweeps over the service choice and the agents' search-or-work choices. A decision therefore evaluates $O(n^2)$ candidate plans and requires $O(n^4)$ table lookups. Coordinate search approximates the minimization in~\eqref{eq:greedy}. In the setting of Theorem~\ref{thm:pcp_guarantee}(i), where no search choices remain, it evaluates every plan, and the factor-two bound applies to it under a mild condition on its numerical tolerance that holds in all our experiments (Appendix~\ref{app:implementation}).

\section{Numerical study}\label{sec:numerics}

We evaluate PCP in two settings. Section~\ref{subsec:num-sim} uses synthetic portfolios in which search outcomes and the human's availability are drawn from the model. Section~\ref{subsec:num-llm} reports a case study in which LLM agents carry out the tasks themselves. Appendix~\ref{app:numerics} describes the instances, the implementation, and the evaluation in detail.

We compare PCP with four policies. The singleton policy of Section~\ref{sec:algorithm} makes the same greedy choice as PCP using the singleton approximation, so the difference between the two isolates the value of pricing shared attention. It breaks ties in the same way as PCP. The other three policies are simple rules for when to ask. \emph{Always Ask} never searches. Whenever no question is pending, it submits the question of the unresolved task with the largest weight, and the human answers whenever available. \emph{Random-order Ask} does the same but chooses the next question at random. \emph{Search Once Then Ask} lets each agent work until it is blocked, search once, and ask only if that search fails. For small portfolios, we also compute the optimal policy by dynamic programming.

\subsection{Synthetic experiments}\label{subsec:num-sim}

Our main benchmark contains 864 portfolios of three or four tasks. It crosses the amount of independent work $b\in\{0,1,2,4\}$, the availability $a\in\{0.25,0.5,1\}$, the recoverability $p\in\{0.3,0.7\}$, the search success probability $\alpha\in\{0.8,1\}$, and the search cost $c\in\{0.1,0.5,1.5\}$, with three random draws of the task weights from $\{0.5,1,2,4\}$ for each combination. The submission and answer fees are $\kappa_i=0.1$ and $\eta=0.5$. We evaluate every policy exactly and report its gap to the optimal cost and its saving relative to other policies, both averaged over portfolios. Section~\ref{subsec:num-scale} turns to larger portfolios, which we evaluate by simulation, and Section~\ref{subsec:num-robust} to misestimated inputs.

\subsubsection{Performance against the optimal policy}\label{subsec:num-bench}

Table~\ref{tab:gaps} reports the gap to the optimal cost on the benchmark.

\begin{table}[t]
\centering
\caption{Performance on 864 portfolios. Mean, 95th percentile (P95), and largest (Worst) gap to the optimal cost, $100(J-V^*)/V^*$ (\%). Opt.\ is the share of portfolios in which the policy's cost equals the optimal cost up to a relative difference of $10^{-8}$ (\%). Saving is PCP's mean saving relative to the policy, $100(J-J^{\rm PCP})/J$, averaged over portfolios (\%).}
\label{tab:gaps}
\small\setlength{\tabcolsep}{4pt}
\begin{tabular}{@{}lrrrrr@{}}
\toprule
Policy & Mean & P95 & Worst & Opt. & Saving\\
\midrule
 PCP & \textbf{0.07} & \textbf{0.32} & \textbf{1.87} & \textbf{56} & \\
 PCP, coordinate search & \textbf{0.07} & 0.35 & \textbf{1.87} & \textbf{56} & \\
 PCP, no-submission ties & 0.52 & 4.37 & 15.2 & 47 & 0.43\\
 Singleton & 1.24 & 5.78 & 21.0 & 38 & 1.10\\
 Always Ask & 30.9 & 119.2 & 236.2 & 19 & 18.2\\
 Random-order Ask & 58.8 & 188.2 & 330.9 & 9 & 30.6\\
 Search Once Then Ask & 43.1 & 80.2 & 138.3 & 0 & 28.7\\
\bottomrule
\end{tabular}
\end{table}

PCP is close to optimal throughout. Its mean gap is 0.07\%, its 95th percentile is 0.32\%, and its largest gap is 1.87\%. It is exactly optimal in 56\% of portfolios. Coordinate search loses almost nothing relative to exact minimization of~\eqref{eq:greedy} and attains the same cost in 853 of 864 portfolios. The singleton policy has a mean gap of 1.24\% and a largest gap of 21.0\%. The rules for when to ask are far from optimal: PCP saves 18.2\%, 30.6\%, and 28.7\% on average relative to Always Ask, Random-order Ask, and Search Once Then Ask. Always Ask comes closest when its two simplifications cost little: when the human is always available (mean gap 6.3\%), when long independent work hides the wait for an answer (10.5\% for $b=4$), or when search rarely succeeds (13.2\% for $p=0.3$). It is worst when the human is scarce (60.1\% for $a=0.25$). Search Once Then Ask has the opposite weaknesses. It pays for searches that are unlikely to succeed and delays questions that the human could answer at once, so its gap is largest for $p=0.3$ (51.9\%) and for $a=1$ (55.5\%).

The tie rule matters as well. Breaking ties toward not submitting raises PCP's largest gap from 1.87\% to 15.2\%. PCP's service-preferring rule is never materially worse, and it saves up to 13.2\%. Every portfolio in which it saves more than 1\% has four units of work per task and an always-available human, which is the synchronized-work mechanism discussed after Theorem~\ref{thm:pcp_guarantee} (Appendix~\ref{app:golden}).

Relative to the singleton policy, PCP saves 1.1\% on average and up to 17.4\%, and Figure~\ref{fig:regimes} shows where these savings arise. They are largest when search is costly and smallest when independent work is long. With four tasks and no independent work, PCP saves 3.9\% on average when the search cost is $1.5$ and almost nothing when it is $0.1$. The explanation lies in the search decisions of the singleton policy. When search is cheap, every agent searches regardless of the other tasks. When search is costly, the singleton policy searches much less, because it credits a search only with the agent's own benefit. In these portfolios with a search cost of $1.5$, the singleton policy makes 1.5 searches per portfolio on average, against 2.7 for both PCP and the optimal policy. A successful search also frees the human for the other tasks, and PCP accounts for this benefit through its pairwise prices. With long work ($b=4$), both policies can schedule questions around the work, and the savings shrink. The availability of the human has a smaller and less regular effect (panel~b).

\begin{figure}[t]
\centering
\includegraphics[width=\textwidth]{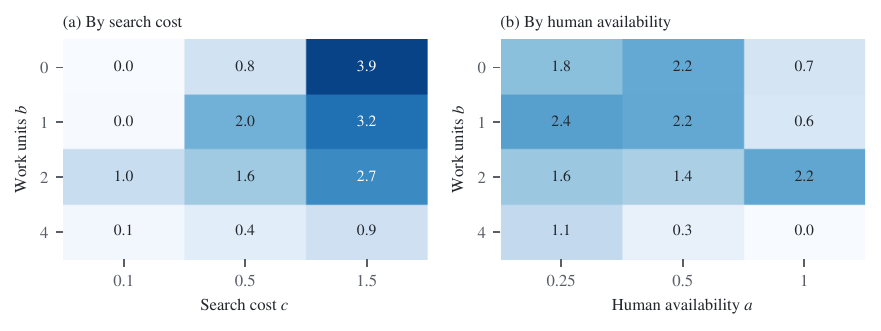}
\caption{Mean saving (\%) of PCP relative to the singleton policy for portfolios of four tasks, by the amount of independent work and (a) the search cost or (b) the availability of the human. Each cell averages 36 portfolios.}
\label{fig:regimes}
\end{figure}

This explanation predicts how PCP changes the use of the human, and the data confirm it. In 299 of the 317 portfolios in which PCP saves more than 0.5\% relative to the singleton policy, PCP uses fewer human answers and more searches. On average, PCP's expected number of answers per portfolio (2.53) is close to that of the optimal policy (2.55), whereas the singleton policy asks more (2.67). This finding complements Section~\ref{sec:structure}. The incentive to ask early when work can overlap an answer is private to each task, so the singleton policy already captures it. The congestion that each question imposes on the other tasks is shared, and pricing it leads agents to resolve more requirements themselves.

\subsubsection{The guarantees in practice}\label{subsec:num-guarantee}

We next examine how informative Theorem~\ref{thm:pcp_guarantee} is. For part~(i), we solve 180 portfolios of three to five tasks in which no search can succeed, with heterogeneous weights, up to six units of work, and $a\in\{0.5,1\}$. PCP is exactly optimal in 175 of them, and its largest ratio to the optimal cost is 1.06. The bound in~\eqref{eq:pcp_twofactor} lies between the optimal cost and 1.40 times the optimal cost, with a median of 1.18. The tie rule again matters. With ties broken toward not submitting, PCP is optimal in only 135 portfolios, and its largest ratio is 1.15. In the homogeneous family of Appendix~\ref{app:golden}, the worst ratio under that rule rises from 1.24 with five tasks to 1.54 with thirty, approaching $\varphi$, whereas PCP's service-preferring rule is optimal throughout.

For part~(ii), we compute conservative values of $\epsilon_+$ and $\epsilon_-$ (Appendix~\ref{app:num-evaluation}) over all states reachable from the initial state under any plan, for 815 benchmark portfolios. We omit 49 of the 54 four-task portfolios with $\alpha=0.8$ and $b=4$, whose state spaces are the largest. The condition $\epsilon_+<1$ holds in every one of them, with $\epsilon_+$ at most 0.71. The certified ratio $(1+\epsilon_-)/(1-\epsilon_+)$ has a median of 1.17, is at most $1.25$ in 67\% of portfolios, and is at most $2$ in 96\%. PCP's actual ratio never exceeds 1.019, so the certificate is informative but conservative.

\subsubsection{Larger portfolios}\label{subsec:num-scale}

For portfolios of up to 48 tasks, we compute PCP's decisions from precomputed singleton and pair tables (Appendix~\ref{app:implementation}) and estimate expected costs by simulation. The 56 instances cross the number of tasks $n\in\{4,8,12,16,24,32,48\}$ with $b\in\{0,2\}$, $a\in\{0.5,1\}$, and $p\in\{0.3,0.7\}$, with $\alpha=0.8$, $c=0.5$, and random weights. Every policy is simulated on the same 200 sample paths of availability and search outcomes, so that differences between policies are estimated precisely.

\begin{figure}[t]
\centering
\includegraphics[width=\textwidth]{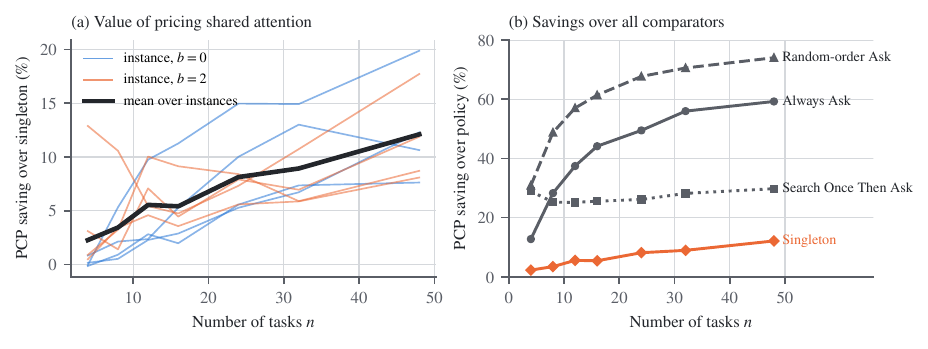}
\caption{Larger portfolios ($\alpha=0.8$, $c=0.5$, 200 common sample paths per instance). (a) PCP's mean saving relative to the singleton policy for each instance (thin lines, colored by the amount of work) and the mean over the eight instances of each size (thick line). (b) PCP's saving relative to each policy, averaged over the instances of each size.}
\label{fig:scale}
\end{figure}

Figure~\ref{fig:scale} shows that pricing shared attention becomes more valuable as the portfolio grows. PCP's saving relative to the singleton policy rises from 2.3\% with four tasks to 12.1\% with 48 tasks, and reaches 19.9\% in one instance. More tasks compete for the human, so the congestion that the singleton policy ignores grows. Consistent with Section~\ref{subsec:num-bench}, PCP uses fewer answers than the singleton policy, by 0.2 per portfolio with four tasks and 5.1 with 48. Its savings relative to Always Ask and Random-order Ask also grow, to 59\% and 74\% with 48 tasks, and its saving relative to Search Once Then Ask stays between 25\% and 30\% at every size.

PCP remains fast at this scale. A decision takes about 1\,ms for up to 12 tasks, 6\,ms for 24 tasks, and 44\,ms for 48 tasks on a personal computer with an Apple processor, after a one-time construction of the tables that takes 1.3\,s for 48 tasks.

\subsubsection{Robustness to misestimated inputs}\label{subsec:num-robust}

PCP uses the recoverability $p$, the search success probability $\alpha$, and the availability $a$, which a deployment would have to estimate. To assess sensitivity, we build PCP and the singleton policy with wrong inputs and evaluate them exactly under the true parameters, for the 144 four-task portfolios of the benchmark with the first weight draw. With correct inputs, PCP's mean gap on these portfolios is 0.10\%. Halving $p$, multiplying it by $1.5$, multiplying $\alpha$ by $0.75$, or shifting $a$ by $\pm0.2$ raises PCP's mean gap to at most 0.74\%, and its largest gap from 1.76\% to at most 10.26\%. Across the same scenarios, the singleton policy's mean gap ranges from 0.53\% to 4.02\%, and it is largest when recoverability is underestimated. When availability is underestimated by 0.2, the singleton policy does better than PCP, with a mean gap of 0.53\% against 0.74\% and a largest gap of 5.79\% against 10.26\%. On average, PCP is robust to these estimation errors, but individual portfolios can lose up to 10.3\%. Appendix~\ref{app:num-results} reports every scenario.

\subsection{A case study with LLM agents}\label{subsec:num-llm}

The experiments of Section~\ref{subsec:num-sim} simulate the outcomes of search and work. Here, LLM agents carry out the tasks themselves. The policies decide only when each agent searches, works, or asks, and which question the human answers.

\subsubsection{Design}\label{subsec:num-llm-design}

Each task is a procurement decision. The agent must choose the cheapest of four suppliers whose delivery time meets a maximum lead time that the person who delegated the task has not stated. The task has two units of independent work, which do not need the missing requirement. In the first, the agent extracts each supplier's current approved quote from a list that also contains superseded quotes, drafts, and quotes for other purchases. In the second, it computes each supplier's landed cost. A search gives the agent a packet of six documents about the purchase. The packet contains the binding requirement only if the search succeeds, and its other documents are decoys, such as superseded revisions, drafts, supplier recommendations, and requirements of other purchases. The agent must therefore judge whether the packet settles the requirement. The human is scripted and returns the true requirement when it answers. Once the requirement is known and the work is done, the agent chooses the supplier. Every step of every agent is carried out by \texttt{gpt-6.1-sol}, called through the Codex command-line interface. Each step is a separate call without tool use that receives only the data the step needs, including the agent's own earlier outputs.

We use portfolios of four tasks of weight one, two units of work per task, $p=0.3$, $\alpha=0.8$, $\kappa_i=0.1$, and $\eta=0.5$, and cross the availability $a\in\{0.25,1\}$ with the search cost $c\in\{0.1,1.5\}$. For each of the four settings, we draw ten portfolios and run PCP, the singleton policy, Always Ask, and Search Once Then Ask on each. All policies face the same documents, the same outcome for each search attempt, and the same availability of the human in each period. Appendix~\ref{app:num-llm} gives the documents, the prompt, and the execution protocol.

\subsubsection{Results}\label{subsec:num-llm-results}

\begin{table}[!t]
\centering
\caption{Case study with LLM agents (four tasks, ten portfolios per setting). Mean cost, with the lowest in each setting in bold, and PCP's mean saving (\%) relative to each benchmark, with standard deviations across portfolios in parentheses. Search Once denotes Search Once Then Ask.}
\label{tab:llm}
\small
\begin{tabular}{@{}cc rrrr rrr@{}}
\toprule
& & \multicolumn{4}{c}{Mean cost} & \multicolumn{3}{c}{PCP's saving (\%) relative to}\\
\cmidrule(lr){3-6}\cmidrule(l){7-9}
$a$ & $c$ & PCP & Singleton & Always Ask & Search Once & Singleton & Always Ask & Search Once\\
\midrule
 1 & 0.1 & \textbf{12.6} (1.1) & 12.9 (0.9) & 13.4 (0.0) & 20.1 (4.6) & 2.0 (7.2) & 5.8 (7.9) & 34.8 (12.1)\\
 1 & 1.5 & \textbf{13.4} (0.0) & \textbf{13.4} (0.0) & \textbf{13.4} (0.0) & 25.3 (4.2) & 0.0 (0.0) & 0.0 (0.0) & 45.6 (9.7)\\
 0.25 & 0.1 & \textbf{31.5} (20.4) & 33.1 (20.8) & 42.1 (16.8) & 44.5 (22.5) & 2.5 (14.7) & 27.2 (24.5) & 23.8 (29.0)\\
 0.25 & 1.5 & \textbf{40.3} (25.3) & 46.6 (23.3) & 46.6 (23.3) & 45.4 (22.1) & 11.2 (27.2) & 11.2 (27.2) & 13.9 (20.7)\\
\bottomrule
\end{tabular}
\end{table}

The agents carried out every step correctly. All 640 quote extractions, all 640 cost calculations, and all 640 supplier choices are correct, and the agents recognized all 91 successful searches and rejected all 276 packets without the binding requirement. Because the agents always recognized whether a search had succeeded, the differences in cost between the policies come from coordination alone. This follows from the design of the documents and does not test the assumption in general.

Pricing shared attention matters only when the human is scarce and search is costly. When the human is always available and search is costly, PCP, the singleton policy, and Always Ask ask every question without searching, one per period, so they cost the same in every portfolio. When search is cheap, the singleton policy already searches often, and under the model PCP's saving relative to the singleton policy averages 0.0\% per portfolio with an available human and 0.6\% with a scarce one.

When the human is scarce and search is costly, PCP saves 11.2\% on average relative to the singleton policy, close to the average saving of 9.4\% that the model predicts (Appendix~\ref{app:num-llm}). The singleton policy never searches in this setting. For a single agent, a search that costs $1.5$ and succeeds with probability $0.24$ is not worth its cost, so every agent waits for the human. PCP searches 3.6 times per portfolio and asks 3.0 questions, against 4.0 for the singleton policy, because a successful search also frees the human for the other agents. This is the congestion effect of Section~\ref{subsec:num-bench}, here with agents that must read the documents to learn whether a search has succeeded.

The standard deviation of this saving is large (27.2\%), because the cost of a portfolio depends mainly on when the human happens to be available, and ten portfolios cover few of these histories. Since the agents made no errors, every decision in the run can also be evaluated under the model. In this setting, each decision of the singleton policy raises the expected cost by 1.50 on average relative to the optimal decision, compared with 0.011 for PCP. Under the model, PCP's advantage in this setting therefore does not hinge on the ten portfolios drawn. Relative to the rules for when to ask, PCP saves 27.2\% and 23.8\% on average when the human is scarce and search is cheap. Search Once Then Ask is the most expensive policy whenever the human is always available, because it delays questions that the human could answer at once.

\section{Conclusion}\label{sec:conclusion}

As organizations delegate more work to AI agents, the attention of the people who oversee them becomes a shared and scarce resource. We studied how agents should divide their effort between searching for missing information, continuing independent work, and asking a human whom they all share, and which waiting question the human should answer.

Three insights emerge. First, how often an agent asks is a poor measure of how well it uses human attention. Useful independent work and withdrawable questions can both make optimal agents ask more, and withdrawal does so while lowering total cost. Second, the value of a question depends on the whole portfolio. Each answer delays the other tasks that wait for the human, and adding a task can reverse which of two others should be answered first. Third, this congestion can be priced with small problems that are solved exactly. PCP's pairwise prices of attention lead agents to search more and ask less when the human is scarce. On average, PCP is within a fraction of a percent of the optimal cost in small portfolios, and its saving relative to a policy that ignores congestion grows with the number of tasks.

These insights bear directly on the design of agent systems. Interfaces should let agents keep working and searching after they ask, and should let them withdraw questions that they resolve themselves. Systems that run several agents for one person should coordinate their questions centrally and charge each question for the attention it consumes. The case study shows that this coordination pays off when the person is rarely available and searching is costly, and matters little when the person answers at once.

Our model has limitations that suggest directions for future work. First, an agent never acts on a guessed requirement, and every answer and successful search is correct. In practice, an agent can proceed on its best guess and redo work if the guess turns out wrong, and a record it retrieves can be outdated. Adding an action that proceeds on a guess at the risk of rework, together with searches that can return a wrong requirement, would turn the decision to ask into a choice between preventing errors and repairing them. Second, the human answers at most one question per period and is available independently over time. People often check in for a session, answer several questions at once, and stay available for a while. A model with sessions and persistent availability would make the timing of questions more strategic: questions should be ready when a session starts, and a human who has just become available is likely to remain so. Third, PCP takes the recoverability, search success, and availability probabilities as known, and Section 6.1.4 shows that misestimating them, especially underestimating availability, can be costly in individual portfolios. A controller that updates these estimates from observed searches and check-ins would face a trade-off between learning the parameters and acting on them. On the theoretical side, an unconditional constant-factor guarantee for portfolios in which search remains useful is still open.

\FloatBarrier

\bibliographystyle{informs2014}
\bibliography{references,literature_review}
\clearpage
\appendix
\numberwithin{equation}{section}
\numberwithin{figure}{section}
\numberwithin{table}{section}
\numberwithin{proposition}{section}
\numberwithin{corollary}{section}
\numberwithin{definition}{section}
\numberwithin{example}{section}
\numberwithin{lemma}{section}
\section{Additional Results and Discussion}
\label{app:additional}

This appendix collects results that complement the main text. All proofs are in Appendix~\ref{app:proofs}.

\paragraph{Conventions}
All values are remaining costs from the displayed state, and fees paid earlier are sunk. The delay cost of a period is charged to every task that is incomplete at its start, and work and resolution achieved during a period take effect at its end. For a joint state $s$, let $\mathcal A(s)=\{i:z_i\ne R\}$ be the set of unresolved tasks, $m=|\mathcal A(s)|$, and $\omega(s)=\sum_{i\text{ incomplete}}w_i$ the delay cost incurred in the current period. Unless stated otherwise, sums over pairs $i<j$ run over unresolved tasks. A state is terminal when every task is complete.

\subsection{The reduced dynamic program}
\label{app:dp}

\paragraph{Beliefs}
Let $\Lambda_i(k)=1-p_i+p_i(1-\alpha_i)^k$ be the probability that the first $k$ searches of task $i$ fail, and let
\begin{equation}\label{eq:app-posterior}
 \beta_i(k)=\frac{p_i(1-\alpha_i)^k}{\Lambda_i(k)}
\end{equation}
be the posterior probability that its requirement is recoverable after $k$ failures. The next-search success probability~\eqref{eq:belief} is $q_i(k)=\alpha_i\beta_i(k)$. Both $\beta_i(k)$ and $q_i(k)$ are nonincreasing in $k$, and $q_i(k)\to0$ (Lemma~\ref{app:lem-belief}). Because recoverability and search outcomes are independent across tasks, task $i$'s future evolution depends on its history only through $(x_i,k_i,z_i)$, so the joint state is a sufficient statistic.

\paragraph{Search caps}
The next result shows that the caps of Lemma~\ref{lem:reform}(iii) are sufficient. Here $b_i$ may be any upper bound on task $i$'s remaining work, for example its initial amount.

\begin{proposition}[Sufficient search caps]\label{app:prop-caps}
For a portfolio of $n$ tasks with remaining work at most $b_i$, let
\begin{equation}\label{eq:app-bmax}
 B_{\max}=\sum_iw_ib_i+\frac na\sum_iw_i+\sum_i\kappa_i+n\eta,
 \qquad
 M_i=\min\{k\ge0:q_i(k)B_{\max}\le c_i\}.
\end{equation}
\begin{enumerate}[label=(\roman*)]
 \item Every $M_i$ is finite, and $0\le V^*(s)\le B_{\max}$ at every state reachable from the initial state.
 \item Prohibiting every search of task $i$ at failure counts $k_i\ge M_i$, for all $i$, does not change the optimal value. The same holds for any larger caps $M_i'\ge M_i$.
 \item For every subset $I$ of tasks, the optimal value of the tasks in $I$ sharing one human, computed with the caps $(M_i)_{i\in I}$, equals its value without caps.
\end{enumerate}
\end{proposition}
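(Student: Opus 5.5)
Part~(i) has two claims: each $M_i$ is finite, and $V^*$ is bounded by $B_{\max}$ on reachable states. For finiteness, Lemma~\ref{app:lem-belief} gives $q_i(k)\to0$, because $\beta_i(k)\to0$ when $\alpha_i>0$ and $p_i<1$. Since $B_{\max}<\infty$ and $c_i>0$, some $k$ satisfies $q_i(k)B_{\max}\le c_i$. Nonnegativity of $V^*$ is immediate because all costs are nonnegative. For the upper bound I would exhibit the policy ``never search, always ask.'' Fix a reachable state. Submit all unsubmitted questions (one per period is fine, since a newly submitted question can be served at once). Serve in any order whenever the human is available, and let each agent work every period. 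Each task's remaining work is at most $b_i$. The $m\le n$ questions need at most $m$ available periods, whose expected total waiting time is at most $n/a$. Task $i$'s completion time is therefore at most $x_i$ plus the time until all questions are answered, so $\E[C_i]\le b_i+n/a$. Fees are at most $\sum_i\kappa_i+n\eta$. This yields $\sum_iw_i(b_i+n/a)+\sum\kappa_i+n\eta=B_{\max}$.

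For part~(ii) I would argue by pathwise comparison. Take any policy and any state with $k_i\ge M_i$ where it searches task $i$. Replace that search with the action ``work if work remains, else wait.'' The search saves $q_i(k_i)\times(\text{continuation cost reduction})$, and this reduction is at most the continuation value, which is $\le B_{\max}$ by (i). The search costs $c_i\ge q_i(k_i)B_{\max}$. To make this rigorous I would use value iteration on the Bellman equation. At $k_i\ge M_i$, the search option evaluates to $c_i+q_i V^*(\text{resolved})+(1-q_i)V^*(k_i+1)$. I would compare it with the no-search option, in which the agent works or idles, the failure count does not advance, and beliefs remain the same. The key inequality needed is $V^*(\text{state with }k_i+1)\le V^*(\text{state with }k_i)+{}$(something). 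This holds because more failures only lower future search success, and a policy at $k_i$ can be simulated at $k_i+1$ up to the belief difference.

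The cleanest route, and the step I expect to be the main obstacle, is an induction over the finite horizon of searches. First show that capping at a very large $M'$ is harmless, since the tail contribution vanishes because $q\to0$. Then lower the cap one step at a time. At cap level $K\ge M_i$, a search costs $c_i$ and yields a benefit of at most $q_i(K)\,[V^{\text{no-search}}-V^{\text{resolved}}]\le q_i(K)B_{\max}$. The bracket is at most the continuation value, which is $\le B_{\max}$ by (i), and $V^{\text{resolved}}\ge0$. So removing the search at $K$ does not raise the value. The subtle point is that the work/idle substitute must be feasible and no worse in the non-search coordinates. It is, because the substitute uses the same timing and lets the agent work instead, and working never increases cost (Lemma~\ref{lem:reform}(i)). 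Monotonicity of $q_i$ ensures the same comparison holds for every $k\ge M_i$, and also for any larger caps $M_i'$.

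Part~(iii) follows by applying (i)–(ii) to the subportfolio $I$. Its own bound $B_{\max}^I$ is no larger than $B_{\max}$, because every term of $B_{\max}$ is monotone in the task set (including $n/a$, which is $\ge |I|/a$). Hence the caps $M_i^I$ defined from $B^I_{\max}$ are at most $M_i$. By the ``larger caps'' clause of (ii), using $M_i$ leaves the subportfolio value unchanged.
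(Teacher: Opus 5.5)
Your parts (i) and (iii) are fine and match the paper. A never-search completion rule gives $0\le V^*\le B_{\max}$. For a subset $I$ one has $B^I_{\max}\le B_{\max}$, so its caps are no larger than $M_i$, and the larger-caps clause of (ii) applies.

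The gap is in (ii). A search at $k_i\ge M_i$ is useless because of the inequality $V(s_N)\le V(s_F)$, where $s_F$ is the successor after a failed search and $s_N$ the successor without the search. With it,
\[
 c_i+qV(s_S)+(1-q)V(s_F)-V(s_N)\ge c_i-q\bigl[V(s_F)-V(s_S)\bigr]\ge c_i-qB_{\max}\ge0.
\]
You state the key inequality the other way round, $V(k_i+1)\le V(k_i)+(\dots)$. Your justification, simulating a policy for $k_i$ at $k_i+1$, can only deliver that direction, which does not help. The bound ``benefit $\le q_i(K)[V^{\text{no-search}}-V^{\text{resolved}}]$'' in your last paragraph is equivalent to the missing inequality, and you simply assume it. In general it is not automatic: from $s_F$ the task may still search, so one must run a policy built for the more pessimistic posterior from the more optimistic one. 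The paper does this in Lemma~\ref{app:lem-optimism} by a coupling. It draws a private $Z\sim\mathrm{Bernoulli}(\beta_-/\beta_+)$, treats the task as virtually recoverable iff $\chi Z=1$, and executes the pessimistic policy on the virtual history. If the task is actually resolved by a search that the virtual history records as a failure, its agent idles and its question is never submitted or served. The paper also substitutes \emph{waiting} for the search, so that $s_N$ and $s_F$ differ only in $k_i$. Your ``work'' substitute additionally requires monotonicity in remaining work, which you borrow from Lemma~\ref{lem:reform}(i), a result the paper proves only after this proposition.

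The second gap is the passage to a finite problem. ``Capping at a very large $M'$ is harmless since the tail vanishes because $q\to0$'' is asserted rather than proved. Moreover, $q\to0$ is not what controls the tail: a policy that keeps searching reaches $M'$ failures with probability at least $1-p_i$. The paper instead truncates the horizon at $T$ and forces the completion rule $F$ afterwards. Backward induction over finitely many periods, together with Lemma~\ref{app:lem-optimism}, then shows that a cap-respecting plan is optimal in the truncated problem. The paper lets $T\to\infty$ using $J^{\pi_T}\le J^\pi+B_{\max}\Pr^\pi(\tau>T)$ and $\E^\pi\tau\le J^\pi/\min_iw_i$.

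Your cap-lowering skeleton can nonetheless be completed into a valid alternative that avoids the coupling lemma. Work in the finite model with cap $K+1$ on task $i$, where $K\ge M_i$; there $0\le V\le B_{\max}$ because $F$ never searches. Substitute waiting for the search. Then $V(s_N)\le V(s_F)$ is immediate: from $s_F$ task $i$ can no longer search, so every policy from $s_F$ runs from $s_N$ with the same cost law. Hence the one-period Bellman equation has a minimizer respecting cap $K$ at every state. A pure self-loop is never a minimizer, because $g_d\ge\omega>0$. So the greedy policy using these minimizers respects cap $K$ and, by the rank recursion, attains the value with cap $K+1$. For the tail, compare any finite-cost $\pi$ with the policy that switches to $F$ the first time $\pi$ would search some task at $k_i\ge M'$. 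Since $\E N_i^S\le J^\pi/c_i$, the switch occurs with probability at most $\sum_iJ^\pi/(c_iM')$ and then costs at most $B_{\max}$. Therefore the values with caps $M'$ converge to $V^*$.
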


Part~(iii) is what allows PCP's singleton and pair values to be computed with the caps of the full portfolio. We call the model with the caps of Proposition~\ref{app:prop-caps} the \emph{capped model}. Its state space is finite: $x_i\in\{0,\ldots,b_i\}$, $k_i\in\{0,\ldots,M_i\}$, and $z_i\in\{U,P,R\}$.

\paragraph{Reduced plans}
A period plan $d$ specifies the question submitted at the start of the period, if any, and, for each availability outcome of positive probability, the question served, if any, and each agent's action: search, work, or wait. Let $g_d(s)$ be its expected immediate cost, which is $\omega(s)$ plus the expected submission, search, and answer fees, and let $P_d(s,\cdot)$ be its transition law.

\begin{definition}[Reduced plans]\label{app:def-reduced}
At an unfinished state $s$ of the capped model, $\mathcal D_{\rm red}(s)$ is the set of feasible plans $d$ such that
\begin{enumerate}[label=(\alph*)]
 \item at most one new question is submitted, and a newly submitted question is served if the human is available;
 \item no agent searches while its question is being served;
 \item no task is searched at a failure count $k_i\ge M_i$;
 \item in each availability outcome, every agent with $x_i>0$ performs one unit of work unless it searches.
\end{enumerate}
Let $\mathcal D_+(s)=\{d\in\mathcal D_{\rm red}(s):P_d(s,s)<1\}$.
\end{definition}

Rule~(d) applies to every agent with work left: to agents whose question is being served, to agents with a pending or unsubmitted question, and to agents whose requirement is already resolved. Human service is optional, and an agent without work may wait. Rule~(a) restricts new submissions, not the number of pending questions, because a question submitted in a period in which the human is unavailable remains pending. The set $\mathcal D_+(s)$ is nonempty at every unfinished state: if some task has work, every plan in $\mathcal D_{\rm red}(s)$ changes the state. Otherwise, submitting an unresolved task's question, or serving it if it is already pending, changes the state with positive probability.

\paragraph{The dynamic program}
Define the integer rank
\begin{equation}\label{eq:app-rank}
 \mathrm{rk}(s)=\sum_ix_i+\sum_{i\in\mathcal A(s)}\bigl[(M_i-k_i)^++1+\ind{z_i=U}\bigr].
\end{equation}

\begin{proposition}[Ranked dynamic program]\label{app:prop-dp}
In the capped model, every transition that changes the state strictly decreases $\mathrm{rk}$. The optimal value satisfies $V^*(s)=0$ at terminal states and, at every unfinished state,
\begin{equation}\label{eq:app-dp}
 V^*(s)=\min_{d\in\mathcal D_+(s)}
 \frac{g_d(s)+\sum_{s'\ne s}P_d(s,s')V^*(s')}{1-P_d(s,s)}.
\end{equation}
This value equals the optimal value~\eqref{eq:objective} of the original model, without caps and with unrestricted plans, and the recursion gives the same values if the minimum is taken over all feasible plans with $P_d(s,s)<1$. There is an optimal deterministic stationary policy that selects a plan of $\mathcal D_+(s)$ at every unfinished state. In particular, the three properties of Lemma~\ref{lem:reform} hold simultaneously.
\end{proposition}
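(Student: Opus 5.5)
We prove Proposition~\ref{app:prop-dp} in three steps: the rank decreases along every state-changing transition, this ordering turns the Bellman equation into the recursion~\eqref{eq:app-dp} for the capped model, and the capped value with plans in $\mathcal D_+$ equals the original optimal value.

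\textbf{Step 1: the rank is a strict Lyapunov function.} We check, one task at a time, that no component of $\mathrm{rk}$ in~\eqref{eq:app-rank} ever increases and that every state change lowers at least one of them.
\begin{enumerate}[label=(\alph*)]
 \item A work unit lowers $x_i$ by one.
 \item A failed search at $k_i<M_i$ lowers $(M_i-k_i)^+$ by one. Rule~(c) forbids searches at $k_i\ge M_i$, so this is the only way $k_i$ changes.
 \item A submission turns $U$ into $P$ and removes the indicator term $\ind{z_i=U}$.
 \item A resolution, by answer or by successful search, removes task $i$ from $\mathcal A(s)$. This deletes a bracket that is at least $1$.
\end{enumerate}
Since each component is nonnegative and integer, and no transition raises any of them, every transition with $s'\ne s$ satisfies $\mathrm{rk}(s')<\mathrm{rk}(s)$.

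\textbf{Step 2: the recursion on the capped model.} Under any stationary plan choice, the only way to stay at $s$ is a self-loop with probability $P_d(s,s)$. If we fix $d$ at $s$ and use $V^*$ afterward, the value $v$ satisfies
\[
v=g_d(s)+P_d(s,s)v+\sum_{s'\ne s}P_d(s,s')V^*(s').
\]
Solving gives the ratio form in~\eqref{eq:app-dp}. Plans with $P_d(s,s)=1$ never progress and incur $\omega(s)>0$ forever, so they are never optimal. Because $\mathrm{rk}$ strictly decreases along every state change, the recursion is well defined by induction on $\mathrm{rk}$: terminal states get value $0$, and each other state depends only on states of lower rank. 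A standard argument for transient total-cost MDPs with finite state and action sets then shows that this recursion gives the optimal value. The argument is that any policy that keeps using the same plan at $s$ reaches the same geometric ratio, and randomizing between plans cannot beat the best plan. The minimizers define a deterministic stationary optimal policy.

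\textbf{Step 3: equivalence with the original model.} This is the main obstacle, and we split it into two reductions.
\begin{enumerate}[label=(\roman*)]
 \item \emph{Caps.} Proposition~\ref{app:prop-caps}(ii) already shows that the caps do not change the optimal value.
 \item \emph{Rules (a), (b), and (d).} We use interchange and coupling arguments, one rule at a time, and show that each restriction cannot raise cost.
\end{enumerate}
For rule~(d), take a plan in which an agent with work waits. Replacing the wait by a work unit only moves later work earlier. We couple the two policies so that every later action shifts to the idle slot. Delay weakly falls and no fees change, because work never interacts with the human or with search.

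For rule~(b), a search during service is redundant. Service resolves the requirement anyway, so dropping the search saves $c_i$ and frees the period for work.

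Rule~(a) is the hardest. If several questions are submitted in one period, at most one is served. We delay every unserved submission to the period in which it is actually served, or drop it entirely if it is later withdrawn. This saves $\kappa$ or leaves it unchanged. The delay is harmless because submission carries no information and service can include a same-period submission. After this change at most one new question is submitted per period, and a submitted question that is not served immediately when the human is available can instead be submitted later. The delicate point is that the shifted submissions must not collide. We handle this by inducting over periods: in each period at most one question is served, so the shifted submissions, which are indexed by service times, never coincide.

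Combining Step~3 with Proposition~\ref{app:prop-caps}(iii), the capped model with $\mathcal D_+$ attains $\inf_\pi$ of~\eqref{eq:objective}. The same arguments show that enlarging the minimum to all feasible plans with $P_d(s,s)<1$ gives the same value. Finally, the optimal stationary policy from Step~2 uses plans that satisfy rules~(a)--(d), so it has all three properties of Lemma~\ref{lem:reform} at once.
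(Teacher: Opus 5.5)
Your outline follows the paper's proof: rank as a strict Lyapunov function, a ratio recursion proved by induction on rank for a restricted plan class, the caps of Proposition~\ref{app:prop-caps}, transformations of an optimal policy that enforce rules~(a), (b), and~(d), and a comparison of restricted and unrestricted values. The gap is in the transformations, which are where the proof's real content lies. For rule~(a) you delay each unserved submission ``to the period in which it is actually served.'' Questions are submitted before availability is revealed, however, so at submission time the controller cannot know whether the current period is the one in which the original policy will serve $j$. A rule that submits only in that period is therefore clairvoyant. The delicate point is this information constraint, not collisions between shifted submissions.

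The repair is to draw, at the start of each period, the plan $d$ that the original policy would choose on a \emph{virtual} history, in which questions are submitted as the original policy submits them. You then actually submit only the question $j$ that $d$ would serve in the available outcome, if $j$ is not already pending; if the human is unavailable, $j$ stays pending. Because $d$ designates at most one question, at most one new question is submitted per period. Because $j$ is already virtually submitted, each actual submission is matched by a virtual one, so submission fees do not rise. Service, search, and work coincide with the original, so completion times and the other fees are unchanged.

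The same device is missing from your arguments for~(d) and~(b). Once a wait is replaced by work, or a submission is postponed, the actual state differs from the one the original policy expects ($x_i$ is smaller, or $z_i=U$ instead of $P$). Running that policy on the actual state then changes its later decisions and breaks the coupling, and ``every later action shifts to the idle slot'' does not say what the policy does instead. The fix is to keep virtual work counts $\tilde x_i\ge x_i$, feed the original policy the virtual history, and let each agent work whenever it has actual work and does not search. For~(b), if the original policy conditions on the outcome of a search made during service, feed it an independently simulated outcome.

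The transformed policy $\pi_2$ is therefore history dependent and randomized. Your remark that randomizing cannot beat the best plan must accordingly become a real lower bound over such policies. It follows from the termwise inequality $g_d(s)+\sum_{s'\ne s}P_d(s,s')V(s')\ge r^*\bigl(1-P_d(s,s)\bigr)$, where $r^*$ is the minimal ratio at $s$ and $V$ the value at lower-rank states. This inequality also handles plans with $P_d(s,s)=1$ that are used for finitely many periods. Finally, apply both transformations to a deterministic stationary optimal policy of the unrestricted capped model and conclude $V^*\le V_{\rm red}\le J^{\pi_2}=V^*$, where $V_{\rm red}$ is the value over policies restricted to $\mathcal D_{\rm red}$.
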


Each ratio in~\eqref{eq:app-dp} is the expected cost of repeating plan $d$ until the state changes and then continuing optimally. Because every distinct successor has lower rank, \eqref{eq:app-dp} computes $V^*$ by induction on rank, starting from terminal states. From an initially unsubmitted portfolio, the induction has depth $\mathrm{rk}(s_0)=\sum_i(b_i+M_i+2)$.

\paragraph{The one-period Bellman operator}
For a function $L$ on the capped state space, with $L=0$ at terminal states, define at unfinished states
\begin{equation}\label{eq:app-T}
 (\mathcal TL)(s)=\min_{d\in\mathcal D_+(s)}\bigl\{g_d(s)+P_dL(s)\bigr\},
 \qquad P_dL(s)=\sum_{s'}P_d(s,s')L(s').
\end{equation}
Unlike~\eqref{eq:app-dp}, the operator $\mathcal T$ keeps the self-transition. It has $V^*$ as a fixed point, and PCP~\eqref{eq:greedy} and the Bellman defect~\eqref{eq:pcp_residual} apply it to the pairwise approximation.

\subsection{The complete policy for a blocked task}
\label{app:blocked}

Consider one task with no work left and suppress its index. Let $V^U_k$ and $V^P_k$ be the optimal values of the unsubmitted and pending states with $k$ failures, $(0,k,U)$ and $(0,k,P)$, and let $M$ be the cap of Proposition~\ref{app:prop-caps}. Define
\begin{equation}\label{eq:app-blocked-notation}
 W=\eta+\frac wa,\qquad
 Y_k=c+(1-q_k)V^P_{k+1},\qquad
 \psi_k=w+Y_k-V^P_k-\kappa q_k,
\end{equation}
where $q_k=q(k)$, $V^P_k=W$ for $k\ge M$, and $Y_k=c+(1-q_k)W$ for $k\ge M$. Here $W$ is the cost of waiting for the human without further search, because the expected wait is $1/a$ periods. $Y_k$ is the cost of one more search from a pending state, including its continuation but excluding the current period's delay cost, because a successful search completes a blocked task. Finally, $\psi_k$ is the cost of searching once more and asking after a failure, minus the cost of asking now.

\begin{proposition}[Complete blocked-task policy]\label{app:prop-blocked}
For one task with $x=0$:
\begin{enumerate}[label=(\roman*)]
 \item \textbf{Pending values.} $V^P_k=\min\{w+Y_k,\;w+a\eta+(1-a)Y_k,\;W\}$, and both $V^P_k$ and $Y_k$ are nondecreasing in $k$.
 \item \textbf{Pending policy.} The following pending policy is optimal. If $Y_k<\eta$, search in both availability outcomes, declining available service. If $\eta\le Y_k<W$, accept service when the human is available and search otherwise. If $Y_k\ge W$, accept service when available and wait otherwise. The last regime holds exactly when $q_kW\le c$. At the boundaries, the adjacent plans tie.
 \item \textbf{Asking threshold.} The sequence $\psi_k$ is nondecreasing on $\{0,\ldots,M\}$ and $\psi_M\ge w>0$. With $k_{\rm ask}=\min\{k:\psi_k\ge0\}\le M$, the policy that searches while unsubmitted at $k<k_{\rm ask}$ and submits at $k=k_{\rm ask}$ is optimal. Moreover, $V^U_k=\kappa+V^P_k$ for $k\ge k_{\rm ask}$ and $V^U_k<\kappa+V^P_k$ for $k<k_{\rm ask}$.
 \item \textbf{Ordering.} The pending policy does not depend on $\kappa$, whereas $\psi_k$ depends on $\kappa$ only through $-\kappa q_k$. If $p>0$ and $0<\alpha<1$, then $k_{\rm ask}\to\infty$ as $\kappa\to\infty$. Conversely, there are parameters for which $k_{\rm ask}=0$ while a pending agent with no failures searches when the human is unavailable.
\end{enumerate}
\end{proposition}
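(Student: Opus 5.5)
We prove the four parts of Proposition~\ref{app:prop-blocked} in order, working throughout with the ranked recursion~\eqref{eq:app-dp} of Proposition~\ref{app:prop-dp} specialized to one blocked task. From a pending state $(0,k,P)$ there are only three period plans worth comparing. The first searches in both availability outcomes, which costs $w+Y_k$. The second serves the question when the human is available and searches otherwise, which costs $w+a\eta+(1-a)Y_k$. The third serves when available and waits otherwise; solving its self-loop gives $(w+a\eta)/a=W$. Plans that decline service and wait never change the state, so they are excluded. This gives the formula in part~(i). For $k\ge M$ search is prohibited, so $V^P_k=W$. We then establish monotonicity by backward induction on $k$ from $M$. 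Suppose $V^P_{k+1}$ is nondecreasing from $k+1$ onward and bounded by $W$. Then
\[
Y_k=c+V^P_{k+1}-q_k V^P_{k+1}.
\]
Here $q_k$ is nonincreasing (Lemma~\ref{app:lem-belief}) and $V^P_{k+1}\ge0$, so $Y_k\le Y_{k+1}$, and $V^P_k$, as a minimum of nondecreasing expressions, is also nondecreasing.

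Part~(ii) follows by comparing the three expressions pairwise. Service beats searching while the human is available exactly when $\eta\le Y_k$. When the human is unavailable, search beats waiting exactly when $Y_k<W$. To connect this with~\eqref{eq:pendingcut}, we show that $Y_k\ge W$ is equivalent to $q_kW\le c$. If $q_kW\le c$, then $q_{k'}W\le c$ for all $k'\ge k$, so by downward induction $V^P_{k+1}=W$. Hence $Y_k=c+(1-q_k)W\ge W$. Conversely, if $q_kW>c$, then since $V^P_{k+1}\le W$ we get $Y_k\le c+(1-q_k)W<W$. This equivalence also yields the threshold claim in Theorem~\ref{thm:thresholds}.

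For part~(iii), the unsubmitted recursion is
\[
V^U_k=\min\{\kappa+V^P_k,\ w+c+(1-q_k)V^U_{k+1}\},
\]
with $V^U_k=\kappa+V^P_k$ at $k=M$. The quantity $\psi_k$ is exactly the gain from searching once and then submitting, relative to submitting now. We show $\psi_k$ is nondecreasing; we expect this to be the main obstacle. Writing
\[
\psi_k=w+c+(1-q_k)V^P_{k+1}-V^P_k-\kappa q_k,
\]
we plan to use the formula for $V^P_k$ case by case in the regimes of part~(ii). In the search-always regime, $V^P_k=w+Y_k$, so $\psi_k=-\kappa q_k$, which is nondecreasing. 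In the mixed regime, $\psi_k=a(Y_k-\eta)-\kappa q_k$. In the waiting regime, $\psi_k=w+Y_k-W-\kappa q_k$. Each of these is nondecreasing in $k$ because $Y_k$ is nondecreasing and $q_k$ is nonincreasing. Across regime boundaries, continuity of the minimum handles the transition: $V^P_k=\min$ of three affine functions of $Y_k$, so $w+Y_k-V^P_k$ is a nondecreasing function of $Y_k$. This also covers the first regime, where the difference is zero. At $k=M$, $\psi_M=w+c+(1-q_M)W-W-\kappa q_M\ge w$, because $q_M(W+\kappa)\le q_M B_{\max}\le c$.

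With monotonicity in hand, we prove $V^U_k=\kappa+V^P_k$ for $k\ge k_{\rm ask}$ by downward induction. The key point is that once $V^U_{k+1}=\kappa+V^P_{k+1}$, searching at $k$ costs $\kappa+V^P_k+\psi_k$. For $k<k_{\rm ask}$ we have $\psi_k<0$, so searching is strictly better than submitting. Strict inequality propagates downward because $V^U_{k+1}\le\kappa+V^P_{k+1}$.

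For part~(iv), the pending recursion does not involve $\kappa$. Since $q_k>0$ for all $k$ when $p>0$ and $\alpha<1$, for any fixed $k$ we can make $\psi_k<0$ by taking $\kappa$ large. The remaining terms are bounded independently of $\kappa$, because $V^P\le W$. Hence $k_{\rm ask}\to\infty$; the cap $M$ grows with $\kappa$ through $B_{\max}$. For the converse we cite the numerical example following Theorem~\ref{thm:thresholds}: with $\kappa\le1.5$ we have $k_{\rm ask}=0$ while~\eqref{eq:pendingcut} holds at $k=0$, and we verify this directly via $\psi_0\ge0$.
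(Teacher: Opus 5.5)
Your proposal is correct and follows the paper's proof essentially step for step. It uses the same ratio evaluation of the pending plans, the same downward inductions for monotonicity and for $Y_k\ge W\iff q_kW\le c$, the same key fact that $w+Y_k-V^P_k$ is a nondecreasing function of $Y_k$ (the paper writes it as $\max\{0,\,a(Y-\eta),\,w+Y-W\}$), and the same large-$\kappa$ argument.

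A few points need tightening:
\begin{itemize}
\item Your stated reason for discarding the other pending plans covers only the plan that waits in both outcomes. You should also rule out the mixed plans that search in one availability outcome and wait in the other. Their ratios are $w/a+Y_k$ and $w/(1-a)+Y_k$, both at least $w+Y_k$.
\item In part~(iv), the cap $M$ depends on $\kappa$ through $B_{\max}$. So the $\kappa$-independence of the pending values needs Proposition~\ref{app:prop-caps}(ii), or your own part~(ii) fact that $V^P_k=W$ once $q_kW\le c$.
\item Your $p=0.6$ converse example works, but verifying it needs the exact five-step recursion, since the margin in $\psi_0$ at $\kappa=1.5$ is only about $0.03$. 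The paper's $p=0.2$ example closes with crude lower bounds.
\end{itemize}
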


Proposition~\ref{app:prop-blocked} completes Theorem~\ref{thm:thresholds}. In the first regime of part~(ii), a pending agent declines available service when one more search, together with its continuation, costs less than the answer fee. Because $Y_k\ge c$, this requires $\eta>c$. Because $Y_k$ is nondecreasing, a question that the agent is willing to have answered at one failure count remains so at every later count. Under the submission rule of part~(iii), therefore, declined service arises only for questions that are pending while search still looks promising, such as questions pending at the outset or questions submitted with a zero fee.

The zero-fee case also shows why Theorem~\ref{thm:thresholds} refers to a consistent choice at ties. If $\kappa=0$ and $Y_k<\eta$, then $\psi_k=0$, so submitting now and postponing submission are equally good, and the convention that resolves ties toward asking gives $k_{\rm ask}=0$. The resulting policy submits a question and then declines its service, which violates Lemma~\ref{lem:reform}(ii). The alternative convention, which submits at the first $k$ with $Y_k>\eta$, is also optimal and satisfies Lemma~\ref{lem:reform}(ii). Both conventions are threshold policies. With $\kappa>0$, submitting a question and then declining its available service in the same period is strictly dominated by postponing the submission.

Part~(iii) also gives $k_{\rm ask}$ explicitly as a function of the fee. Whenever $q_k>0$, $\psi_k\ge0$ is equivalent to $\kappa\le t_k$, where $t_k=(w+Y_k-V^P_k)/q_k$ does not depend on $\kappa$. Whenever $q_k=0$, which happens for every $k$ if $p=0$ and for $k\ge1$ if $\alpha=1$, $\psi_k=w+Y_k-V^P_k\ge0$ for every fee, because $V^P_k\le w+Y_k$ by part~(i), and we set $t_k=+\infty$. With this convention, $k_{\rm ask}=\min\{k:\kappa\le t_k\}$. For the example after Theorem~\ref{thm:thresholds}, with $w=\eta=1$, $a=0.5$, $c=0.1$, $p=0.6$, and $\alpha=0.5$, the recursion of part~(i) gives $(t_0,\ldots,t_6)\approx(1.60,\,3.06,\,5.96,\,11.72,\,23.00,\,46.13,\,93.07)$, which reproduces the values of $k_{\rm ask}$ reported there. The pending threshold follows from part~(ii): $q_4W\approx0.129>c>q_5W\approx0.067$.

Part~(iv) is the formal version of the statement that the two thresholds have no universal ordering. The example after Theorem~\ref{thm:thresholds} shows both orders: a large submission fee raises $k_{\rm ask}$ above the pending search threshold, and a small one gives $k_{\rm ask}=0$ while a pending agent keeps searching. Appendix~\ref{app:proof-blocked} gives a second example of the latter order.

\subsection{Implementing the Pairwise Coordination Policy}
\label{app:implementation}

\paragraph{Value tables}
PCP uses the values $V_i(s_i)$ and $V_{ij}(s_i,s_j)$ of one- and two-task portfolios, computed by~\eqref{eq:app-dp} with the caps of the full portfolio. By Proposition~\ref{app:prop-caps}(iii), these caps preserve the optimum of each subproblem, so the tables are exact optimal values and carry no truncation error. A resolved task has $V_i(x_i,k_i,R)=w_ix_i$, and its pair value with any other task is additive, because its remaining work uses only its own agent. Hence, with $C_R(s)=\sum_{i\notin\mathcal A(s)}w_ix_i$,
\begin{equation}\label{eq:app-critic-unresolved}
 \widehat V_{\rm PCP}(s)=C_R(s)+\sum_{i\in\mathcal A(s)}V_i(s_i)
 +\sum_{\substack{i<j\\ i,j\in\mathcal A(s)}}\Delta_{ij}(s_i,s_j),
\end{equation}
and only pairs of unresolved tasks require tables. The tables can be computed lazily at the states PCP visits and stored for reuse. In each availability outcome, the next states of distinct tasks are conditionally independent, and each takes at most two values. Evaluating $g_d(s)+P_d\widehat V_{\rm PCP}(s)$ for one plan therefore needs only the marginal law of each task and of each pair, which takes $O(n^2)$ table lookups.

\paragraph{The progress guard}
PCP restricts its minimization in~\eqref{eq:greedy} to $\mathcal D_+(s)$, which excludes plans that leave the state unchanged with probability one. The next lemma shows that this restriction alone ensures that PCP, and any other selector that respects it, completes the portfolio in finite expected time. Let
\begin{equation}\label{eq:app-epsa}
 \varepsilon_a=\begin{cases}\min\{a,1-a\},&0<a<1,\\ 1,&a=1.\end{cases}
\end{equation}

\begin{lemma}[Progress guard]\label{app:lem-progress}
Every plan in $\mathcal D_+(s)$ changes the state with probability at least $\varepsilon_a$. Consequently, under any policy, possibly history dependent or randomized, that selects a plan in $\mathcal D_+(s)$ at every unfinished state $s$, the expected time until all tasks are complete is at most $\mathrm{rk}(s)/\varepsilon_a$, and the expected total cost is finite.
\end{lemma}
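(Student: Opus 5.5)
The argument has two parts. The first is a uniform one-step bound: every plan in $\mathcal D_+(s)$ changes the state with probability at least $\varepsilon_a$. The second combines this with the rank of Proposition~\ref{app:prop-dp}, which strictly decreases at every state change, through a counting argument.

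\textbf{Step 1 (one-step bound).} Fix an unfinished $s$ and $d\in\mathcal D_+(s)$. Within an availability outcome, the actions are fixed by the plan. Agents' search outcomes are independent, and service is deterministic. We argue by cases on what the plan does.

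If some agent works in some availability outcome, the state changes with probability one in that outcome. By rule~(d) of Definition~\ref{app:def-reduced}, an agent with $x_i>0$ works in every outcome unless it searches. A search also changes the state with probability one: a failure increments $k_i$, and a success resolves the task. So if some task has work, the state changes with probability one in every outcome.

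Otherwise all $x_i=0$, and each availability outcome $o$ of positive probability has a change probability $\pi_o$. If the plan submits a question, the state changes in every outcome, since $z_i$ goes from $U$ to $P$ or is served. If, in outcome $o$, the plan serves a question or some agent searches, then $\pi_o=1$, because service resolves and search alters $k_i$ or resolves. Otherwise $\pi_o=0$. Hence the change probability equals $\sum_o \Pr(o)\,\ind{\pi_o=1}$. Since $d\in\mathcal D_+(s)$, this sum is positive, so at least one outcome with $\pi_o=1$ has positive probability. Each outcome has probability $a$ or $1-a$, or probability $1$ when $a=1$. This gives the bound $\varepsilon_a$.

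\textbf{Step 2 (expected time).} By Proposition~\ref{app:prop-dp}, every state change strictly decreases the nonnegative integer $\mathrm{rk}$. The number of state changes before termination is therefore at most $\mathrm{rk}(s)$.

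For any policy selecting plans in $\mathcal D_+$, possibly randomized and history dependent, Step~1 holds conditionally on the history. So the waiting time between consecutive changes is stochastically dominated by a geometric variable with mean $1/\varepsilon_a$. Formally, let $\tau_m$ be the number of periods spent while exactly $m$ changes have occurred. Each period is a change with conditional probability at least $\varepsilon_a$, so $\E[\tau_m]\le 1/\varepsilon_a$. An optional-stopping or supermartingale argument also works here: $\mathrm{rk}(s_t)+\varepsilon_a t$ has nonpositive drift. Summing over at most $\mathrm{rk}(s)$ changes gives expected time at most $\mathrm{rk}(s)/\varepsilon_a$.

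\textbf{Step 3 (finite cost).} The per-period cost is bounded by $\sum_i w_i + \sum_i c_i + \max_i\kappa_i + \eta$. Per-question fees are additionally bounded, since each question is submitted at most once and answered at most once. The expected total cost is therefore at most a constant times the expected time plus $\sum_i \kappa_i + n\eta$, which is finite.

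The main obstacle is the case analysis in Step~1. It must be verified that a plan with positive change probability cannot have change probability strictly between $0$ and the probability of some availability outcome. This rests on the observation that, in every outcome, searches, work and service each change the state deterministically in the sense that matters. A search may succeed or fail, but either result alters the state.
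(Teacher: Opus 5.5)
Your proposal is correct and follows essentially the same route as the paper. Within each availability outcome, submission, service, search, and work each change the state with probability one, so a plan's change probability lies in $\{0,a,1-a,1\}$ and is at least $\varepsilon_a$ on $\mathcal D_+(s)$; the strict decrease of $\mathrm{rk}$ (Lemma~\ref{app:lem-rankdp}) together with geometric waiting times then gives $\mathrm{rk}(s)/\varepsilon_a$, and bounded per-period costs give a finite total cost, just as in the paper, while your supermartingale remark on $\mathrm{rk}(S_t)+\varepsilon_a t$ is a valid alternative to the per-change geometric bound.
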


\paragraph{Coordinate search}
The number of plans in $\mathcal D_+(s)$ grows exponentially in $n$, because each agent chooses between search and work in each availability outcome. Given the submission decision, the objective in~\eqref{eq:greedy} is the probability-weighted sum of two terms, one for each availability outcome, so the two outcomes can be optimized separately. The progress guard couples them only through the requirement that at least one outcome change the state, which can be handled by retaining, for each outcome, the best candidate that changes the state and the best candidate that does not. Our implementation enumerates the submission decision: no submission, or the question of one unsubmitted task. For each submission decision and availability outcome, it optimizes the remaining choices by coordinate search from two deterministic starting plans, one in which no agent searches and one in which every eligible agent searches. Each sweep optimizes the service choice over all eligible questions and then each agent's search-or-work choice in turn. The reference implementation uses two sweeps. It then selects the best combination of the visited candidates that satisfies the guard. Plans whose objectives exceed the minimum $m$ by at most the absolute tolerance $\tau(s)=10^{-12}\max\{1,|m|\}$ are treated as tied. Among tied plans, PCP prefers a plan that serves a question, then a plan that submits a question, then a plan with fewer searches, and finally the first plan in a fixed order of task indices. We call this the \emph{service-preferring} rule. The \emph{no-submission} rule, used for comparison, prefers no submission and then no service among tied plans. Appendix~\ref{app:numerics} reports the settings used in each experiment.

Coordinate search need not find a minimizer of~\eqref{eq:greedy}, so results that assume exact minimization apply to it only where its candidates include a minimizer. One case is important: in the human-resolution regime of Appendix~\ref{app:human}, no search choices remain, and a sweep that tries every eligible service choice evaluates every plan in $\mathcal D_+(s)$. Because of the tolerance, the selected plan is then within $\tau(s)$ of the minimum. This does not affect Lemma~\ref{app:lem-priority} or Theorem~\ref{thm:pcp_guarantee}(i) as long as, at every state visited, $\tau(s)$ is smaller than the weight of every unresolved task and every positive difference between the weights of two unresolved tasks, because the objective differences used in their proofs are at least that large (Appendix~\ref{app:proof-human}). The condition concerns the absolute tolerance $\tau(s)$, which grows with the objective, not the factor $10^{-12}$. In our experiments every objective is below $10^5$, so $\tau(s)<10^{-7}$, whereas every weight and every positive difference between two weights is at least $0.5$. Appendix~\ref{app:defect} quantifies the effect of inexact minimization in general, and Appendix~\ref{app:golden} shows that the tie rule matters.

\subsection{The price of attention when human service is unavoidable}
\label{app:human}

We say that a state $s$ is in the \emph{human-resolution regime} if no unresolved task has an admissible search, that is, $k_i\ge M_i$ for every $i\in\mathcal A(s)$. Because $q_i(k_i)=0$ implies $k_i\ge M_i$, the regime contains every state at which no remaining search can succeed, which is the setting of Theorem~\ref{thm:pcp_guarantee}(i). The proof of the theorem uses only that no admissible search remains, so the theorem, like every result in this subsection, holds throughout the regime. In this regime, every unresolved task needs exactly one human answer, every agent works until its work is finished, and the regime is invariant. Throughout this subsection, write $\bar a=1-a$, with the convention $0^0=1$ when $a=1$, let $f_i=\eta+\kappa_i\ind{z_i=U}$ be task $i$'s unpaid fees, and let $G$ be a geometric random variable on $\{1,2,\ldots\}$ with success probability $a$.

\begin{lemma}[Singleton and pair values]\label{app:lem-humanpair}
In the human-resolution regime, for unresolved tasks $i$ and $j$,
\begin{align}
 V_i(s_i)&=f_i+w_i\left(x_i+\frac{\bar a^{x_i}}a\right),\label{eq:app-human-singleton}\\
 \Delta_{ij}(s_i,s_j)&=\frac1a\,\E\!\left[\min\Bigl\{w_i\bar a^{(x_i-G)^+},\;w_j\bar a^{(x_j-G)^+}\Bigr\}\right].\label{eq:app-human-pair}
\end{align}
The price $\Delta_{ij}$ is nondecreasing in each task's weight, nonincreasing in each task's remaining work, and satisfies $0\le\Delta_{ij}\le\min(w_i,w_j)/a$, with equality on the right when $x_i,x_j\le1$. When $a=1$, $\Delta_{ij}=\min(w_i,w_j)\ind{x_i\le1,\ x_j\le1}$.
\end{lemma}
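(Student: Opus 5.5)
The plan is to compute both values in closed form by showing that, in the human-resolution regime, the optimal policy is trivial up to one scheduling decision. The main tool is a pathwise lower bound on completion times, together with an exchange argument.

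In this regime no admissible search remains, so each unresolved task needs exactly one answer. Its fees $f_i$ are therefore paid under every policy that completes it: the submission fee if $z_i=U$ and the answer fee $\eta$. By Lemma~\ref{lem:reform}(i) and rule~(d) of Definition~\ref{app:def-reduced}, each agent works every period until $x_i=0$. Hence task $i$ completes at $C_i=\max\{x_i,S_i\}$, where $S_i$ is the period in which its question is served. Since $C_i$ is nondecreasing in $S_i$ and the fees do not depend on timing, serving earlier never hurts.

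\textbf{Singleton.} Let $G$ be the first period in which the human is available. Every policy has $S_i\ge G$ pathwise. The policy that submits in the first period and serves at $G$ attains $S_i=G$, because a question submitted in a period may be served in that same period. Thus $V_i(s_i)=f_i+w_i\E[\max\{x_i,G\}]$. Using $\max\{x,G\}=x+(G-x)^+$, the fact that $P(G>x)=\bar a^{x}$, and memorylessness, we get $\E[(G-x)^+]=\bar a^{x}/a$, which is \eqref{eq:app-human-singleton}.

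\textbf{Pair.} Let $G$ be the first availability and $G+G'$ the second, where $G'$ is an independent copy of $G$. The two services occupy distinct available periods. So for any policy, one task has $S\ge G$ and the other has $S\ge G+G'$. The identity of which task is served first may depend on history up to $G$, but it cannot depend on $G'$.

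Condition on $G=g$. If $i$ is served at $g$ and $j$ at the next availability, the cost exceeds the singleton costs by
\[
 w_j\bigl(\E[\max\{x_j,g+G'\}]-\max\{x_j,g\}\bigr)=\frac{w_j}{a}\,\bar a^{(x_j-g)^+}.
\]
When $g\ge x_j$ the wait is a fresh geometric time. When $g<x_j$, apply the singleton computation with $x_j-g$ work units remaining. Symmetrically, serving $j$ first costs an excess of $(w_i/a)\,\bar a^{(x_i-g)^+}$.

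The lower bound from the previous paragraph shows that no policy can do better than choosing, at $G$, the smaller of these two excesses. That choice is implementable: both submissions are made in the first two periods (at most one new question per period, with the second submitted no later than its service period). At $G$ the policy serves the task with the larger excess, and at the next availability it serves the other.

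By linearity, $V_i+V_j$ equals the same expectation with each task served at $G$. Subtracting gives \eqref{eq:app-human-pair}. The main obstacle is making this lower bound rigorous against arbitrary nonanticipative policies. Some policies leave the human idle at $G$, postpone submissions, or interleave decisions. I would handle these with a pathwise exchange: move each service to the earliest available period that is still unused, and move the corresponding submission forward. This weakly decreases every $C_i$ and leaves the fees unchanged.

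\textbf{Properties.} Each term inside the expectation in \eqref{eq:app-human-pair} is nondecreasing in its weight. Since $\bar a\le1$, each term is also nonincreasing in its task's work. Both terms lie in $[0,w]$, so $0\le\Delta_{ij}\le\min(w_i,w_j)/a$. If $x_i,x_j\le1$, then $(x-G)^+=0$ because $G\ge1$, which gives equality. When $a=1$ we have $G=1$ and $\bar a=0$. With the convention $0^0=1$, each term equals $w$ if $x\le1$ and $0$ otherwise, and taking the minimum yields $\min(w_i,w_j)\ind{x_i\le1,\ x_j\le1}$.
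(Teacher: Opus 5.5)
Your route matches the paper's: completion at $\max\{x_k,S_k\}$, the singleton value from the first available period $G$, and the pair value from the independent geometric gap before the second answer. There the first-answered task is the one with the larger excess $\zeta_k(g)=(w_k/a)\bar a^{(x_k-g)^+}$. The properties are read off pointwise, and your singleton computation, the implementability of the pair policy, and the properties are all correct. The gap is in the step you flag as the main obstacle: the lower bound for policies that do not answer at $G$.

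Your pathwise exchange moves each answer to the earliest unused available period, and it is not nonanticipative. Suppose a policy idles at $G$ and later chooses which task to answer first using availability observed after $G$. Then in the exchanged schedule, the task answered at $G$ depends on $G'$, which is exactly what your next step assumes cannot happen. This matters, because such schedules beat the claimed value. Take $a=1/2$, $x_i=2$, $x_j=3$, $w_i=1$, $w_j=10$. Let a policy leave the human idle at $G=1$, and at the next availability $1+G'$ answer $i$ if $G'\le 2$ and $j$ otherwise. After your exchange, given $G=1$, the task answered second is $j$ when $G'\le2$, with excess $10(G'-2)^+=0$, and $i$ when $G'\ge3$, with excess $G'-1$. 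The expected excess is $\sum_{k\ge3}(k-1)2^{-k}=3/4$. Your formula instead needs $\min\{\zeta_i(1),\zeta_j(1)\}=\min\{1,5\}=1$. So the chain ``original cost $\ge$ exchanged cost $\ge$ claimed value'' breaks at the second inequality.

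There are two repairs. The first is to do the exchange online. Submit both questions in the first period, which leaves the fees unchanged. In every available period, answer the task the original policy answers if it is still unresolved, and otherwise answer any unresolved task. This rule is nonanticipative, answers every task no later pathwise, and never idles. Its first choice is therefore determined at $G$, and your conditioning on $G$ applies. The second repair, which is what the paper does, avoids the exchange. Condition on the information at the first answer time $T_{(1)}$; this is a stopping time and it determines which task is answered second. The gap to the second answer dominates an independent geometric, so the conditional expected excess is at least $\min\{\zeta_i(T_{(1)}),\zeta_j(T_{(1)})\}$. Since $h_k(t)=w_k\max\{x_k,t\}$ and $\min\{\zeta_i(t),\zeta_j(t)\}$ are both nondecreasing in $t$ and $T_{(1)}\ge G$, the bound $\E[h_i(G)+h_j(G)+\min\{\zeta_i(G),\zeta_j(G)\}]$ follows directly, with no reduction to policies that answer at $G$.
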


The term $w_i\bar a^{(x_i-t)^+}/a$ in~\eqref{eq:app-human-pair} is the expected additional delay cost of task $i$ if its answer is postponed from the first available period $t$ to the next one: postponement matters only if the next opportunity arrives after task $i$'s work is finished, which happens with probability $\bar a^{(x_i-t)^+}$. The pairwise price of attention is the expected cost of postponing the cheaper of the two tasks. Independent work therefore hides congestion: the more work both tasks have, the more likely the human can answer both before either task needs its answer. Proposition~\ref{prop:pair_exact}'s price $\min(w_i,w_j)/a$ is the case in which neither task has more than one unit of work. Appendix~\ref{app:golden} shows that, when $a=1$, this can hide a bottleneck that only groups of three or more tasks create.

The next lemma formalizes the property of PCP described after Theorem~\ref{thm:pcp_guarantee}.

\begin{lemma}[PCP service priorities]\label{app:lem-priority}
Let $s$ be in the human-resolution regime, and let PCP select an exact minimizer of~\eqref{eq:greedy} with any tie rule. If some unresolved task $i$ has $x_i\le1$, then PCP serves a question whenever the human is available, and every task $j$ it serves satisfies $w_j\ge w_i$.
\end{lemma}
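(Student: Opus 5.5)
We prove both claims by comparing, in the objective of~\eqref{eq:greedy}, a candidate plan with a modified plan, using the closed forms of Lemma~\ref{app:lem-humanpair}. In the human-resolution regime no searches remain. A plan is therefore determined by its submission decision and by its service choice in the available outcome, since every agent works whenever it has work. The regime is invariant, so the successors of the state are also in the regime, and $H=\widehat V_{\rm PCP}$ at those states is given by~\eqref{eq:app-critic-unresolved} together with~\eqref{eq:app-human-singleton} and~\eqref{eq:app-human-pair}. In the available outcome, serving task $j$ has only two effects: task $j$ leaves $\mathcal A$, and the fee $\eta$ is paid. The change in the objective from serving $j$ instead of nothing is
\begin{equation*}
\eta + w_j x_j' - V_j(s_j') - \sum_{l\ne j}\Delta_{jl}(s_j',s_l'),
\end{equation*}
where primes denote post-period states. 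All other terms are common to the two plans.

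\textbf{Step 1 (service is strictly better than idling).} Since $V_j(s_j')-w_jx_j'=f_j+w_j\bar a^{x_j'}/a$ and $f_j\ge\eta$ (after submission the unpaid fee is exactly $\eta$), serving $j$ saves at least $w_j\bar a^{x_j'}/a+\sum_{l}\Delta_{jl}\ge 0$. Take $j=i$ with $x_i\le1$, so that $x_i'=0$. The saving is then at least $w_i/a>0$. Serving $i$ is feasible: if $z_i=U$, the plan submits $i$ and serves it, as allowed by rule~(a); if $z_i=P$, the plan simply serves $i$. We must also make sure that switching the submission decision does not create a compensating loss in the unavailable outcome. The submission fee $\kappa_i$ appears in both branches, and $f_i$ drops by $\kappa_i$ either way. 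So submitting $i$ and not serving it is neutral in $H$, and serving it gives the strict gain $w_i/a$. Hence an exact minimizer never leaves the human idle when it is available, whatever the tie rule.

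\textbf{Step 2 (priority by weight).} Suppose PCP serves some $j$ with $w_j<w_i$. We compare this with serving $i$ instead in the same outcome, adjusting the submission if needed by the same neutrality argument. The objective difference, serving $j$ minus serving $i$, equals
\begin{equation*}
\Bigl[\tfrac{w_i\bar a^{x_i'}}{a}+\textstyle\sum_{l\ne i,j}\Delta_{il}\Bigr]-\Bigl[\tfrac{w_j\bar a^{x_j'}}{a}+\textstyle\sum_{l\ne i,j}\Delta_{jl}\Bigr].
\end{equation*}
The pair term $\Delta_{ij}$ cancels, because it vanishes in both cases. By the first bracket, with $x_i'=0$, the left-hand singleton term is $w_i/a$, while the right-hand one is at most $w_j/a$. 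For the cross terms, $\Delta_{il}(0,x_l')\ge\Delta_{jl}(x_j',x_l')$. This follows from the monotonicity in Lemma~\ref{app:lem-humanpair}: $\Delta$ is nondecreasing in weight and nonincreasing in work, and task $i$ has both larger weight and less remaining work. The difference is therefore at least $(w_i-w_j)/a>0$, so serving $j$ is strictly worse, a contradiction.

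\textbf{Main obstacle.} The delicate step is the bookkeeping of submissions across the two availability outcomes. The plan submits before availability is revealed, and rule~(a) allows only one new submission per period. Changing which task is served may therefore force a different submission, which changes the unavailable branch. We plan to handle this by showing that the $H$-value of the unavailable branch is invariant to which unresolved task is submitted: the state changes only through $z$, from $U$ to $P$, which moves $\kappa$ from the future to the present. Care is also needed with the exact post-period work values $x'$. A served task finishes its unit of work in the same period, and this is what makes $x_i'=0$.
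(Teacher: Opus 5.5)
Your proposal is correct and follows essentially the paper's route: the paper folds your submission-neutrality bookkeeping into the single identity $g_d(s)+P_d\widehat V_{\rm PCP}(s)=\omega(s)+\sum_{i\in\mathcal A}f_i+(1-a)\Phi(s'_0)+a\Phi(s'_d)$, and then compares the service gains $\Gamma_j$ as you do, with $\Delta_{ij}$ cancelling and the remaining pair prices dominated by monotonicity. One small slip: your objective differences are conditional on the human being available, so unconditionally they carry a factor $a$, giving margins $w_i$ and $w_i-w_j$ rather than $w_i/a$ and $(w_i-w_j)/a$, which does not change any sign.
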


The proof of Theorem~\ref{thm:pcp_guarantee}(i) also yields a sharper, interpolating form of the factor-two bound.

\begin{corollary}[Interpolating bound and an exact boundary]\label{app:cor-interp}
In the human-resolution regime, let $K$, $B$, $Q$, and $w_{\mathcal A}$ be as in Theorem~\ref{thm:pcp_guarantee}(i), and let PCP select exact minimizers. Then
\begin{equation}\label{eq:app-interp}
 \frac{J^{\rm PCP}(s)}{V^*(s)}\le1+\frac{\min\{B,Q\}-w_{\mathcal A}}{K+\max\{B,Q\}}\le2.
\end{equation}
If, in addition, $x_i\le1$ for every unresolved task, then $\widehat V_{\rm PCP}(s)=V^*(s)=K+Q$, and PCP is optimal from $s$.
\end{corollary}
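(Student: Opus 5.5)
The plan is to reuse the two ingredients behind Theorem~\ref{thm:pcp_guarantee}(i): the upper bound $J^{\rm PCP}(s)\le K+B+Q-w_{\mathcal A}$ and two separate lower bounds on the optimum,
\begin{equation*}
 V^*(s)\ge K+B,\qquad V^*(s)\ge K+Q.
\end{equation*}
I take the upper bound as given from the proof of part~(i). It rests on Lemma~\ref{app:lem-priority} together with charging each extra service delay to a distinct pair at cost at most $\min(w_i,w_j)/a$ (Lemma~\ref{app:lem-humanpair}). If the two lower bounds are also available, then $V^*\ge K+\max\{B,Q\}$, and since $B+Q=\max\{B,Q\}+\min\{B,Q\}$,
\begin{equation*}
 \frac{J^{\rm PCP}(s)}{V^*(s)}\le\frac{K+\max\{B,Q\}+\min\{B,Q\}-w_{\mathcal A}}{K+\max\{B,Q\}}
 =1+\frac{\min\{B,Q\}-w_{\mathcal A}}{K+\max\{B,Q\}}.
\end{equation*}
The final inequality $\le2$ holds because $\min\{B,Q\}-w_{\mathcal A}\le\max\{B,Q\}\le K+\max\{B,Q\}$, using $K,w_{\mathcal A}\ge0$.

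\textbf{First lower bound, $V^*\ge K+B$.} This is a pathwise argument. In the human-resolution regime every unresolved task needs exactly one human answer, so it pays $\eta$, plus $\kappa_i$ if it is still unsubmitted; these fees sum to the unresolved part of $K$. Each resolved task pays $w_ix_i$ in delay while it finishes its work. Each unresolved task $i$ stays incomplete for at least $x_i$ periods, because its agent works at most one unit per period. It also stays incomplete for at least one period, because the answer takes effect at the end of a period. Its delay is therefore at least $w_i\max\{x_i,1\}$, and summing gives $K+B$.

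\textbf{Second lower bound, $V^*\ge K+Q$.} This is the step I expect to need the most care. Fix the availability sequence and let $T_1<T_2<\dots$ be the available periods. Under any policy, at most one question is served per available period, so the $k$-th unresolved task to be answered completes no earlier than $T_k$. By the rearrangement inequality, the weighted sum $\sum_k w_{\sigma(k)}T_k$ is minimized by assigning the weights in descending order to increasing $T_k$, whatever policy produced the order $\sigma$. Taking expectations and using $\E T_k=k/a$ gives total delay at least
\begin{equation*}
 \frac1a\sum_k k\,w_{(k)}=\frac1a\Bigl(w_{\mathcal A}+\sum_{i<j}\min(w_i,w_j)\Bigr)=Q,
\end{equation*}
where $w_{(k)}$ is the $k$-th largest weight. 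The subtle point is that the order $\sigma$ may depend on the realized availability. Because the rearrangement bound holds pathwise for every sequence, this adaptivity does no harm. Adding the fees and the resolved tasks' work costs, which are the same as in the first bound, gives $K+Q$.

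\textbf{Exact boundary.} If $x_i\le1$ for every unresolved task, then $\max\{x_i,1\}=1$, so $B=w_{\mathcal A}$. The upper bound becomes $J^{\rm PCP}\le K+Q$, and together with $V^*\ge K+Q$ this forces $J^{\rm PCP}=V^*=K+Q$, so PCP is optimal from $s$. To show that $\widehat V_{\rm PCP}(s)=K+Q$ as well, I evaluate~\eqref{eq:app-critic-unresolved} using Lemma~\ref{app:lem-humanpair}. For $x_i\in\{0,1\}$ one checks $x_i+\bar a^{x_i}/a=1/a$, so $V_i=f_i+w_i/a$. The same lemma gives $\Delta_{ij}=\min(w_i,w_j)/a$ when $x_i,x_j\le1$. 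Summing these with $C_R(s)$ yields exactly $K+w_{\mathcal A}/a+\sum_{i<j}\min(w_i,w_j)/a=K+Q$.
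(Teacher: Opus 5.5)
Your proposal is correct and matches the paper's proof. You combine the upper bound $J^{\rm PCP}\le K+B+Q-w_{\mathcal A}$ from the proof of Theorem~\ref{thm:pcp_guarantee}(i) with $V^*\ge K+\max\{B,Q\}$ and divide; for the exact boundary you use $B=w_{\mathcal A}$ and Lemma~\ref{app:lem-humanpair} to evaluate $\widehat V_{\rm PCP}(s)$. The only cosmetic difference is that you derive $V^*\ge K+Q$ by applying the pathwise rearrangement bound directly with work present, whereas the paper first deletes the remaining work and then invokes~\eqref{eq:app-allhuman}, whose proof is the same rearrangement argument.
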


The ratio bound in~\eqref{eq:app-interp} is close to one when either independent work ($B$) or congestion ($Q$) dominates, and it can approach two only when $B$ and $Q$ are comparable and both are large relative to $K$ and $w_{\mathcal A}$. Hard instances for pairwise prices therefore balance work against congestion, as in the family of Appendix~\ref{app:golden}. The second statement extends the exact boundary of Proposition~\ref{prop:pair_exact} from tasks without work to tasks with at most one unit of work, because an answer received in the next available period overlaps that unit.

\subsection{Tie-breaking and a golden-ratio example}
\label{app:golden}

Theorem~\ref{thm:pcp_guarantee}(i) holds for every tie rule. The next lemma shows that ties can nonetheless matter, because the pairwise approximation can assign the same value to serving a question now and serving it later.

\begin{lemma}[Homogeneous portfolios]\label{app:lem-phi}
Consider $n\ge2$ initially unsubmitted tasks with $w_i=1$, common work $b_i=b$, and $p_i=0$, and let $a=1$ and $\kappa_i=\eta=0$. Let PCP use the no-submission tie rule of Appendix~\ref{app:implementation}. Then
\begin{equation}\label{eq:app-phi}
 V^*=\sum_{j=1}^n\max\{b,j\},\qquad
 J^{\rm PCP}=\begin{cases}n(n+1)/2,&b\le1,\\ nb+(n-1)(n-2)/2,&b\ge2.\end{cases}
\end{equation}
On this class, $J^{\rm PCP}/V^*\le\varphi=(1+\sqrt5)/2$, and the ratio tends to $\varphi$ along $b=\lfloor n/\varphi\rfloor$ as $n\to\infty$. PCP with any tie rule that prefers, among tied plans, a plan that serves a question, including the service-preferring rule, is optimal on this class.
\end{lemma}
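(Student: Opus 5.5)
The plan is to reduce everything to the human-resolution regime and then compute all quantities explicitly. Since $p_i=0$, every $q_i(k)=0$, so $M_i=0$ and no admissible search remains. With $a=1$ the human is available every period. With $\kappa_i=\eta=0$ there are no fees, so the cost is the sum of completion times. A task whose question is answered in period $t$ completes at the end of period $\max\{b,t\}$, because by Lemma~\ref{lem:reform}(i) it works every period. For $V^*$, the answer periods of distinct tasks are distinct positive integers, so the $j$-th smallest is at least $j$. Hence the total cost is at least $\sum_j\max\{b,j\}$. Serving one question per period from period $1$, each submitted in the period it is served, attains this bound, which gives the first formula in~\eqref{eq:app-phi}.

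For PCP I would first specialize Lemma~\ref{app:lem-humanpair} to $a=1$. This gives $V_i(s_i)=x_i+\ind{x_i=0}$ and $\Delta_{ij}=\ind{x_i\le1,\ x_j\le1}$. By~\eqref{eq:app-critic-unresolved}, at a state in which the $m$ unresolved tasks share common work $x$ and the resolved tasks carry their own remaining work,
\[
H(s)=\sum_i x_i+m\,\ind{x=0}+\binom m2\ind{x\le1}.
\]
By symmetry, the tasks always share a common remaining work $x$, equal to $b$ minus the elapsed time, until it reaches $0$. I would then compare the two kinds of plans in~\eqref{eq:greedy}.
\begin{itemize}
\item \emph{Not serving.} The cost is $g=\omega$ plus $H$ at $(x-1,m)$.
\item \emph{Serving one task.} This requires submitting it, and its fee is zero. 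The cost is the same $\omega$ plus $H$ at $(x-1,m-1)$, where the served task now contributes only its remaining work.
\end{itemize}
If $x\ge3$, the successor has $x-1\ge2$. Then all pair terms vanish in both cases and the $\ind{x=0}$ term is absent, so the two values are equal. The no-submission rule therefore does not serve. If $x=2$, the successor has common work $1$, and serving lowers the pair term from $\binom m2$ to $\binom{m-1}2$, so serving is strictly better. From then on some unresolved task has $x_i\le1$, and Lemma~\ref{app:lem-priority} forces service in every period. For $b\ge2$, PCP thus answers in periods $b-1,b,\dots,b+n-2$. The completion times are $b,b,b+1,\dots,b+n-2$, which sum to $nb+(n-1)(n-2)/2$. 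For $b\le1$, Lemma~\ref{app:lem-priority} applies from the start, so PCP serves in periods $1,\dots,n$ and its cost is $n(n+1)/2$, which is optimal. A further check is needed at $x\le1$ that serving beats not serving strictly, via the $\binom m2$ term, or that Lemma~\ref{app:lem-priority} covers this case directly; it does.

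For a tie rule that prefers serving, the computation above shows that a serving plan always attains the minimum: it ties with not serving when $x\ge3$ and is strictly better otherwise. Such a rule therefore serves every period from period $1$ and attains $V^*$. Lemma~\ref{app:lem-priority} covers the states with $x\le1$.

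The ratio analysis is where the real work lies.
\begin{itemize}
\item \emph{Case $b\ge n$.} Here $V^*=nb$, so the ratio is at most $1+(n-1)(n-2)/(2n^2)<3/2<\varphi$.
\item \emph{Case $2\le b\le n$.} Here $V^*=b^2+\bigl(n(n+1)-b(b+1)\bigr)/2$. I would show that $\varphi V^*-J^{\rm PCP}\ge0$ as a quadratic inequality in $(n,b)$. Its leading part is $\tfrac{\varphi}{2}(n^2+b^2)-nb-\tfrac{n^2}{2}$, which is a nonnegative quadratic form because its discriminant vanishes exactly at $b/n=1/\varphi$. The lower-order terms must then be checked to have the right sign, possibly handling small $n$ separately. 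I expect this to be the main obstacle.
\end{itemize}
For the limit, I would set $b=\lfloor n/\varphi\rfloor$, so $b/n\to\beta=1/\varphi$. Then the ratio tends to $(2\beta+1)/(1+\beta^2)$. This is maximized where $\beta^2+\beta-1=0$, that is, at $\beta=1/\varphi$, and its value there equals $\varphi$.
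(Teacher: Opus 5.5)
Your proposal is correct and follows essentially the same route as the paper. You use the same four ingredients: the sorted-answer-time bound for $V^*$; the one-step comparison of serving versus not serving, where your difference of $H$ values is exactly the paper's $\Gamma_j=\ind{y=0}+(m-1)\ind{y\le1}$ with $y=(x-1)^+$; Lemma~\ref{app:lem-priority} once some task has at most one unit of work; and the identity $\varphi(n^2+b^2)-n^2-2nb=\varphi(b-n/\varphi)^2$ for the ratio. The step you flag as the main obstacle is immediate, because $2(\varphi V^*-J^{\rm PCP})=\varphi(b-n/\varphi)^2+\varphi(n-b)+3n-2>0$ for $2\le b\le n$, so no small-$n$ cases arise (the paper reaches the same conclusion through the intermediate bound $(n^2+2nb)/(n^2+b^2)$).
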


For example, with $n=12$ and $b=9$, the optimal cost is $114$, whereas PCP with the no-submission rule costs $163$, about 43\% more. Every pair of tasks can be answered within its remaining work, so every pairwise price at the next state is zero until two units of work remain, and serving a question early has the same value as waiting. The no-submission rule then postpones all questions until the prices turn positive, at which point the human can no longer keep up. The bottleneck is visible only to groups of three or more tasks. For the no-submission rule, no factor smaller than $\varphi$ can therefore replace the factor two of Theorem~\ref{thm:pcp_guarantee}(i). We do not know whether the factor two is tight for the service-preferring rule or for any other tie rule.

\subsection{The pairwise Bellman defect}
\label{app:defect}

This subsection analyzes Theorem~\ref{thm:pcp_guarantee}(ii). Let $\mathcal S$ be a finite set of states of the capped model that contains the initial state and is \emph{forward closed}: for every unfinished $s\in\mathcal S$ and every $d\in\mathcal D_+(s)$, every $s'$ with $P_d(s,s')>0$ belongs to $\mathcal S$. The set of all capped states reachable from the initial state is forward closed. Write $H=\widehat V_{\rm PCP}$, with $H=0$ at terminal states, so that $\rho=\mathcal TH-H$ by~\eqref{eq:pcp_residual} and~\eqref{eq:app-T}. For a selector $\pi$ that chooses $\pi(s)\in\mathcal D_+(s)$, for example coordinate search, define the selection error
\begin{equation}\label{eq:app-selection-error}
 e_\pi(s)=g_{\pi(s)}(s)+P_{\pi(s)}H(s)-(\mathcal TH)(s)\ge0.
\end{equation}
Exact PCP has $e_\pi\equiv0$.

\paragraph{Decomposition}
Allocate a plan's expected immediate cost to tasks by
\[
 g_i^d(s)=w_i\ind{i\text{ incomplete}}+c_i\Pr_d\{i\text{ searches}\}
 +\kappa_i\ind{i\text{ submits}}+\eta\Pr_d\{i\text{ is served}\},
\]
so that $\sum_ig_i^d=g_d$. Define the singleton and pair regrets of plan $d$,
\begin{align*}
 \lambda_i^d(s)&=g_i^d(s)+\E_d[V_i(S'_i)]-V_i(s_i),\\
 \lambda_{ij}^d(s)&=g_i^d(s)+g_j^d(s)+\E_d[V_{ij}(S'_i,S'_j)]-V_{ij}(s_i,s_j).
\end{align*}

\begin{lemma}[Pairwise decomposition of the Bellman defect]\label{app:lem-defect}
Every $\lambda_i^d$ and $\lambda_{ij}^d$ is nonnegative. With $n$ the total number of tasks, including resolved ones,
\begin{equation}\label{eq:app-defect-full}
 g_d+P_dH-H=\sum_{i<j}\lambda_{ij}^d-(n-2)\sum_i\lambda_i^d,
\end{equation}
where the sums run over all tasks. Equivalently, with $m=|\mathcal A(s)|$,
\begin{equation}\label{eq:app-defect}
 \rho(s)=\min_{d\in\mathcal D_+(s)}\left\{\sum_{\substack{i<j\\ i,j\in\mathcal A(s)}}\lambda_{ij}^d(s)-(m-2)\sum_{i\in\mathcal A(s)}\lambda_i^d(s)\right\},
\end{equation}
where an empty sum is zero.
\end{lemma}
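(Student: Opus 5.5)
The plan has three steps: prove nonnegativity of the regrets from Bellman inequalities, get identity \eqref{eq:app-defect-full} by pure algebra, and then drop resolved tasks to obtain \eqref{eq:app-defect}.

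\textbf{Nonnegativity.} I would show that each regret is nonnegative by a projection argument. Take a plan $d\in\mathcal D_+(s)$ of the full portfolio and project it onto task $i$, or onto the pair $\{i,j\}$. Task $i$ keeps its own submission, its search-or-work choices in each availability outcome, and service exactly when $d$ serves $i$. Since $d$ serves at most one question per period, the projection is a feasible (possibly randomized, via the availability outcome) one-period plan of the subproblem. Its immediate cost is $g_i^d$, or $g_i^d+g_j^d$ for a pair. Its transition law is the marginal of $P_d$ on the coordinates of $i$, or of $(i,j)$. This uses the independence of search outcomes across tasks and the fact that each task's transition depends only on its own action and on whether it is served. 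The singleton subproblem even has its own human, which only relaxes feasibility.

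Because $V_i$ and $V_{ij}$ are optimal values of their capped subproblems (Proposition~\ref{app:prop-caps}(iii)), they satisfy the one-period inequality $V\le g+\E[V(\cdot)]$ for every feasible plan, self-transitions included. This holds since $V=\mathcal T V$ with the minimum taken over all feasible plans, as in Proposition~\ref{app:prop-dp}. One care point: the projected plan may lie outside the subproblem's $\mathcal D_+$, for example because it leaves the state unchanged. I would handle this with the remark in Proposition~\ref{app:prop-dp} that the recursion is unchanged over all feasible plans. If a projected plan never changes the subproblem state, the inequality $V\le g+V$ holds trivially, because $g\ge0$. Hence $\lambda_i^d\ge0$ and $\lambda_{ij}^d\ge0$.

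\textbf{The identity.} Rewrite $H=\sum_i V_i+\sum_{i<j}(V_{ij}-V_i-V_j)=\sum_{i<j}V_{ij}-(n-2)\sum_iV_i$, which holds because each $i$ appears in $n-1$ pairs. Likewise $g_d=\sum_i g_i^d=\sum_{i<j}(g_i^d+g_j^d)-(n-2)\sum_i g_i^d$. By linearity of expectation, $P_dH=\sum_{i<j}\E_d[V_{ij}(S_i',S_j')]-(n-2)\sum_i\E_d[V_i(S_i')]$, and only marginals enter. Summing these three expressions gives \eqref{eq:app-defect-full} directly. Terminal conventions are consistent because resolved, completed tasks have $V_i=0$.

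\textbf{Reduction to unresolved tasks.} For a resolved task $i$, $V_i=w_ix_i$, and under rule (d) it works deterministically. Hence $\lambda_i^d=0$: its cost is $w_i$ if it is incomplete, and $\E V_i(S_i')=w_i(x_i-1)$. For a pair with $i$ resolved, $V_{ij}=V_i+V_j$ by additivity, noted in Appendix~\ref{app:implementation}, so $\lambda_{ij}^d=\lambda_j^d$. Substituting, with $r=n-m$ resolved tasks, the pairs mixing a resolved and an unresolved task contribute $r\sum_{j\in\mathcal A}\lambda_j^d$. Pairs of resolved tasks contribute $0$. The singleton term contributes $-(n-2)\sum_{j\in\mathcal A}\lambda_j^d$. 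Since $r-(n-2)=-(m-2)$, these combine to $\sum_{i<j\in\mathcal A}\lambda_{ij}^d-(m-2)\sum_{\mathcal A}\lambda_i^d$. Taking the minimum over $d$ and using $\rho=\mathcal TH-H$ gives \eqref{eq:app-defect}.

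The main obstacle is the projection step: checking that the projected plan is admissible in the capped subproblem and that the Bellman inequality holds for plans outside $\mathcal D_+$. Admissibility includes the caps $M_i$, which are shared with the full portfolio.
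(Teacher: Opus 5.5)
Your proposal is correct and follows the paper's proof essentially step for step. Nonnegativity comes from projecting $d$ onto a task or a pair. The identity comes from rewriting $H=\sum_{i<j}V_{ij}-(n-2)\sum_iV_i$ (and likewise $g_d$). The reduction uses $\lambda_i^d=0$ and $\lambda_{ij}^d=\lambda_j^d$ for resolved $i$, which turns the coefficient $-(n-2)$ into $-(m-2)$. The only difference is cosmetic: the paper justifies $V\le g+PV$ directly (use the projected plan for one period, then continue optimally), whereas you derive it from the all-feasible-plans form of the recursion in Proposition~\ref{app:prop-dp} plus the trivial self-loop case; both work.
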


The regret $\lambda_i^d$ measures how much worse plan $d$, projected onto task $i$, is than an optimal decision for task $i$ if it had a human of its own, and $\lambda_{ij}^d$ is the analogous regret in the two-task problem. The defect is small when a single portfolio plan is nearly optimal for every pair problem at once, after the singleton regrets, each of which appears in $m-1$ pair regrets, are netted out. It is computed from the singleton and pair tables and one-period plans only. When $m\le2$, the defect is zero.

\paragraph{A guarantee with approximate selection}
The next result extends Theorem~\ref{thm:pcp_guarantee}(ii) to selectors that only approximate the minimization in~\eqref{eq:greedy}, and it separates the two halves of condition~\eqref{eq:residual_condition}.

\begin{proposition}[Defect guarantee with approximate selection]\label{app:prop-defect}
Let $\epsilon_+,\epsilon_-,\delta\ge0$ with $\epsilon_++\delta<1$.
\begin{enumerate}[label=(\roman*)]
 \item If $\rho(s)+e_\pi(s)\le(\epsilon_++\delta)\,\omega(s)$ at every unfinished state reachable from $s_0$ under $\pi$, then $J^\pi(s_0)\le H(s_0)/(1-\epsilon_+-\delta)$.
 \item If $\rho(s)\ge-\epsilon_-\,\omega(s)$ at every unfinished state of a forward-closed set $\mathcal S$ containing $s_0$, then $H(s_0)\le(1+\epsilon_-)V^*(s_0)$.
\end{enumerate}
In particular, if $-\epsilon_-\omega\le\rho\le\epsilon_+\omega$ and $e_\pi\le\delta\omega$ at every unfinished state of $\mathcal S$, then
\begin{equation}\label{eq:app-defect-ratio}
 J^\pi(s_0)\le\frac{H(s_0)}{1-\epsilon_+-\delta}\le\frac{1+\epsilon_-}{1-\epsilon_+-\delta}\,V^*(s_0).
\end{equation}
Exact PCP has $\delta=0$, which gives Theorem~\ref{thm:pcp_guarantee}(ii).
\end{proposition}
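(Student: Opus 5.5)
For part~(i) I would run a telescoping supermartingale-type argument along the trajectory of $\pi$. For part~(ii) I would use a sub-solution comparison with the ranked recursion~\eqref{eq:app-dp}. The final display then just chains the two parts, because $e_\pi\le\delta\omega$ together with $\rho\le\epsilon_+\omega$ gives the hypothesis of (i) with $\epsilon_++\delta$.

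\emph{Part (i).} At an unfinished state $s$ visited by $\pi$, the definitions~\eqref{eq:pcp_residual} and~\eqref{eq:app-selection-error} give $g_{\pi(s)}(s)+P_{\pi(s)}H(s)-H(s)=\rho(s)+e_\pi(s)\le(\epsilon_++\delta)\omega(s)$. Since $g_d(s)=\omega(s)+{}$nonnegative fees, we have $\omega(s)\le g_{\pi(s)}(s)$. Hence
\[
(1-\epsilon_+-\delta)\,g_{\pi(s)}(s)\le g_{\pi(s)}(s)-(\epsilon_++\delta)\omega(s)\le H(s)-P_{\pi(s)}H(s).
\]
Summing along the trajectory up to a deterministic horizon $T$ and taking expectations, the right side telescopes to $H(s_0)-\E[H(S_T)]$. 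Next I would check that $H\ge0$: this holds because each $V_i\ge0$ and each $\Delta_{ij}\ge0$ (Lemma~\ref{app:lem-projection}), with $H=0$ at terminal states. So the horizon-$T$ cost is at most $H(s_0)/(1-\epsilon_+-\delta)$. Finally, I would let $T\to\infty$ by monotone convergence, since costs are nonnegative. Lemma~\ref{app:lem-progress} guarantees that $\pi$, which selects from $\mathcal D_+$, terminates with finite expected time, so $J^\pi(s_0)$ is the limit of the truncated costs. The hypothesis is needed only at states reachable under $\pi$, which is exactly where the inequality is used.

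\emph{Part (ii).} Set $L=H/(1+\epsilon_-)$ on $\mathcal S$. The lower defect bound gives, for every $d\in\mathcal D_+(s)$,
\[
H(s)\le g_d(s)+\epsilon_-\omega(s)+P_dH(s)\le(1+\epsilon_-)g_d(s)+P_dH(s),
\]
again using $\omega\le g_d$. Dividing by $1+\epsilon_-$ yields $L(s)\le g_d(s)+P_dL(s)$. I would then prove $L\le V^*$ on $\mathcal S$ by induction on the rank~\eqref{eq:app-rank}. The base case is terminal states, where $L=V^*=0$. For the inductive step, isolate the self-transition: $L(s)(1-P_d(s,s))\le g_d(s)+\sum_{s'\ne s}P_d(s,s')L(s')$. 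Every such $s'$ lies in $\mathcal S$ by forward closedness and has strictly smaller rank by Proposition~\ref{app:prop-dp}. So by induction the right side is at most $g_d(s)+\sum_{s'\ne s}P_d(s,s')V^*(s')$. Dividing by $1-P_d(s,s)>0$ and minimizing over $d\in\mathcal D_+(s)$, the recursion~\eqref{eq:app-dp} gives $L(s)\le V^*(s)$.

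The main obstacle is ensuring that the limiting and induction arguments are legitimate rather than the algebra. In part~(i), the truncation must not lose the terminal term; here $H\ge0$ is essential, and I would verify it carefully via the projection lemma. In part~(ii), the induction must never leave $\mathcal S$, which is why forward closedness under all of $\mathcal D_+(s)$, not merely under PCP, is required. Setting $\delta=0$ and $e_\pi\equiv0$ then recovers Theorem~\ref{thm:pcp_guarantee}(ii).
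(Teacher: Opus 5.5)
Your proof is correct. Part~(i) is the paper's argument: you bound $\omega\le g_d$, telescope along $\pi$'s trajectory, use $H\ge0$ and monotone convergence, and appeal to Lemma~\ref{app:lem-progress} for termination.

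Part~(ii) takes a genuinely different route. The paper proves it as the mirror image of part~(i). It takes the optimal deterministic stationary policy $\pi^*$ of Proposition~\ref{app:prop-dp}, which selects plans in $\mathcal D_+$ and therefore stays in $\mathcal S$. It applies the lower defect bound only at the plan $\pi^*(s)$, which gives $H(s)\le(1+\epsilon_-)g_{\pi^*(s)}(s)+P_{\pi^*(s)}H(s)$. It then sums along $\pi^*$'s trajectory and removes $\E H(S_T)$ by dominated convergence, because $S_T$ is eventually terminal almost surely.

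You instead show that $L=H/(1+\epsilon_-)$ is a subsolution of the ranked recursion~\eqref{eq:app-dp} on $\mathcal S$. You then compare it with $V^*$ by induction on the rank, isolating the self-transition and dividing by $1-P_d(s,s)>0$.

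Both routes rest on Proposition~\ref{app:prop-dp}: the paper needs an optimal stationary policy in $\mathcal D_+$, and you need the recursion and the rank decrease. What your version buys is that it is purely finite. It involves no limit and no termination argument for an optimal policy, and it gives $H\le(1+\epsilon_-)V^*$ at every state of $\mathcal S$ at once. What the paper's version buys is brevity given that proposition, a clear parallel with part~(i), and a visible use of the lower bound on $\rho$ only along the optimal trajectory.

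Your induction actually needs only the minimizing plan of~\eqref{eq:app-dp} at each state. As written, you minimize over all of $\mathcal D_+(s)$, so you rely on forward closedness under every such plan, which the hypothesis provides.
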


Part~(i) controls the policy. Its condition involves no minimization, because $\rho+e_\pi=g_{\pi(s)}+P_{\pi(s)}H-H$, and it needs to hold only at the states the selector reaches. Part~(ii) compares the approximation with the optimum. An optimal policy may visit states that PCP never reaches, so part~(ii) requires the lower condition on a forward-closed set. Checking it uses neither $V^*$ nor the cost of PCP, but it requires $\rho(s)$ at every state of such a set, which costs about as much as exact dynamic programming. Both conditions are sufficient, and the model does not guarantee them. Bounds of this type, which control a greedy policy through the Bellman residual of its value approximation, are standard for stochastic shortest-path problems \citep[e.g.,][]{hansen2011}.

\section{Proofs}
\label{app:proofs}

This appendix gives proofs for the main-text results in order of appearance, together with proofs of the related results in Appendix~\ref{app:additional}. Appendix~\ref{app:proof-belief} collects two facts about beliefs used in several proofs. The conventions of Appendix~\ref{app:additional} apply throughout.

\subsection{Belief dynamics}
\label{app:proof-belief}

\begin{lemma}[Posterior beliefs]\label{app:lem-belief}
For a task with $p<1$ and $\alpha\in(0,1]$, $\beta(k)$ in~\eqref{eq:app-posterior} is the posterior probability that the requirement is recoverable after $k$ failed searches, and $q(k)=\alpha\beta(k)$ is the success probability of the next search. Moreover,
\[
 \beta(k+1)=\frac{\beta(k)(1-\alpha)}{1-\alpha\beta(k)}\le\beta(k),
\]
so $\beta(k)$ and $q(k)$ are nonincreasing in $k$, and $q(k)\to0$ as $k\to\infty$.
\end{lemma}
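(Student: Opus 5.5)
The plan is to prove the statement by Bayes' rule and elementary algebra. Let $R$ be the event that the requirement is recoverable, so $\Pr(R)=p$. Given $R$, the first $k$ searches fail with probability $(1-\alpha)^k$, by independence of attempts. Given $R^c$, they fail with probability one. So the probability that the first $k$ searches fail is $\Lambda(k)=1-p+p(1-\alpha)^k$. This is positive because $p<1$, so conditioning on $k$ failures is well defined. Bayes' rule then gives $\Pr(R\mid k\text{ failures})=p(1-\alpha)^k/\Lambda(k)=\beta(k)$. Conditional on the history, the next search succeeds only if $R$ holds, and then with probability $\alpha$ independently of past attempts. Hence $q(k)=\alpha\beta(k)$, which matches~\eqref{eq:belief}.

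For the recursion, I will apply Bayes' rule once more, treating $\beta(k)$ as the prior and conditioning on one further failure. That failure has probability $1-\alpha\beta(k)$. Its joint probability with $R$ is $\beta(k)(1-\alpha)$. This gives $\beta(k+1)=\beta(k)(1-\alpha)/(1-\alpha\beta(k))$, provided $1-\alpha\beta(k)>0$. That condition holds because $\beta(k)<1$, which follows from $1-p>0$ in $\Lambda(k)$.

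As a check, the same identity follows directly from the closed form, since $\Lambda(k+1)=\Lambda(k)(1-\alpha\beta(k))$.

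Monotonicity is then immediate. We need $(1-\alpha)\le 1-\alpha\beta(k)$, which is equivalent to $\alpha\beta(k)\le\alpha$, and this holds because $\beta(k)\le1$. The case $\beta(k)=0$ is trivial. Multiplying by the nonnegative $\alpha$ transfers monotonicity from $\beta$ to $q$.

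For the limit, I will argue that $(1-\alpha)^k\to0$ because $\alpha\in(0,1]$. The denominator $\Lambda(k)$ stays at least $1-p>0$. Hence $\beta(k)\le p(1-\alpha)^k/(1-p)\to0$, and therefore $q(k)\to0$. When $\alpha=1$, the sequence is identically zero for $k\ge1$.

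The only delicate point is keeping the denominators positive, which is exactly where the assumption $p<1$ enters; beyond that the argument is routine.
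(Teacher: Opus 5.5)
Your proposal is correct and follows essentially the same route as the paper: you use Bayes' rule for $\beta(k)$, a single further Bayes update for the recursion with a positive denominator because $1-\beta(k)=(1-p)/\Lambda(k)>0$, the comparison $1-\alpha\le 1-\alpha\beta(k)$ for monotonicity, and the bound $\Lambda(k)\ge 1-p$ together with $(1-\alpha)^k\to 0$ for the limit. Your identity $\Lambda(k+1)=\Lambda(k)(1-\alpha\beta(k))$ is a valid algebraic cross-check that the paper leaves implicit.
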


\begin{proof}
Let $\chi\in\{0,1\}$ indicate recoverability. Conditional on $\chi=1$, the first $k$ searches fail with probability $(1-\alpha)^k$; conditional on $\chi=0$, they fail with probability one. Bayes' rule gives $\beta(k)=p(1-\alpha)^k/\Lambda(k)$, and the next search succeeds with probability $\alpha\beta(k)$, which is~\eqref{eq:belief}. Applying Bayes' rule to one more failure gives the displayed recursion. Its denominator satisfies $1-\alpha\beta(k)\ge1-\beta(k)=(1-p)/\Lambda(k)>0$, and $1-\alpha\le1-\alpha\beta(k)$ gives the inequality. Finally, $\Lambda(k)\ge1-p>0$ and $(1-\alpha)^k\to0$, so $\beta(k)\to0$ and $q(k)\to0$.
\end{proof}

Because search outcomes are conditionally independent given recoverability, a task with $k$ failures evolves exactly like a fresh task with prior $\beta(k)$ and the same $\alpha$. The next lemma shows that a more optimistic belief cannot hurt.

\begin{lemma}[Optimism cannot hurt]\label{app:lem-optimism}
Consider two portfolios that are identical except that task $i$'s posterior recoverability probability is $\beta_+$ in the first and $\beta_-\le\beta_+$ in the second. Then the optimal value of the first portfolio is at most that of the second. The same holds for the truncated problems of Appendix~\ref{app:proof-reform}. In particular, if $s$ and $s'$ differ only in that task $i$ has $k_i$ failures in $s$ and $k_i+1$ in $s'$, then $V^*(s)\le V^*(s')$.
\end{lemma}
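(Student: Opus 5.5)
The plan is a coupling argument in which any policy for the pessimistic portfolio is simulated inside the optimistic one. The key observation, noted after Lemma~\ref{app:lem-belief}, is that a task with posterior $\beta$ behaves exactly like a fresh task with prior $\beta$ and the same $\alpha$. So we compare two portfolios in which only task $i$'s prior differs, $\beta_+\ge\beta_-$.

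First, realize recoverability for task $i$ in both portfolios on a common probability space. Let $\chi_+=\ind{U\le\beta_+}$ and $\chi_-=\ind{U\le\beta_-}$ for a uniform $U$, so that $\chi_-\le\chi_+$. Use common search-success coins $\xi_1,\xi_2,\ldots$ with $\Pr\{\xi_t=1\}=\alpha$. Then the $t$-th search of task $i$ succeeds in portfolio $\pm$ iff $\chi_\pm\xi_t=1$. Everything else is shared: the availability sequence and the other tasks' recoverability and coins. Under this coupling, whenever the $t$-th search succeeds in the pessimistic world, it also succeeds in the optimistic one.

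Second, fix an optimal policy $\pi^-$ for the pessimistic portfolio; one exists and is deterministic stationary by Proposition~\ref{app:prop-dp}. Construct a nonanticipative policy $\pi^+$ for the optimistic portfolio that runs $\pi^-$ on a \emph{simulated} pessimistic history. To feed $\pi^-$ task $i$'s outcomes, $\pi^+$ needs pessimistic outcomes it can generate from what it observes. A success in the optimistic world corresponds to a pessimistic success with conditional probability $\Pr\{\chi_-=1\mid\chi_+=1\}=\beta_-/\beta_+$. This is a draw from independent randomization, made once at the first optimistic success and kept thereafter; a failure is a failure in both worlds. This reproduces the joint law of the pessimistic history.

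Whenever the simulated history says task $i$ is still unresolved but the real task is already resolved, $\pi^+$ makes every recommended action for $i$ cheaper or equal:
\begin{itemize}
\item a recommended search becomes work, or waiting if no work remains;
\item a submission is dropped, since it was already withdrawn at no cost;
\item service of $i$ is replaced by no service.
\end{itemize}
All other tasks follow $\pi^-$ exactly, so their trajectories and costs coincide pathwise. For task $i$, real completion occurs no later than simulated completion. The reason is that resolution is no later, and the remaining work can be done during the periods in which the simulation spends searching or waiting, with work that the simulation schedules after its own resolution available at least as early. Fees are pathwise no larger. Hence $J^{\pi^+}\le J^{\pi^-}=V^*_-$ in expectation, and $V^*_+\le J^{\pi^+}$.

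The main obstacle is checking that the dominance is genuinely pathwise for the completion time once real work is shifted earlier. I would handle this by a simple invariant maintained period by period. Specifically, real $x_i$ never exceeds simulated $x_i$, and real $z_i$ is at least as resolved as simulated $z_i$ in the order $U<P<R$. Each case of the mimicking rule preserves the invariant.

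For the truncated problems, the same construction applies, because $\pi^+$ never searches when $\pi^-$ does not, so the caps are respected. The ``in particular'' statement follows by taking $\beta_+=\beta(k_i)\ge\beta(k_i+1)=\beta_-$, using Lemma~\ref{app:lem-belief} together with the fresh-task equivalence.
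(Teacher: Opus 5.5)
Your coupling is the paper's own: a private Bernoulli draw with parameter $\beta_-/\beta_+$ thins task $i$'s recoverability, you run the pessimistic policy on the resulting virtual history, you execute its actions verbatim while the real task $i$ is unresolved, and you replace them with cheaper ones afterwards. For the untruncated value this works. Turning a simulated search into work rather than waiting is also valid under your invariant, provided you add the case in which a simulated work step meets real $x_i=0$ and becomes waiting. Waiting, as the paper does, keeps task $i$'s work schedule identical and makes the completion comparison immediate.

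The gap is in the truncated problems. They are defined without caps, so your remark that the caps are respected addresses a constraint those problems do not have. What they do impose is that from period $T$ onward the controller must follow the completion rule $F$ on the \emph{actual} state. Suppose that after $T$ the simulated $F$ serves task $i$ while the real task $i$ is already resolved. Your $\pi^+$ then leaves the human idle, which is infeasible, because $F$ must serve the pending question of the smallest-index unresolved task. So after $T$ the other tasks no longer follow the simulated trajectory, and your claim that their trajectories coincide pathwise fails exactly there. The missing step is the one the paper supplies. The other tasks' states coincide at $T$, and under $F$ every remaining question is pending, no one searches, and service follows a fixed index order. Removing task $i$ from the queue can therefore only move every other task's answer, and hence its completion $\max\{x_j,T_j\}$, earlier, while each is still answered exactly once with unchanged fees.

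Separately, do not fix an optimal $\pi^-$ by citing Proposition~\ref{app:prop-dp}. That proposition rests on Proposition~\ref{app:prop-caps}, whose Step~2 invokes the present lemma to show that $V^T_{t+1}$ is nonincreasing in the posterior, so the citation is circular. Your argument never uses optimality. Take an arbitrary, possibly randomized, policy for the second portfolio. Conditional on the private draw and on that policy's randomization, the constructed policy is a deterministic nonanticipative policy for the first portfolio. Bound its cost by the cost of the original policy, and take the infimum.
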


\begin{proof}
If $\beta_+=0$, then $\beta_-=0$ and the portfolios coincide. Otherwise, let $\chi$ be task $i$'s recoverability indicator in the first portfolio, so $\Pr(\chi=1)=\beta_+$, and let $Z$ be an independent Bernoulli random variable with success probability $\beta_-/\beta_+$, drawn privately by the controller. Define a \emph{virtual} task $i$ that is recoverable exactly when $\chi Z=1$, and whose $t$th search succeeds exactly when the actual $t$th search succeeds and $Z=1$. The virtual task is recoverable with probability $\beta_-$; conditional on recoverability its searches succeed independently with probability $\alpha_i$; and otherwise they all fail. Its outcomes therefore have the law of task $i$ in the second portfolio, and the outcomes of the other tasks and of availability are unaffected.

Fix any policy $\pi$ for the second portfolio and run it on the virtual history, which consists of the actual history of the other tasks and the virtual history of task $i$. As long as the actual task $i$ is unresolved, execute $\pi$'s actions verbatim; then every virtual search success is also an actual one. If the actual task is resolved by a search that the virtual history records as a failure, which requires $Z=0$, continue to simulate $\pi$: record every later virtual search of task $i$ as a failure without performing it, letting the agent wait instead; do not submit task $i$'s question; and leave the human idle in any period in which $\pi$ serves task $i$. All other tasks receive the same actions and have the same outcomes. Task $i$ performs the same work in the same periods and is resolved no later than in the virtual history, so it completes no later, and its search, submission, and answer fees are no greater. Conditional on $Z$ and on the randomization of $\pi$, the constructed policy is a deterministic nonanticipative policy for the first portfolio. Its expected cost is therefore at least the optimal value of the first portfolio, and it is at most the cost of $\pi$ in the second. Taking the infimum over $\pi$ proves the claim.

The truncated problems of Appendix~\ref{app:proof-reform} require a fixed completion rule from period $T$ onward. The same construction applies, with the completion rule executed on the actual state; if the actual task $i$ is already resolved, $F$ skips it, and because $F$ serves in a fixed index order, skipping a task can only move every other task's answer earlier. The last statement follows because $\beta_i(k_i+1)\le\beta_i(k_i)$ by Lemma~\ref{app:lem-belief}.
\end{proof}

\subsection{Proofs of Lemma~\ref{lem:reform} and Propositions~\ref{app:prop-caps} and~\ref{app:prop-dp}}
\label{app:proof-reform}

We first establish a dynamic-programming principle for the capped model that holds for any restriction of the plans, then prove the caps, and then construct the reductions.

\begin{lemma}[Ranked recursion]\label{app:lem-rankdp}
In the capped model, every transition to a different state strictly decreases the rank~\eqref{eq:app-rank}. Suppose that each unfinished state $s$ has a set $\mathcal D(s)$ of feasible plans containing at least one plan with $P_d(s,s)<1$, and let $V_{\mathcal D}(s)$ be the infimum of the expected total cost from $s$ over policies, possibly history dependent and randomized, that select a plan in $\mathcal D(S_t)$ in every period. Then $V_{\mathcal D}=0$ at terminal states and, at unfinished states,
\begin{equation}\label{eq:app-rankdp}
 V_{\mathcal D}(s)=\min_{d\in\mathcal D(s):\,P_d(s,s)<1}
 \frac{g_d(s)+\sum_{s'\ne s}P_d(s,s')V_{\mathcal D}(s')}{1-P_d(s,s)}.
\end{equation}
A deterministic stationary policy that selects a minimizer at every unfinished state attains $V_{\mathcal D}$.
\end{lemma}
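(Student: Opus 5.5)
The rank claim comes first, by direct inspection of the transitions, and then the recursion is proved by induction on rank.

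\emph{Rank decrease.} I will check every state-changing event of one period:
\begin{itemize}
\item a completed work unit lowers $\sum_i x_i$ by one;
\item a failed search raises $k_i$ by one, which lowers $(M_i-k_i)^+$ by one, because searches are allowed only when $k_i<M_i$ in the capped model;
\item a submission turns $z_i$ from $U$ to $P$ and removes the term $\ind{z_i=U}$;
\item a resolution, whether by service or by a successful search, removes $i$ from $\mathcal A(s)$ and lowers the rank by at least one.
\end{itemize}
Each of these changes never increases any other coordinate's contribution. So any transition to a different state strictly lowers $\mathrm{rk}$, and positive-probability successors $s'\ne s$ satisfy $\mathrm{rk}(s')<\mathrm{rk}(s)$. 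Terminal states have every $x_i=0$ and $\mathcal A=\emptyset$, so they have rank zero and value zero.

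\emph{Induction on rank.} Let $\widetilde V(s)$ denote the right-hand side of \eqref{eq:app-rankdp}, computed recursively by rank. This is well defined because the successors $s'\ne s$ have lower rank and the set of admissible plans is finite and nonempty. Assume $V_{\mathcal D}=\widetilde V$ at all states of lower rank.

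Lower bound. Under any admissible (history-dependent, randomized) policy started at $s$, consider the first period $\tau$ in which the state changes. Before $\tau$ the policy chooses plans $d_t\in\mathcal D(s)$, possibly varying. Plans with $P_d(s,s)=1$ only add cost $g_d\ge\omega(s)>0$ without progress, so the expected cost is at least the cost of those periods. From $\tau$ onward, the continuation cost is at least $V_{\mathcal D}(S_\tau)=\widetilde V(S_\tau)$ by the inductive hypothesis. Each period spent at $s$ under plan $d$ contributes $g_d(s)+\sum_{s'\ne s}P_d(s,s')\widetilde V(s')$ minus the value carried forward. If the process never leaves $s$, the cost is infinite, since $\omega(s)>0$ every period. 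A standard argument then shows that the expected total cost is at least $\min_d \bigl[g_d+\sum_{s'\ne s}P_d\widetilde V\bigr]/(1-P_d(s,s))$. Concretely, write the cost as a weighted mixture over the per-period plans: each plan $d$ used contributes $g_d+\sum_{s'\neq s}P_d\widetilde V(s')$ per visit and exits with probability $1-P_d(s,s)$. The cost-to-exit-probability ratio of such a mixture is at least the minimum ratio (a mediant inequality, summing over visits).

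Upper bound. The stationary deterministic policy that repeats a minimizing plan at $s$, and then follows the inductively constructed optimal stationary policies at lower-rank states, achieves exactly this ratio. It uses the geometric number of periods until exit, with mean $1/(1-P_d(s,s))$.

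\emph{Main obstacle.} The one delicate point is the lower bound against nonstationary randomized behavior at the self-loop. I plan to handle it by conditioning on the history and writing the expected cost as $\E\sum_{t<\tau}\bigl[g_{d_t}+\sum_{s'\ne s}P_{d_t}(s,s')\widetilde V(s')\bigr]$. This uses the tower property and the fact that the exit law at each visit depends only on $d_t$. I then bound each summand below by $r^*\,(1-P_{d_t}(s,s))$, where $r^*$ is the minimal ratio. The sum $\E\sum_{t<\tau}(1-P_{d_t}(s,s))$ equals $\Pr(\tau<\infty)$. If this probability is below one, the cost is infinite anyway. If it equals one, the bound is $r^*$. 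Plans with $P_d(s,s)=1$ give summand $g_d\ge0=r^*\cdot 0$, so they are consistent with the bound.
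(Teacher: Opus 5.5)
Your proposal is correct and follows essentially the same route as the paper: you prove the rank decrease by inspecting each state-changing event, get the upper bound by repeating a minimizing plan until the state changes, and get the lower bound by conditioning on the history and bounding each pre-exit summand by $r^*(1-P_{d_t}(s,s))$, so that the total is at least $r^*\Pr(\tau<\infty)$. The one step you leave implicit is that the continuation cost from $S_\tau$ is at least $V_{\mathcal D}(S_\tau)$ whatever the past; the paper justifies this by noting that the state is a sufficient statistic.
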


\begin{proof}
Consider a transition from $s$ to $s'\ne s$. No agent completes more than one work unit in a period, so no $x_i$ increases. A failed search raises $k_i$ by one and is admissible only if $k_i<M_i$, so it lowers $(M_i-k_i)^+$ by one. A submission changes $z_i$ from $U$ to $P$ and lowers $\ind{z_i=U}$ by one. A resolution removes task $i$ from $\mathcal A$ and lowers the rank by $(M_i-k_i)^++1+\ind{z_i=U}\ge1$. No component of the rank increases, and at least one changes because $s'\ne s$. Hence the rank strictly decreases. Rank zero consists of terminal states.

We prove the recursion and its attainment by induction on rank. Fix an unfinished state $s$ and suppose the claims hold at every state of lower rank. The minimum in~\eqref{eq:app-rankdp} is over a finite nonempty set; let $r^*$ be its value and $d^*$ a minimizer. Repeating $d^*$ until the state changes and then following stationary optimal policies at the lower-rank successors costs $\sum_{t\ge0}P_{d^*}(s,s)^t[g_{d^*}(s)+\sum_{s'\ne s}P_{d^*}(s,s')V_{\mathcal D}(s')]=r^*$.

Conversely, fix any admissible policy and let $\tau\in\{1,2,\ldots,\infty\}$ be the number of periods it spends in $s$ before the state first changes. On the event $\{\tau>t\}$, let $d_t\in\mathcal D(s)$ be the plan used in period $t$, which may depend on the history and on randomization. Conditional on the history up to period $t$, that period costs $g_{d_t}(s)$ in expectation, and the next state is $s'\ne s$ with probability $P_{d_t}(s,s')$. Because the state is a sufficient statistic, the cost from a state $s'$ onward is at least $V_{\mathcal D}(s')$ whatever the past. Hence the policy's expected cost is at least
\[
 \E\sum_{t\ge0}\ind{\tau>t}\Bigl[g_{d_t}(s)+\sum_{s'\ne s}P_{d_t}(s,s')V_{\mathcal D}(s')\Bigr]
 \ge r^*\,\E\sum_{t\ge0}\ind{\tau>t}\bigl(1-P_{d_t}(s,s)\bigr)=r^*\Pr(\tau<\infty).
\]
The inequality holds termwise: for $P_d(s,s)<1$ it is the definition of $r^*$, and for $P_d(s,s)=1$ its right side is zero. If $\Pr(\tau=\infty)>0$, the cost is infinite, because every period at an unfinished state costs at least $\omega(s)>0$. Otherwise the cost is at least $r^*$. Hence $V_{\mathcal D}(s)=r^*$, attained by the stationary policy above.
\end{proof}

\begin{proof}[Proof of Proposition~\ref{app:prop-caps}]
\emph{Step 1: a uniform bound.} Let $F$ be the following completion rule: submit every unsubmitted question, never search, let every agent with work perform it, and in every available period serve the pending question of the unresolved task with the smallest index. From a state reachable from the initial state, at most $n$ questions remain, every task has at most $b_i$ units of work, and the $n$th available period arrives after $n/a$ periods in expectation. Under $F$, each task completes by the later of these two events, so $F$ costs at most $\sum_iw_i(b_i+n/a)+\sum_i\kappa_i+n\eta=B_{\max}$. Since costs are nonnegative, $0\le V^*\le B_{\max}$ at every reachable state.

\emph{Step 2: truncated problems.} For a horizon $T\ge0$ and $0\le t\le T$, let $V^T_t(s)$ be the optimal cost from state $s$ at period $t$ when the controller must follow $F$ from period $T$ onward, without caps. Then $V^T_T(s)$ is the cost of $F$, and $V^T_t(s)=\min_d\{g_d(s)+\E_d[V^T_{t+1}(S')]\}$ for $t<T$, where the minimum is over the finitely many feasible plans. Each $V^T_t$ lies in $[0,B_{\max}]$, because following $F$ immediately is feasible, and by Lemma~\ref{app:lem-optimism} it is nonincreasing in each task's posterior.

Let $t<T$ and let $d$ be a plan that, in some availability outcome, searches task $i$ at a failure count $k_i\ge M_i$. Let $d'$ be the plan that replaces this search by waiting. If task $i$ is being served in that outcome, the search cannot change the next state and only adds $c_i$. Otherwise, condition on the availability outcome and on the outcomes of all other searches, which are independent of task $i$'s search. Let $s_S$ and $s_F$ be the next states after a success and a failure of task $i$'s search, and $s_N$ the next state without it; $s_N$ differs from $s_F$ only in having one failure fewer. With $q=q_i(k_i)\le q_i(M_i)\le c_i/B_{\max}$, the conditional cost of $d$ minus that of $d'$ is
\[
 c_i+qV^T_{t+1}(s_S)+(1-q)V^T_{t+1}(s_F)-V^T_{t+1}(s_N)
 \ge c_i-q\bigl[V^T_{t+1}(s_F)-V^T_{t+1}(s_S)\bigr]\ge c_i-qB_{\max}\ge0,
\]
using $V^T_{t+1}(s_N)\le V^T_{t+1}(s_F)$ and $0\le V^T_{t+1}\le B_{\max}$. Averaging preserves the inequality. Replacing every such search in turn shows that the minimum defining $V^T_t$ is attained by a plan that respects the caps. Because $F$ never searches, backward induction on $t$ shows that $V^T_0$ equals the value of the truncated problem in which searches at $k_i\ge M_i$ are prohibited.

\emph{Step 3: the limit.} Let $V^{\rm cap}$ be the optimal value of the capped model. Every capped policy is a policy, and every capped truncated policy is a capped policy, so $V^*\le V^{\rm cap}\le V^T_0$ for every $T$. Conversely, let $\pi$ be any policy with finite cost $J^\pi$ and let $\pi_T$ follow $\pi$ for $T$ periods and $F$ thereafter. Then $V^T_0\le J^{\pi_T}\le J^\pi+B_{\max}\Pr^\pi(\tau>T)$, where $\tau$ is the completion time of the portfolio. Because $w_i>0$, $\E^\pi\tau\le J^\pi/\min_iw_i<\infty$, so $\Pr^\pi(\tau>T)\to0$ and $\limsup_TV^T_0\le J^\pi$. Taking the infimum over $\pi$ gives $\lim_TV^T_0=V^*$, hence $V^{\rm cap}=V^*$. By Lemma~\ref{app:lem-rankdp} with all feasible capped plans, $V^{\rm cap}$ is attained by a deterministic stationary policy, which is therefore optimal in the original model.

\emph{Step 4: the remaining claims.} Each $M_i$ is finite because $q_i(k)\to0$ and $c_i>0$. For caps $M_i'\ge M_i$, every $M$-capped policy is $M'$-capped, so $V^*\le V^{M'}\le V^{M}=V^*$. For a subset $I$ of tasks, the bound $B^I_{\max}$ defined by~\eqref{eq:app-bmax} for $I$ alone is at most $B_{\max}$, so the caps that the proposition assigns to $I$ are at most $M_i$. Applying the previous sentence to the portfolio $I$ proves~(iii).
\end{proof}

\begin{proof}[Proof of Lemma~\ref{lem:reform}(i)]
Let $\pi$ be any policy for the capped model. Construct a policy $\pi_1$ as follows. Maintain \emph{virtual} work counts $\tilde x_i$, initially equal to $x_i$, and let the virtual history be the actual history with $x_i$ replaced by $\tilde x_i$. In each period, $\pi_1$ computes the plan $d$ that $\pi$ selects at the virtual history and makes the same submission and service decisions. In each availability outcome, agent $i$ searches if $d$ searches task $i$ and does not serve it; otherwise, the agent works if $x_i>0$ and waits if $x_i=0$. The virtual count $\tilde x_i$ decreases by one whenever $d$ lets agent $i$ work and $\tilde x_i>0$.

The actual and virtual histories differ only in work counts. A search that $\pi$ performs on a task being served affects nothing, because the task is resolved at the end of the period either way; if $\pi$ conditions later decisions on the outcome of such a search, $\pi_1$ feeds it an independently simulated outcome with the correct probability, so $\pi_1$ may be randomized. The other searches and their outcomes are identical. By induction, $x_i\le\tilde x_i$ in every period: whenever the virtual count decreases, the actual count decreases as well unless it is already zero. The virtual history is a function of the actual history, so $\pi_1$ is nonanticipative. Every task is resolved at the same time in both histories and finishes its work no later in the actual one, so it completes no later; search fees do not increase, and submission and answer fees are unchanged. Hence $J^{\pi_1}\le J^\pi$. By construction, $\pi_1$ never searches a task while its question is being served, and every agent with work left works unless it searches, both before and after resolution.
\end{proof}

\begin{proof}[Proof of Lemma~\ref{lem:reform}(ii)]
Let $\pi_1$ be any policy for the capped model. Construct a policy $\pi_2$ that follows $\pi_1$ on a virtual history in which some questions may be virtually submitted but actually unsubmitted. At the start of each period, compute the plan $d$ that $\pi_1$ selects at the virtual history, drawing any randomization now. This determines $\pi_1$'s virtual submissions and the question $j$ that $\pi_1$ would serve if the human is available, if any. If $j$ exists and is actually unsubmitted, $\pi_2$ submits $j$; it submits no other question. If the human is available, $\pi_2$ serves $j$. In both availability outcomes, $\pi_2$ uses $d$'s agent actions, and the virtual history is updated according to $d$.

The construction is feasible. Binary availability is the only information revealed between submission and service, so $j$ is known at the start of the period. Moreover, $j$ is virtually submitted and unresolved, so it is actually pending or newly submitted, and newly submitted questions are immediately eligible. Search, work, and service are the same in both histories, so all completion times, search fees, and answer fees coincide. Each actual submission of a question occurs at most once and only if $\pi_1$ has virtually submitted that question, so submission fees do not increase. Hence $J^{\pi_2}\le J^{\pi_1}$. The policy $\pi_2$ submits at most one question per period and serves it if the human is available. Because it uses $\pi_1$'s agent actions and serves the same questions, it inherits every property of Lemma~\ref{lem:reform}(i) and the caps from $\pi_1$.
\end{proof}

\begin{proof}[Proof of Proposition~\ref{app:prop-dp} and of Lemma~\ref{lem:reform}]
The rank claim is part of Lemma~\ref{app:lem-rankdp}. By Proposition~\ref{app:prop-caps}, the capped model has the optimal value of the original model, and by Lemma~\ref{app:lem-rankdp} applied to all feasible capped plans, it has a deterministic stationary optimal policy $\pi^*$; this proves that~\eqref{eq:app-dp} with all feasible plans gives $V^*$. Apply the construction in the proof of Lemma~\ref{lem:reform}(i) to $\pi^*$ and the construction in the proof of Lemma~\ref{lem:reform}(ii) to the result. The final policy $\pi_2$ has cost at most $V^*$, so it is optimal, and in every period it selects a plan in $\mathcal D_{\rm red}(S_t)$. Let $V_{\rm red}$ be the value of Lemma~\ref{app:lem-rankdp} with $\mathcal D=\mathcal D_{\rm red}$; Appendix~\ref{app:dp} shows that $\mathcal D_{\rm red}(s)$ contains a plan with $P_d(s,s)<1$. Then $V^*\le V_{\rm red}\le J^{\pi_2}=V^*$, so $V_{\rm red}=V^*$. This is~\eqref{eq:app-dp}, because the minimum in~\eqref{eq:app-rankdp} is over $\mathcal D_+(s)$ when $\mathcal D=\mathcal D_{\rm red}$. The stationary policy that attains it selects plans in $\mathcal D_+(s)$ and therefore satisfies the three properties of Lemma~\ref{lem:reform} simultaneously; property~(iii) holds with the caps of Proposition~\ref{app:prop-caps}. Finally, by the rank argument, every state-changing transition advances work, failures, submission, or resolution.
\end{proof}

\subsection{Proofs of Theorem~\ref{thm:thresholds} and Proposition~\ref{app:prop-blocked}}
\label{app:proof-blocked}

We use the notation~\eqref{eq:app-blocked-notation}. A task with $x=0$ has no work, so rule~(d) of Definition~\ref{app:def-reduced} is vacuous. Let $\Pi(Y)=\min\{w+Y,\;w+a\eta+(1-a)Y,\;W\}$. Two facts about $\Pi$ are used repeatedly. First, $\Pi$ is nondecreasing. Second, $\Pi(Y)=W$ whenever $Y\ge W$, because then $w+Y>W$ and $w+a\eta+(1-a)Y\ge w+a\eta+(1-a)W=W$.

\begin{proof}[Proof of Proposition~\ref{app:prop-blocked}]
\emph{Part (i).} At a pending state $(0,k,P)$ with $k<M$, the plans are described by the action in the available outcome (serve, search, or wait) and in the unavailable outcome (search or wait). By~\eqref{eq:app-dp}, their ratios are as follows. Searching in both outcomes gives $w+Y_k$. Serving when available and searching otherwise gives $w+a\eta+(1-a)Y_k$. Serving when available and waiting otherwise gives $(w+a\eta)/a=W$. Searching when available and waiting otherwise gives $(w+aY_k)/a\ge w+Y_k$, and, for $a<1$, waiting when available and searching otherwise gives $w/(1-a)+Y_k\ge w+Y_k$. Waiting in both outcomes leaves the state unchanged. Hence $V^P_k=\Pi(Y_k)$. For $k\ge M$, only the third plan is available and $V^P_k=W$; since $Y_k=c+(1-q_k)W\ge W$ there because $q_kW\le q_kB_{\max}\le c$, also $V^P_k=\Pi(Y_k)$. When $a=1$, the unavailable outcome has probability zero, and the same formulas hold.

For monotonicity, note first that $V^P_{M-1}=\Pi(Y_{M-1})\le W=V^P_M$. Suppose $V^P_{k+1}\le V^P_{k+2}$. Because $q_k\ge q_{k+1}$ and values are nonnegative, $Y_k=c+(1-q_k)V^P_{k+1}\le c+(1-q_{k+1})V^P_{k+2}=Y_{k+1}$, and therefore $V^P_k=\Pi(Y_k)\le \Pi(Y_{k+1})=V^P_{k+1}$. Downward induction from $k=M-1$ proves that both sequences are nondecreasing on $\{0,\ldots,M\}$. Beyond $M$, $V^P_k=W$ is constant and $Y_k=c+(1-q_k)W$ is nondecreasing because $q_k$ is nonincreasing.

\emph{Part (ii).} Comparing the three terms in $\Pi$: $w+Y\le w+a\eta+(1-a)Y$ exactly when $Y\le\eta$; for $a<1$, $w+a\eta+(1-a)Y\le W$ exactly when $Y\le W$, because $W-w-a\eta=(1-a)W$; and $Y<\eta$ implies $w+Y<w+\eta\le W$. These comparisons give the three regimes, with ties at the boundaries. For $a=1$ the second and third plans coincide, and the statement holds trivially.

It remains to show that $Y_k\ge W$ exactly when $q_kW\le c$. If $q_kW>c$, then $Y_k\le c+(1-q_k)W<W$ because $V^P_{k+1}\le W$. If $q_kW\le c$, then $q_jW\le c$ for all $j\ge k$. For any such $j$, $V^P_{j+1}=W$ implies $Y_j=c+(1-q_j)W\ge W$ and hence $V^P_j=\Pi(Y_j)=W$. Downward induction from $V^P_M=W$ gives $V^P_{k+1}=W$ and $Y_k=c+(1-q_k)W\ge W$.

\emph{Part (iii).} At an unsubmitted state $(0,k,U)$ with $k<M$, waiting leaves the state unchanged, submitting gives $\kappa+V^P_k$, and searching without submitting gives $w+c+(1-q_k)V^U_{k+1}$. Hence
\begin{equation}\label{eq:app-unsubmitted}
 V^U_k=\min\bigl\{\kappa+V^P_k,\;w+c+(1-q_k)V^U_{k+1}\bigr\},
 \qquad V^U_M=\kappa+W.
\end{equation}
Searching once and submitting after a failure costs $w+c+(1-q_k)(\kappa+V^P_{k+1})=\kappa+V^P_k+\psi_k$, which explains the definition of $\psi_k$. Because
\[
 w+Y-\Pi(Y)=\max\bigl\{0,\;a(Y-\eta),\;w+Y-W\bigr\}
\]
is nondecreasing in $Y$, $Y_k$ is nondecreasing, and $-\kappa q_k$ is nondecreasing, $\psi_k$ is nondecreasing on $\{0,\ldots,M\}$. At the cap, $\psi_M=w+c-q_M(W+\kappa)\ge w+c-q_MB_{\max}\ge w$, because $W+\kappa\le B_{\max}$. Hence $k_{\rm ask}\le M$.

For $k\ge k_{\rm ask}$, we show $V^U_k=\kappa+V^P_k$ by downward induction from $k=M$. If $V^U_{k+1}=\kappa+V^P_{k+1}$, the search option in~\eqref{eq:app-unsubmitted} equals $\kappa+V^P_k+\psi_k\ge\kappa+V^P_k$, so submitting is optimal at $k$. For $k<k_{\rm ask}$, $\psi_k<0$, so searching once and submitting after a failure is strictly cheaper than submitting now; hence $V^U_k<\kappa+V^P_k$, and searching is the unique optimal action. Thus the policy that searches below $k_{\rm ask}$ and submits at $k_{\rm ask}$ is optimal.

\emph{Part (iv).} By Proposition~\ref{app:prop-caps}(ii), the pending values do not depend on which sufficient cap is used, and no submission fee is paid from a pending state, so $V^P_k$, $Y_k$, and the pending policy do not depend on $\kappa$. If $p>0$ and $0<\alpha<1$, then $q_k>0$ for every $k$, so for any fixed $\bar k$ and all sufficiently large $\kappa$ we have $\psi_k<0$ for $k\le\bar k$, that is, $k_{\rm ask}>\bar k$.

For the converse example, let $w=\eta=1$, $a=0.5$, $c=0.1$, $p=0.2$, and $\alpha=0.5$. Then $W=3$, $q_0=0.1$, $q_1=0.05/0.9<0.1$, and $q_0W=0.3>c$, so by part~(ii) a pending agent with no failures searches when the human is unavailable. Every pending period costs at least $w$, so $V^P_2\ge1$. Hence $w+Y_1\ge1+0.1+0.9\cdot1=2$, $w+a\eta+(1-a)Y_1\ge1.5$, and $W=3$, so $V^P_1\ge1.5$ and $Y_0\ge0.1+0.9\cdot1.5=1.45$. The identity for $w+Y-\Pi(Y)$ then gives $\psi_0\ge a(Y_0-\eta)-\kappa q_0\ge0.225-0.1\kappa$, which is nonnegative for $\kappa\le2.25$. Hence $k_{\rm ask}=0$.
\end{proof}

\begin{proof}[Proof of Theorem~\ref{thm:thresholds}]
Part~(iii) of Proposition~\ref{app:prop-blocked} gives the asking threshold; the zero-fee discussion after Proposition~\ref{app:prop-blocked} gives the consistent choice at ties. For the pending statement, let $a<1$; for $a=1$ the unavailable outcome is unreachable. After an unavailable realization, searching leads to the expected continuation cost $Y_k$ and waiting returns to the same state, with continuation cost $V^P_k$. Hence search is strictly preferable exactly when $Y_k<V^P_k=\Pi(Y_k)$. Because $Y<w+Y$ always and, for $a<1$, $Y<w+a\eta+(1-a)Y$ exactly when $Y<W$, this holds exactly when $Y_k<W$, that is, by part~(ii), when $q_k(\eta+w/a)>c$. Because $q_k$ is nonincreasing, the set of such $k$ is an initial segment, so pending search stops at a failure-count threshold. The absence of a universal ordering is part~(iv).
\end{proof}

\subsection{Proof of Theorem~\ref{thm:overlap}}
\label{app:proof-overlap}

Let $\alpha=1$. The first search succeeds with probability $p$, and after a failure the posterior is zero, so the task can be resolved only by the human. Because $p\eta>c$ in the region~\eqref{eq:overlapregion} and $B_{\max}\ge\eta$, we have $q(0)B_{\max}>c$ and $q(1)=0$, so the cap is $M=1$: the model with one search attempt is the exact $\alpha=1$ case, not a truncation. Write
\[
 W=\eta+\frac wa,\qquad Y_0=c+(1-p)W,\qquad Y_1=c+pw+(1-p)W=Y_0+pw.
\]
Once the search attempt has failed, a pending task with zero or one unit of work has value $W$: it pays $\eta$ once, and serving at the first available period $G$ while working completes it at $\max\{x,G\}=G$, with $\E G=1/a$. An unsubmitted task in the same situation has value $\kappa+W$, because submitting at once costs $\kappa+W$ and working first costs at least $w+\kappa+W$.

\paragraph{Zero remaining work}
Searching before submitting costs
\[
 S_0=w+c+(1-p)(\kappa+W).
\]
Because $p\eta>c$, we have $Y_0<W$, so by Proposition~\ref{app:prop-blocked}(i) the value of submitting immediately is
\[
 A_0=\kappa+\min\bigl\{w+Y_0,\;w+a\eta+(1-a)Y_0\bigr\}.
\]
Direct subtraction gives
\[
 S_0-(\kappa+w+Y_0)=-p\kappa<0,
 \qquad
 S_0-\bigl[\kappa+w+a\eta+(1-a)Y_0\bigr]=w+ac-p(\kappa+w+a\eta).
\]
Hence, if $p>\theta_0$, searching first is strictly better than both ways of submitting immediately. Waiting is dominated, and there is no work. Thus search-first is the unique optimal initial action, and $V^U_0=S_0$.

\paragraph{One remaining work unit}
Consider first the pending state $(1,0,P)$. In each availability outcome, compare the continuation costs of the agent's options. Searching leaves the work unit undone: on success, the work costs $w$, and on failure, the pending value is $W$, so searching has continuation cost $Y_1$. Working leads to $(0,0,P)$, whose value is $V^P_0=\min\{w+Y_0,\;w+a\eta+(1-a)Y_0,\;W\}$. If $Y_0<\eta$, then $V^P_0-Y_1=(1-p)w>0$. If $\eta\le Y_0<W$, then $V^P_0-Y_1=w+a\eta-aY_0-pw=a(p\eta-c)>0$. Hence searching is strictly better than working when the human is unavailable. When the human is available, serving while working completes the task and has continuation cost $\eta$, and
\[
 Y_1-\eta=c+\frac wa-p\left(\eta+\frac{(1-a)w}a\right)>0
 \quad\text{whenever}\quad p<\theta_1,
\]
because $\theta_1\le(w+ac)/(a\eta+(1-a)w)$. Hence serving is strictly best when available. The value of submitting immediately is therefore
\[
 A_1=\kappa+w+a\eta+(1-a)Y_1.
\]
Without an initial submission, waiting is dominated, and the remaining options are search and work. Searching first costs
\[
 S_1=w+c+pw+(1-p)(\kappa+W)=S_0+pw,
\]
and working first costs $w+V^U_0=w+S_0=S_1+(1-p)w>S_1$. Finally,
\[
 S_1-A_1=w+ac-p\bigl[\kappa+(1-a)w+a\eta\bigr]>0
 \quad\text{whenever}\quad p<\theta_1.
\]
This exhausts the initial decisions and proves the policy switch throughout the region~\eqref{eq:overlapregion}.

\paragraph{Nonemptiness}
Because $0<\theta_0<\theta_1$, the region is nonempty exactly when $c/\eta<1$, $\theta_0<1$, and $c/\eta<\theta_1$. The first condition is $c<\eta$, which implies the second because $\kappa>0$. The third is equivalent to $c\kappa+c(1-a)w<\eta w$, that is, $\kappa<w[\eta/c-(1-a)]$. Together with $\kappa>0$, these give~\eqref{eq:nonempty}.

\paragraph{Resource use}
Without work, the agent searches once, and a submission and a human answer occur only after a failure, with probability $1-p$. With one work unit, a question is always submitted; a search occurs only if the human is initially unavailable, with probability $1-a$; and that search withdraws the question with probability $(1-a)p$. The probability of a human answer is therefore $1-(1-a)p$, which exceeds $1-p$ by $ap$, and the expected number of searches falls from $1$ to $1-a$.

\paragraph{The numerical illustration}
For $w=\kappa=\eta=1$, $c=0.1$, and $a=0.5$, $\theta_0=1.05/2.5=0.42$, $\theta_1=1.05/2=0.525$, and $c/\eta=0.1$, so the region is $0.42<p<0.525$. At $p=0.5$, the probability of a human answer is $1-p=0.5$ without work and $1-(1-a)p=0.75$ with one work unit.
\qed

\subsection{Proof of Proposition~\ref{prop:withdraw}}
\label{app:proof-withdraw}

As in Appendix~\ref{app:proof-overlap}, $\alpha=1$ makes the one-attempt model exact in both models. We use $W$, $Y_0$, $Y_1$, $S_0$, $S_1$, and $A_1$ from that proof. When no further search is possible, either because the attempt has failed or because withdrawal is not allowed, a pending task with $x\in\{0,1\}$ has value $W$ and an unsubmitted task has value $\kappa+W$, by the same argument as there.

\paragraph{(i) Without withdrawal}
At the initial state $(1,0,U)$, waiting returns the same state at cost $w>0$ and is dominated. Submitting now forfeits the search and has value $\kappa+W$. Searching without submitting costs $S_1$: on success, one work unit remains and costs $w$, and on failure, the value is $\kappa+W$. Working without submitting leads to $(0,0,U)$, whose value in this model is $\min\{\kappa+W,S_0\}$; this option therefore costs $\min\{w+\kappa+W,\;S_1+(1-p)w\}>\min\{\kappa+W,S_1\}$. Finally,
\[
 S_1-(\kappa+W)=w+c-p\left(\kappa+\eta+\frac{(1-a)w}a\right)
 =\frac{a(w+c)-p\bigl[a\kappa+a\eta+(1-a)w\bigr]}a,
\]
which is negative exactly when $p>\theta_{\rm nw}$. Hence search-first is uniquely optimal, with value $S_1$.

\paragraph{(ii) With withdrawal}
The one-work-unit part of Appendix~\ref{app:proof-overlap} uses only $c/\eta<p<\theta_1$ to show that the best plan that submits immediately is the stated one, with value $A_1$, and that $S_1>A_1$. That argument also compares working first with searching first using $V^U_0=S_0$, which requires $p>\theta_0$. We replace this step with a direct comparison. Without a submission, working first costs $w+V^U_0$, where $V^U_0=\min\{S_0,\kappa+V^P_0\}$. First, $w+S_0=S_1+(1-p)w>A_1$. Second, $w+\kappa+V^P_0-A_1=V^P_0-a\eta-(1-a)Y_1$. Since $p<\theta_1$ gives $Y_1>\eta$, that is, $Y_0-\eta>-pw$, each of the three candidates for $V^P_0$ gives a positive difference:
\begin{align*}
 w+Y_0-a\eta-(1-a)Y_1&=w\bigl[1-(1-a)p\bigr]+a(Y_0-\eta)>(1-p)w>0,\\
 w+a\eta+(1-a)Y_0-a\eta-(1-a)Y_1&=w\bigl[1-(1-a)p\bigr]>0,\\
 W-a\eta-(1-a)Y_1&=w+(1-a)(p\eta-c)+\frac{(1-a)^2pw}a>0,
\end{align*}
where the last line uses $p\eta>c$. Hence working first is strictly worse than $A_1$. Waiting is dominated, so submitting immediately with the stated contingent actions is uniquely optimal, and the value with withdrawal is $A_1$.

\paragraph{(iii) Cost and resource use}
The cost reduction is $S_1-A_1=w+ac-p[\kappa+(1-a)w+a\eta]$, which is~\eqref{eq:withdrawsaving} and positive because $p<\theta_1$. Without withdrawal, the agent searches once. With probability $p$ it completes in period~2, and otherwise it submits in period~2 and completes after a further geometric wait for the human. Its expected completion time is $2p+(1-p)(1+1/a)$, and a submission and an answer each occur with probability $1-p$. With withdrawal, a submission always occurs. With probability $a$, the task completes in period~1 after one answer. Otherwise, the agent searches once; on success, the question is withdrawn and the task completes in period~2, and on failure, it completes after a further geometric wait and one answer. Its expected completion time is $a+(1-a)[2p+(1-p)(1+1/a)]$, the answer probability is $a+(1-a)(1-p)=1-(1-a)p$, and the expected number of searches is $1-a$. Subtraction gives the stated changes: the expected completion time falls by $1-(1-a)p$, the answer probability rises by $ap$, the submission probability rises by $p$, and expected searches fall by $a$. As a check, $w[1-(1-a)p]+ac-\kappa p-\eta ap$ equals the cost reduction. For the illustration in the main text, $w=\kappa=\eta=1$, $c=0.1$, $a=0.5$, and $p=0.5$, so $W=3$, $Y_1=2.1$, $S_1=3.60$, and $A_1=3.55$; the four terms of the check are $0.75$, $0.05$, $-0.5$, and $-0.25$. Moreover, $\theta_{\rm nw}=0.55/1.5\approx0.367$, so the region~\eqref{eq:withdrawregion} is $0.367<p<0.525$.

\paragraph{Nonemptiness}
Because $\theta_{\rm nw}$ and $\theta_1$ are positive, the region~\eqref{eq:withdrawregion} is nonempty exactly when $c/\eta<1$, $c/\eta<\theta_1$, $\theta_{\rm nw}<1$, and $\theta_{\rm nw}<\theta_1$. The first is $c<\eta$, and the second is $\kappa<w[\eta/c-(1-a)]$, as in Appendix~\ref{app:proof-overlap}. The third rearranges to $(2a-1)w<a(\kappa+\eta-c)$. For the fourth, write $N_1=w+ac$ and $D_1=\kappa+(1-a)w+a\eta$, so that $\theta_1=N_1/D_1$ and $\theta_{\rm nw}=[N_1-(1-a)w]/[D_1-(1-a)\kappa]$. Cross-multiplying positive denominators shows that, for $a<1$, $\theta_{\rm nw}<\theta_1$ is equivalent to $wD_1>\kappa N_1$, that is, $(1-a)w^2+a(\eta w-\kappa c)>0$. The second condition gives $\kappa c<w[\eta-(1-a)c]$, so the left side exceeds $(1-a)w^2+a(1-a)wc>0$. Thus the fourth condition is implied by the second, which yields~\eqref{eq:withdrawnonempty}.
\qed

\subsection{Proof of Proposition~\ref{prop:noindex} and verification of Example~\ref{ex:priority}}
\label{app:proof-noindex}

\begin{proof}[Proof of Proposition~\ref{prop:noindex}]
Suppose such a score $r$ exists. In Example~\ref{ex:priority}, tasks $A$ and $B$ have the same parameters and states in both portfolios, and $a$ and $\eta$ are common. In $\{A,B\}$, serving $A$ first is uniquely optimal, so $B$ cannot have the largest score, and $r(A)>r(B)$. In $\{A,B,C\}$, serving $B$ first is uniquely optimal, so $r(B)>r(A)$. This is a contradiction. The strict optimality claims are verified below.
\end{proof}

\paragraph{Setup}
All tasks are initially unsubmitted and have no work, $\alpha_i=1$, $a=1$, $c_i=0.01$, $\kappa_i=0.05$, and $\eta=0.05$; tasks $A$, $B$, $C$ have $(w_i,p_i)=(4,0.75)$, $(1,0.10)$, and $(0.5,0)$. Every period is available. Each task has at most one useful search, after whose failure it can be resolved only by the human, and searching $C$ is useless because $p_C=0$. Every task's eventual answer costs $f_i=\kappa_i+\eta=0.1$. By Lemma~\ref{lem:reform}, we may restrict attention to plans that submit at most the question that is served, never search a served task, and change the state with positive probability. This restriction preserves the optimal cost conditional on each choice of first service, because the constructions in Appendix~\ref{app:proof-reform} serve the same questions in the same periods as the policy they modify. Let $V^{\rm h}_X$ denote the optimal cost of a set $X$ of tasks that all require the human. By Proposition~\ref{prop:pair_exact}, these tasks are served in descending weight order, one per period, so
\[
 V^{\rm h}_A=4.1,\quad V^{\rm h}_B=1.1,\quad V^{\rm h}_C=0.6,\quad
 V^{\rm h}_{AB}=6.2,\quad V^{\rm h}_{AC}=5.2,\quad V^{\rm h}_{BC}=2.2,\quad
 V^{\rm h}_{ABC}=7.8.
\]
For example, $V^{\rm h}_{ABC}=4\cdot1+1\cdot2+0.5\cdot3+f_A+f_B+f_C=7.8$.

\paragraph{Intermediate values}
We need the values of states in which some tasks have searched without success and others have not yet searched; write $X^{\rm u}$ for an unsearched task and $X^{\rm f}$ for a task whose search has failed. A single unsearched task has values $V(A^{\rm u})=\min\{4.1,\;4+0.01+0.25V^{\rm h}_A\}=4.1$ and $V(B^{\rm u})=\min\{1.1,\;1+0.01+0.9V^{\rm h}_B\}=1.1$. Enumerating the first-period decisions in the same way gives
\begin{gather*}
 V(A^{\rm u},B^{\rm u})=6.100,\quad
 V(A^{\rm f},B^{\rm u})=6.100,\quad V(A^{\rm u},B^{\rm f})=6.135,\\
 V(A^{\rm u},C)=5.200,\quad V(B^{\rm u},C)=2.200,
\end{gather*}
\[
 V(A^{\rm f},B^{\rm u},C)=7.650,\qquad V(A^{\rm u},B^{\rm f},C)=7.360.
\]
Once a task is served, its search history no longer matters, so serving $A$ while $B$ searches costs $7.650$ at $(A^{\rm f},B^{\rm u},C)$, as in the corresponding entry of the table below. The alternatives cost $7.8$ (serving $A$ without search), $10.8$ (serving $B$), $11.6$ (serving $C$ while $B$ searches), and $13.05$ (declining service while $B$ searches). Similarly, $V(A^{\rm u},B^{\rm f},C)$ is attained by serving $B$ while $A$ searches, at $7.360$, against $7.8$, $7.985$, and $9.11$ for serving $A$, serving $C$ while $A$ searches, and declining service while $A$ searches. The remaining values follow in the same way.

\paragraph{First-period decisions}
Each entry of the following table is the first period's delay and fee cost plus the optimal continuation, for a given first service and set of concurrent searches. A dash denotes a search of the served task, which is dominated.
\begin{center}
\begin{tabular}{llrrrr}
\toprule
Portfolio & Service & No search & Search $A$ & Search $B$ & Search $A,B$\\
\midrule
$A,B$ & $A$ & 6.200 & --- & 6.100 & ---\\
$A,B$ & $B$ & 9.200 & 6.135 & --- & ---\\
$A,B$ & None & dominated & 7.360 & 10.9415 & 7.260\\
\midrule
$A,B,C$ & $A$ & 7.800 & --- & 7.650 & ---\\
$A,B,C$ & $B$ & 10.800 & 7.360 & --- & ---\\
$A,B,C$ & $C$ & 11.700 & 7.960 & 11.5415 & 7.860\\
$A,B,C$ & None & dominated & 9.0725 & 12.654 & 8.935\\
\bottomrule
\end{tabular}
\end{center}
For example, in $\{A,B\}$, serving $A$ while $B$ searches costs $5+0.1+0.01+0.9V^{\rm h}_B=6.100$, and serving $B$ while $A$ searches costs $5+0.1+0.01+0.25V^{\rm h}_A=6.135$. In $\{A,B,C\}$, serving $A$ while $B$ searches costs $5.5+0.1+0.01+0.1V^{\rm h}_C+0.9V^{\rm h}_{BC}=7.650$; serving $B$ while $A$ searches costs $5.5+0.1+0.01+0.75V^{\rm h}_C+0.25V^{\rm h}_{AC}=7.360$; serving $C$ while $A$ and $B$ search costs $5.5+0.1+0.02+0.225V^{\rm h}_{AB}+0.025V^{\rm h}_A+0.675V^{\rm h}_B=7.860$; and declining service while both search costs $5.5+0.02+0.225V^{\rm h}_{ABC}+0.025V^{\rm h}_{AC}+0.675V^{\rm h}_{BC}+0.075V^{\rm h}_C=8.935$. Declining service while only $A$ searches costs $5.51+0.75V(B^{\rm u},C)+0.25V(A^{\rm f},B^{\rm u},C)=9.0725$. Declining service without any search leaves the state unchanged and is dominated. Taking the minimum of each row gives the table of Example~\ref{ex:priority}. In $\{A,B\}$, serving $A$ is uniquely optimal; in $\{A,B,C\}$, serving $B$ is uniquely optimal.

\subsection{Projection, nonnegative prices, and the proof of Proposition~\ref{prop:pair_exact}}
\label{app:proof-pair}

\begin{lemma}[Projection onto a task block]\label{app:lem-projection}
For any nonempty set $I$ of tasks, let $V_I$ be the optimal value of the tasks in $I$ when they share one human with the original availability process and the caps of Proposition~\ref{app:prop-caps}. For any feasible policy $\pi$ of the full portfolio, let $J_i^\pi(s)$ be the expected cost attributable to task $i$, namely its delay cost and its own search, submission, and answer fees. Then $V_I(s_I)\le\sum_{i\in I}J_i^\pi(s)$. In particular, $\Delta_{ij}=V_{ij}-V_i-V_j\ge0$.
\end{lemma}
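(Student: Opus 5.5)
We prove the projection inequality by a simulation argument. Given a policy $\pi$ for the full portfolio, we build a policy $\pi_I$ for the block $I$ alone, sharing one human, whose expected cost equals $\sum_{i\in I}J_i^\pi(s)$. Then $V_I(s_I)\le J^{\pi_I}=\sum_{i\in I}J_i^\pi(s)$ follows from the definition of $V_I$ as an infimum over nonanticipative, possibly randomized, policies.

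\textbf{Step 1: the simulation.} The controller of the block privately simulates the tasks outside $I$. Their recoverability, their search outcomes, and the shared availability process are generated as follows. Availability is the actual availability of the block's human, which has the same law $a$ and is observed in each period. The tasks outside $I$ are drawn from independent randomization with their correct laws. At each period, $\pi_I$ feeds $\pi$ the history consisting of the actual history of the tasks in $I$ and the simulated history of the others. It then executes $\pi$'s submissions, searches, work and waiting for tasks in $I$.

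When $\pi$ serves a question of a task in $I$, $\pi_I$ serves the same question. When $\pi$ serves a task outside $I$, the block's human is left idle, and the simulated task is resolved.

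\textbf{Step 2: the law of the simulated history.} Because recoverability and search outcomes are independent across tasks and independent of availability, the joint history has the same law under $\pi_I$ as under $\pi$ in the full model. Hence every task $i\in I$ has the same distribution of completion time, searches, submissions and answers. Its cost therefore equals $J_i^\pi(s)$.

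\textbf{Step 3: feasibility.} The policy $\pi_I$ is nonanticipative given its private randomization. The human answers at most one question per period, since $\pi$ does. A question is served only if it was actually submitted and is unresolved, because these events concern tasks in $I$, whose histories are actual rather than simulated. The caps also carry over. If $\pi$ respects the full-portfolio caps, so does $\pi_I$. By Proposition~\ref{app:prop-caps}(iii), the capped value of the block equals its uncapped value, so it suffices to compare against the uncapped optimum, which any $\pi_I$ upper-bounds. We can also take $\pi$ arbitrary, since the optimum for $I$ is computed without restriction.

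\textbf{Step 4: nonnegative prices.} For $\Delta_{ij}\ge0$, take $I=\{i,j\}$ and let $\pi$ be optimal for the pair problem. Apply the inequality with $I=\{i\}$ inside the pair, using a policy optimal for the pair problem, at state $(s_i,s_j)$. This gives $V_i(s_i)\le J_i^\pi$ and $V_j(s_j)\le J_j^\pi$. Summing,
\[
V_i(s_i)+V_j(s_j)\le J_i^\pi+J_j^\pi=V_{ij}(s_i,s_j).
\]
The last equality holds because every cost in the pair problem is attributable to exactly one of the two tasks: the delay of each incomplete task, and each task's own search, submission and answer fees.

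\textbf{Main obstacle.} The main obstacle is making the coupling rigorous. Specifically, one must check that feeding $\pi$ a partly simulated history reproduces exactly the joint law of the actual model. This needs the cross-task independence assumptions together with the exogeneity of availability. The rest is bookkeeping of costs per task.
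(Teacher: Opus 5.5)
Your proposal is correct and follows the paper's proof. The construction is the same: you simulate the excluded tasks with independent randomization under the same availability process, leave the human idle whenever $\pi$ serves an excluded task, and bound the resulting block policy (which may exceed the caps) below by the uncapped block optimum, which equals $V_I$ by Proposition~\ref{app:prop-caps}(iii). You then obtain $\Delta_{ij}\ge0$ as the paper does, by applying the inequality to an optimal pair policy with the two singleton blocks and noting that every pair cost is attributable to exactly one of the two tasks.
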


\begin{proof}
Run $\pi$ in the system that contains only the tasks in $I$, simulating the excluded tasks with independently drawn search outcomes and using the same availability process. Whenever $\pi$ serves an excluded task, leave the human idle. The tasks in $I$ receive the same actions as in the full system. Because the excluded tasks' outcomes are independent of everything else and the state is a sufficient statistic, the simulated system has the same joint law as the full system, so the tasks in $I$ incur the same expected costs. Conditional on the simulated outcomes, this is a feasible policy for the block. It may search a task of $I$ beyond its cap if $\pi$ does, so it is feasible for the block without caps. Its cost is therefore at least the block's optimal value without caps, which equals $V_I(s_I)$ by Proposition~\ref{app:prop-caps}(iii). Applying the inequality to an optimal policy for the pair $\{i,j\}$ and the two singleton blocks gives $V_i+V_j\le V_{ij}$.
\end{proof}

\begin{proof}[Proof of Proposition~\ref{prop:pair_exact}]
A resolved task's remaining cost is its deterministic work cost, which uses neither another agent nor the human, so its value is additive with any other block. Hence the pairwise approximation equals~\eqref{eq:app-critic-unresolved}. With zero or one unresolved task, \eqref{eq:app-critic-unresolved} is the exact additive value. With two unresolved tasks $i$ and $j$, the singleton terms cancel inside the single pairwise price, leaving $C_R+V_{ij}=V^*$.

Now suppose that no remaining search can succeed and that every unresolved task has $x_i=0$. Each unresolved task must be served once and pays $f_i=\eta+\kappa_i\ind{z_i=U}$. Under any policy, the $\ell$th service among unresolved tasks occurs no earlier than the $\ell$th available period $T_\ell$, and a task completes when it is served. By the rearrangement inequality, pathwise $\sum_{i\in\mathcal A}w_iC_i\ge\sum_\ell w_{(\ell)}T_\ell$, where $w_{(1)}\ge w_{(2)}\ge\cdots$ are the sorted weights. Serving in descending weight order in every available period attains this bound. Since $\E T_\ell=\ell/a$ and $\sum_\ell\ell w_{(\ell)}=\sum_iw_i+\sum_{i<j}\min(w_i,w_j)$,
\begin{equation}\label{eq:app-allhuman}
 V^*=C_R+\sum_{i\in\mathcal A}f_i+\frac1a\Bigl(\sum_{i\in\mathcal A}w_i+\sum_{i<j}\min(w_i,w_j)\Bigr).
\end{equation}
Applied to one and two tasks, \eqref{eq:app-allhuman} gives $V_i=f_i+w_i/a$ and $\Delta_{ij}=\min(w_i,w_j)/a$, which is~\eqref{eq:humanboundary}. Substituting into~\eqref{eq:app-critic-unresolved} shows that the pairwise approximation equals~\eqref{eq:app-allhuman}.

Both boundaries are invariant: the number of unresolved tasks cannot increase, and in the second boundary resolved tasks leave $\mathcal A$ and the remaining ones keep $x_i=0$ and no useful search. Hence $\widehat V_{\rm PCP}=V^*$ at every state of the boundary and at all its successors. An exact minimizer of~\eqref{eq:greedy} then minimizes $g_d(s)+P_dV^*(s)$ over $\mathcal D_+(s)$, whose minimum is $V^*(s)$ by Proposition~\ref{app:prop-dp}. A policy that attains this minimum at every state and satisfies the progress guard has cost $V^*$, by telescoping $V^*$ along its trajectory and Lemma~\ref{app:lem-progress}. This proves that PCP is optimal on both boundaries.
\end{proof}

\subsection{Proofs for Appendix~\ref{app:implementation}}
\label{app:proof-implementation}

\begin{proof}[Proof of Lemma~\ref{app:lem-progress}]
Fix an availability outcome and a deterministic plan $d\in\mathcal D_{\rm red}(s)$. If $d$ submits a question, serves a question, lets an agent search, or lets an agent work, the state changes with probability one: a search either resolves its requirement or raises its failure count, which is below the cap. Otherwise, the state does not change. A submission is made in both availability outcomes. Hence the probability that $d$ changes the state is one of $0$, $a$, $1-a$, and $1$, and it is at least $\varepsilon_a$ if $d\in\mathcal D_+(s)$; for $a=1$ only the available outcome has positive probability. A randomized choice among such plans has the same lower bound. Under the policy, the conditional probability that the state changes in a period, given the past, is therefore at least $\varepsilon_a$. By Lemma~\ref{app:lem-rankdp}, each change lowers the rank by at least one, so at most $\mathrm{rk}(s)$ changes precede completion, and each takes at most $1/\varepsilon_a$ periods in expectation. Hence the expected completion time is at most $\mathrm{rk}(s)/\varepsilon_a$. Per-period costs are bounded on the finite capped model, so the expected total cost is finite.
\end{proof}

\subsection{Proofs for the human-resolution regime}
\label{app:proof-human}

Throughout this subsection, the state is in the human-resolution regime, $\bar a=1-a$, and $G$ is geometric on $\{1,2,\ldots\}$ with success probability $a$. For a task with work $x$, let $h(t)=w\max\{x,t\}$ be its delay cost if its question is answered in period $t$. Because $\Pr(G>t)=\bar a^t$,
\begin{equation}\label{eq:app-tail}
 \E\bigl[h(t+G)-h(t)\bigr]=w\,\E\bigl[(t+G-x)^+-(t-x)^+\bigr]=\frac wa\,\bar a^{(x-t)^+}
\end{equation}
for every $t\ge0$. In particular, $\E\max\{x,G\}=x+\bar a^x/a$.

\begin{proof}[Proof of Lemma~\ref{app:lem-humanpair}]
Every unresolved task must be answered exactly once and pays $f_i$ exactly once. Under any policy, task $i$ completes no earlier than $\max\{x_i,T_i\}$, where $T_i$ is the period in which it is answered, and every agent working in every period attains this.

\emph{Singleton.} The answer period satisfies $T_i\ge G$, where $G$ is the first available period, and answering at $G$ is feasible. Since $h_i$ is nondecreasing, $V_i=f_i+\E h_i(G)=f_i+w_i(x_i+\bar a^{x_i}/a)$ by~\eqref{eq:app-tail}.

\emph{Pair.} Under any policy for the pair, let $T_{(1)}<T_{(2)}$ be the two answer periods, let $\iota$ be the task answered second, and let $\mathcal F$ be the information available at the end of period $T_{(1)}$, which determines $\iota$. The delay cost equals
\[
 h_i(T_{(1)})+h_j(T_{(1)})+\bigl[h_\iota(T_{(2)})-h_\iota(T_{(1)})\bigr].
\]
Because availability is independent over time, $T_{(2)}-T_{(1)}$ is at least a geometric random variable independent of $\mathcal F$. By~\eqref{eq:app-tail}, the conditional expectation of the bracket given $\mathcal F$ is at least $\zeta_\iota(T_{(1)})\ge\zeta(T_{(1)})$, where $\zeta_k(t)=(w_k/a)\bar a^{(x_k-t)^+}$ and $\zeta=\min\{\zeta_i,\zeta_j\}$. Both $h_k$ and $\zeta$ are nondecreasing and $T_{(1)}\ge G$, so the expected delay cost is at least $\E[h_i(G)+h_j(G)+\zeta(G)]$. Answering at $G$ the task with the larger $\zeta_k(G)$, and the other task at the next available period, attains this bound; it is feasible because the controller can submit, at the start of every period, the question it would answer if the human were available. Subtracting the singleton values gives~\eqref{eq:app-human-pair}.

The monotonicity claims hold pointwise inside the expectation, because $\bar a\le1$. Since $\bar a^{(x-G)^+}\le1$, $\Delta_{ij}\le\min(w_i,w_j)/a$, with equality when $x_i,x_j\le1$ because $G\ge1$. When $a=1$, $G=1$ and $\bar a^{(x-1)^+}=\ind{x\le1}$.
\end{proof}

\begin{proof}[Proof of Lemma~\ref{app:lem-priority}]
By Lemma~\ref{app:lem-humanpair}, $\widehat V_{\rm PCP}(s)=\sum_{i\in\mathcal A}f_i+\Phi(s)$, where
\[
 \Phi(s)=C_R(s)+\sum_{i\in\mathcal A}w_i\Bigl(x_i+\frac{\bar a^{x_i}}a\Bigr)+\sum_{i<j}\Delta_{ij}(x_i,x_j)
\]
depends only on the remaining work and on which tasks are unresolved. In this regime, no plan searches and every agent with work works. A submission of question $j$ pays $\kappa_j$ now and removes $\kappa_j$ from $f_j$ at every successor, and an answer pays $\eta$ and removes the remaining $f_j=\eta$. Hence, for every $d\in\mathcal D_+(s)$,
\[
 g_d(s)+P_d\widehat V_{\rm PCP}(s)=\omega(s)+\sum_{i\in\mathcal A}f_i+(1-a)\,\Phi(s'_0)+a\,\Phi(s'_d),
\]
where $s'_0$ is the successor in which every agent works and no question is answered, which is the same for all plans, and $s'_d$ is the successor in the available outcome. Every unresolved task can be served in the available outcome: a pending question can be served directly, and an unsubmitted one can be submitted and served under rule~(a). Thus exact PCP chooses the service in the available outcome to maximize $\Gamma_j=\Phi(s'_0)-\Phi(s'_{(j)})$ over unresolved tasks $j$, where $s'_{(j)}$ is the successor in which $j$ is served, with serving no one counted as $\Gamma=0$ and ties resolved by the tie rule.

Let $y_k=(x_k-1)^+$ be the work after the period. Serving $j$ replaces $j$'s term $w_j(y_j+\bar a^{y_j}/a)$ by the resolved cost $w_jy_j$ and removes its pairwise prices, so
\[
 \Gamma_j=\frac{w_j\bar a^{y_j}}a+\sum_{k\in\mathcal A\setminus\{j\}}\Delta_{jk}(y_j,y_k).
\]
If $x_i\le1$, then $y_i=0$ and $\Gamma_i\ge w_i/a>0$, so PCP serves some task. Suppose $w_j<w_i$. The price $\Delta_{ij}$ appears in both $\Gamma_i$ and $\Gamma_j$ and cancels in their difference. For every other task $k$, $\Delta_{ik}(0,y_k)\ge\Delta_{jk}(y_j,y_k)$, because $w_i\ge w_j\bar a^{(y_j-G)^+}$ pointwise in~\eqref{eq:app-human-pair}. Moreover, $w_i/a>w_j\bar a^{y_j}/a$. Hence $\Gamma_i>\Gamma_j$, and PCP does not serve $j$.

In terms of the objective in~\eqref{eq:greedy}, serving $i$ instead of $j$ lowers it by $a(\Gamma_i-\Gamma_j)\ge w_i-w_j$, and serving $i$ instead of no one lowers it by $a\Gamma_i\ge w_i$. A selector that evaluates every service choice and returns a plan within an absolute tolerance $\tau(s)$ of the minimum therefore has the same two properties whenever $\tau(s)$ is smaller than these margins, that is, smaller than $w_i$ and than every positive difference $w_i-w_j$, as stated in Appendix~\ref{app:implementation}.
\end{proof}

\begin{proof}[Proof of Theorem~\ref{thm:pcp_guarantee}(i)]
Let $f_i$ and $w_{\mathcal A}=\sum_{i\in\mathcal A}w_i$ be as above, so that $K=C_R+\sum_{i\in\mathcal A}f_i$.

\emph{Lower bounds on the optimum.} Every unresolved task completes no earlier than its remaining work and no earlier than period $1$, so $V^*\ge K+B$. Removing the remaining work of unresolved tasks can only lower the optimum, because every policy remains feasible without the work and completes every task no later; the resulting value is $K+Q$ by~\eqref{eq:app-allhuman}. Hence
\begin{equation}\label{eq:app-optimum-floor}
 V^*\ge K+\max\{B,Q\}.
\end{equation}

\emph{PCP's delay cost.} Let $T_i$ be the period in which PCP answers task $i$. Every agent works until its work is finished, so task $i$ completes at $C_i=\max\{x_i,T_i\}$. Let $\ell_i=\max\{x_i,1\}$ and let $\mathcal E_i=\{T_i<\ell_i\}$ be the event that the answer arrives before the work is finished. Pathwise,
\[
 C_i=(\ell_i-1)+\ind{\mathcal E_i}+\sum_{t\ge\ell_i}\ind{T_i\ge t}.
\]
Indeed, on $\mathcal E_i$ the task completes at $\ell_i=x_i$ and the sum is empty, and otherwise the sum has $T_i-\ell_i+1$ terms. Let $N_i=\sum_{t\ge\ell_i}\ind{T_i\ge t,\ \text{the human is available in period }t}$. The event $\{T_i\ge t\}$ is determined before period $t$'s availability is drawn, so by independence and Tonelli's theorem,
\begin{equation}\label{eq:app-predictable}
 \E\sum_{t\ge\ell_i}\ind{T_i\ge t}=\frac1a\,\E N_i.
\end{equation}
This does not require the service order to be independent of availability. In every period counted by $N_i$, task $i$ is unresolved with at most one unit of work left, and the human is available. By Lemma~\ref{app:lem-priority}, PCP serves some task $j$ with $w_j\ge w_i$ in that period. If $j=i$, the period is $T_i$, which can be counted only on $\mathcal E_i^c$. If $j\ne i$, charge the period to the pair $\{i,j\}$, at cost $w_i=\min(w_i,w_j)$. Each pair is charged at most once, because only the member answered first can be served while the other waits, and each task is answered once. Hence, pathwise,
\[
 \sum_{i\in\mathcal A}w_iN_i\le\sum_{i\in\mathcal A}w_i\ind{\mathcal E_i^c}+\sum_{i<j}\min(w_i,w_j).
\]
Combining this with~\eqref{eq:app-predictable}, the identity for $C_i$, and $a\le1$,
\[
 \sum_{i\in\mathcal A}w_i\E C_i\le B-w_{\mathcal A}+\sum_iw_i\Pr(\mathcal E_i)+\frac1a\sum_iw_i\Pr(\mathcal E_i^c)+\frac1a\sum_{i<j}\min(w_i,w_j)\le B-w_{\mathcal A}+Q.
\]
PCP pays each unresolved task's fees $f_i$ exactly once and the resolved work cost $C_R$, so
\begin{equation}\label{eq:app-pcp-primitive}
 J^{\rm PCP}\le K+B+Q-w_{\mathcal A}.
\end{equation}
The progress guard ensures that all expectations are finite (Lemma~\ref{app:lem-progress}).

\emph{Conclusion.} By~\eqref{eq:app-optimum-floor}, $B+Q\le2(V^*-K)$, so~\eqref{eq:app-pcp-primitive} gives $J^{\rm PCP}\le2V^*-K-w_{\mathcal A}\le2V^*$, which is~\eqref{eq:pcp_twofactor}. The argument uses exact minimization only in the service choice, through Lemma~\ref{app:lem-priority}, and holds for every tie rule. It therefore applies to any implementation that enumerates the admissible submissions and optimizes the service choice exactly in this regime. It also applies to the coordinate implementation of Appendix~\ref{app:implementation} whenever its absolute tolerance satisfies the condition stated there.

\emph{Lower bound $\varphi$ under the no-submission tie rule.} This is Lemma~\ref{app:lem-phi}, proved in Appendix~\ref{app:proof-phi}.
\end{proof}

\begin{proof}[Proof of Corollary~\ref{app:cor-interp}]
By~\eqref{eq:app-pcp-primitive} and~\eqref{eq:app-optimum-floor}, $J^{\rm PCP}-V^*\le B+Q-w_{\mathcal A}-\max\{B,Q\}=\min\{B,Q\}-w_{\mathcal A}$, and dividing by $V^*\ge K+\max\{B,Q\}$ gives the first inequality in~\eqref{eq:app-interp}. The second holds because $\min\{B,Q\}\le\max\{B,Q\}$. Note that $\min\{B,Q\}\ge w_{\mathcal A}$, because $B\ge w_{\mathcal A}$ and $Q\ge w_{\mathcal A}/a$. If $x_i\le1$ for all unresolved tasks, then $B=w_{\mathcal A}$, so $J^{\rm PCP}\le K+Q\le V^*$ and hence $J^{\rm PCP}=V^*=K+Q$. Moreover, $x_i+\bar a^{x_i}/a=1/a$ for $x_i\in\{0,1\}$ and $\Delta_{ij}=\min(w_i,w_j)/a$ by Lemma~\ref{app:lem-humanpair}, so $\widehat V_{\rm PCP}(s)=K+Q$ by~\eqref{eq:app-critic-unresolved}.
\end{proof}

\subsection{Proof of Lemma~\ref{app:lem-phi}}
\label{app:proof-phi}

Because $p_i=0$, the caps are zero and the portfolio is in the human-resolution regime from the start. With $a=1$, the human can answer one question in every period. The $j$th question answered is answered no earlier than period $j$, and its task completes no earlier than $\max\{b,j\}$, so $V^*\ge\sum_{j=1}^n\max\{b,j\}$; answering one question per period from period~1 attains this bound.

If $b\le1$, every unresolved task has at most one unit of work, and Corollary~\ref{app:cor-interp} shows that PCP is optimal, with cost $\sum_{j=1}^nj=n(n+1)/2$.

Let $b\ge2$. All agents work in every period, so all unresolved tasks always have the same remaining work $x$. With $a=1$ and zero fees, the quantity $\Gamma_j$ in the proof of Lemma~\ref{app:lem-priority} is $\ind{y=0}+(m-1)\ind{y\le1}$, where $y=(x-1)^+$ and $m$ is the number of unresolved tasks. If $x\ge3$, then $y\ge2$ and $\Gamma_j=0$ for every $j$, so every plan ties, and the no-submission rule neither submits nor serves. If $x=2$, then $y=1$ and $\Gamma_j=m-1>0$, so PCP serves a question. Afterwards every unresolved task has $x\le1$, and by Lemma~\ref{app:lem-priority} PCP serves one question per period until none remain. The first question is therefore answered in period $b-1$, when $x=2$, and the completion times are $b,b,b+1,\ldots,b+n-2$. Their sum is $nb+(n-1)(n-2)/2$.

For the ratio, $2V^*=n^2+b^2+n-b$ when $b\le n$, and $V^*=nb$ when $b\ge n$. If $2\le b\le n$,
\[
 \frac{J^{\rm PCP}}{V^*}=\frac{2nb+n^2-3n+2}{n^2+b^2+n-b}\le\frac{n^2+2nb}{n^2+b^2}\le\varphi.
\]
The first inequality holds because the cross-multiplied difference, $(n^2+b^2)(2-3n)-(n^2+2nb)(n-b)$, is negative. The second follows from $\varphi(n^2+b^2)-(n^2+2nb)=\varphi(b-n/\varphi)^2\ge0$, which uses $\varphi-1=1/\varphi$. If $b\ge n$, the ratio is $1+(n-1)(n-2)/(2nb)<3/2<\varphi$. Along $b=\lfloor n/\varphi\rfloor$, the ratio converges to $(1+2/\varphi)/(1+1/\varphi^2)=\sqrt5/(3-\varphi)=\varphi$.

Finally, consider a tie rule that prefers serving a question among tied plans. Then PCP serves a question in every period: when $x\ge3$ all plans tie, and when $x\le2$ serving is strictly better or tied, with every remaining task alike. Hence it answers one question per period from period~1, which is optimal.
\qed

\subsection{Proofs for the Bellman defect}
\label{app:proof-defect}

\begin{proof}[Proof of Lemma~\ref{app:lem-defect}]
Project plan $d$ onto task $i$ by keeping task $i$'s submission, service, and agent actions and replacing any service of another task by no service. This is a feasible plan for the singleton problem, with immediate cost $g_i^d$ and next-state law equal to the marginal of $S'_i$ under $d$; the projection uses only the current state. The singleton value satisfies the Bellman inequality $V_i\le g+P V_i$ for every feasible singleton plan, including plans that may leave the state unchanged, because using the plan for one period and continuing optimally thereafter is feasible. Hence $\lambda_i^d\ge0$, and the same argument for pairs gives $\lambda_{ij}^d\ge0$.

For $n\ge2$, the definitions give $\widehat V_{\rm PCP}=\sum_{i<j}V_{ij}-(n-2)\sum_iV_i$, with sums over all tasks, because each $V_i$ appears in $n-1$ pair values. Since $\sum_ig_i^d=g_d$ and each $g_i^d$ appears $n-1$ times in the pair sum and $n-2$ times in the subtracted singleton sum, \eqref{eq:app-defect-full} follows; for $n=1$ it reduces to $g_d+P_dV_1-V_1=\lambda_1^d$.

If task $i$ is resolved, rule~(d) makes it work in every period until it is finished, so $\lambda_i^d=0$. Its pair value with any task $j$ is additive at the current state and at every successor, so $\lambda_{ij}^d=\lambda_j^d$, which is zero if $j$ is also resolved. The $n-m$ resolved tasks therefore contribute $(n-m)\sum_{j\in\mathcal A}\lambda_j^d$ to the pair sum in~\eqref{eq:app-defect-full}, which leaves the coefficient $-(m-2)$ on $\sum_{j\in\mathcal A}\lambda_j^d$. Minimizing over $\mathcal D_+(s)$ gives~\eqref{eq:app-defect}.
\end{proof}

\begin{proof}[Proof of Proposition~\ref{app:prop-defect}]
Write $H=\widehat V_{\rm PCP}$ and recall that $g_d(s)\ge\omega(s)>0$ at unfinished states and $H\ge0$, with $H=0$ at terminal states.

\emph{Part (i).} By the definitions of $\rho$ and $e_\pi$, at every unfinished state $s$ reachable under $\pi$,
\[
 g_{\pi(s)}(s)+P_{\pi(s)}H(s)-H(s)=\rho(s)+e_\pi(s)\le(\epsilon_++\delta)\,\omega(s)\le(\epsilon_++\delta)\,g_{\pi(s)}(s),
\]
so $(1-\epsilon_+-\delta)\,g_{\pi(s)}(s)\le H(s)-P_{\pi(s)}H(s)$. The policy $\pi$ satisfies the progress guard, so it completes the portfolio with probability one (Lemma~\ref{app:lem-progress}). Summing along its trajectory up to period $T$ gives $(1-\epsilon_+-\delta)\,\E\sum_{t<T}g(S_t)\le H(s_0)-\E H(S_T)\le H(s_0)$, and monotone convergence as $T\to\infty$ yields $J^\pi(s_0)\le H(s_0)/(1-\epsilon_+-\delta)$.

\emph{Part (ii).} Let $\pi^*$ be the optimal deterministic stationary policy of Proposition~\ref{app:prop-dp}, which selects plans in $\mathcal D_+(s)$; starting from $s_0$, it stays in $\mathcal S$ because $\mathcal S$ is forward closed. For every unfinished $s\in\mathcal S$ and $d=\pi^*(s)$,
\[
 g_d(s)+P_dH(s)-H(s)\ge\rho(s)\ge-\epsilon_-\,\omega(s)\ge-\epsilon_-\,g_d(s),
\]
so $H(s)\le(1+\epsilon_-)\,g_d(s)+P_dH(s)$. Summing along the trajectory of $\pi^*$ up to period $T$ gives $H(s_0)\le(1+\epsilon_-)\,\E\sum_{t<T}g(S_t)+\E H(S_T)$. Because $H$ is bounded on the finite state space and $S_T$ is terminal for all large $T$ almost surely, $\E H(S_T)\to0$ by dominated convergence, so $H(s_0)\le(1+\epsilon_-)V^*(s_0)$.

\emph{Combination.} The states reachable under $\pi$ belong to $\mathcal S$, because $\pi$ selects plans in $\mathcal D_+(s)$ and $\mathcal S$ is forward closed. Hence the combined conditions imply those of parts~(i) and~(ii), and~\eqref{eq:app-defect-ratio} follows. Neither argument uses a discount factor or removes self-transitions from $\mathcal T$.
\end{proof}

\begin{proof}[Proof of Theorem~\ref{thm:pcp_guarantee}(ii)]
Exact PCP has $e_\pi\equiv0$. Apply Proposition~\ref{app:prop-defect} with $\delta=0$ on the set $\mathcal S$ of Theorem~\ref{thm:pcp_guarantee}(ii).
\end{proof}

\section{Details of the Numerical Experiments}
\label{app:numerics}

This appendix documents the policies, instances, and evaluation methods of the synthetic experiments in Section~\ref{subsec:num-sim} and of the LLM case study in Section~\ref{subsec:num-llm}. It also describes the computing environment, shows that the mechanisms of Section~\ref{sec:structure} extend beyond the settings of the theorems, and reports additional results. All experiments use the exact dynamic program of Proposition~\ref{app:prop-dp} with the caps of Proposition~\ref{app:prop-caps}.

\subsection{Policies and implementation}
\label{app:num-policies}

\paragraph{PCP and the singleton policy}
Both policies minimize~\eqref{eq:greedy} over $\mathcal D_+(s)$, with the pairwise and the singleton approximation, respectively, and use exact singleton and pair values. In the benchmark of Section~\ref{subsec:num-bench} we report both exact minimization, which enumerates every plan, and the coordinate search of Appendix~\ref{app:implementation}. The larger portfolios of Section~\ref{subsec:num-scale} and the experiment with misestimated inputs in Section~\ref{subsec:num-robust} use coordinate search. Both policies break ties with the service-preferring rule of Appendix~\ref{app:implementation}, and Table~\ref{tab:gaps} and Section~\ref{subsec:num-guarantee} also report PCP with the no-submission rule.

\paragraph{Rules for when to ask}
Always Ask submits, whenever no question is pending, the question of the unresolved task with the largest weight, breaking ties by task index. It never searches, and the human answers the pending question with the largest weight whenever available. Random-order Ask is identical except that the next task is chosen uniformly at random among the unresolved tasks without a question. Search Once Then Ask lets every agent work until its independent work is finished, then search once if the search can succeed, and makes the task eligible for a question after that search fails. Among eligible tasks, it submits the question of the task with the largest weight, and pending tasks do not search. All three rules obey Lemma~\ref{lem:reform}(ii), so at most one new question is submitted per period.

\subsection{Instances}
\label{app:num-instances}

Table~\ref{tab:app-instances} lists the instances of the synthetic experiments. The benchmark draws the weights with a pseudorandom generator seeded by the name of the portfolio. The homogeneous family is that of Lemma~\ref{app:lem-phi}, and its closed forms agree with the dynamic program for all $n\le5$. In the experiment with misestimated inputs, both policies compute their value tables and decisions from the misestimated parameters, with $p$ capped at $0.95$ and $a$ restricted to $[0.1,1]$, and are evaluated under the true parameters. In the mechanism experiments, the no-withdrawal model forbids search by an agent whose question has been submitted.

\begin{table}[!htbp]
\centering\small
\caption{Instances of the synthetic experiments. Unless noted otherwise, all tasks in a portfolio share $b$, $p$, $\alpha$, and $c$, $\kappa_i=0.1$, $\eta=0.5$, the weights are drawn independently and uniformly from $\{0.5,1,2,4\}$, and all tasks start unsubmitted with no failed searches. The benchmark has three weight draws for each combination of the factors, and the human-resolution regime and the mechanism portfolios have five. (a)~$w_i=1$ and $\kappa_i=\eta=0$. (b)~$w=\kappa=\eta=1$, with 81 combinations of $a$ and $p$ for each $\alpha$ and $b$. (c)~The weights are fixed across $b$. (d)~Each task draws its own $w_i$, $p_i$, $c_i$, and $\kappa_i\in\{0.05,0.1\}$, and $\eta\in\{0.05,0.1\}$ is common. (e)~$n\in\{4,8,12,16,24,32,48\}$.}
\label{tab:app-instances}
\setlength{\tabcolsep}{3pt}
\resizebox{\linewidth}{!}{%
\begin{tabular}{@{}llrllllll@{}}
\toprule
Experiment & Section & Instances & $n$ & $b$ & $a$ & $p$ & $\alpha$ & $c$\\
\midrule
 Benchmark & \ref{subsec:num-sim} & 864 & 3, 4 & 0, 1, 2, 4 & 0.25, 0.5, 1 & 0.3, 0.7 & 0.8, 1 & 0.1, 0.5, 1.5\\
 Human-resolution regime & \ref{subsec:num-guarantee} & 180 & 3, 4, 5 & 0, 1, 2, 3, 4, 6 & 0.5, 1 & 0 & -- & --\\
 Homogeneous family$^{\rm a}$ & \ref{subsec:num-guarantee} & all $(n,b)$ & 2--30 & $0,\ldots,1.5n$ & 1 & 0 & -- & --\\
 Larger portfolios & \ref{subsec:num-scale} & 56 & 4--48$^{(\rm e)}$ & 0, 2 & 0.5, 1 & 0.3, 0.7 & 0.8 & 0.5\\
 Misestimated inputs & \ref{subsec:num-robust} & 144 & \multicolumn{6}{l}{four-task benchmark portfolios with the first weight draw}\\
 Mechanisms, one task$^{\rm b}$ & \ref{app:num-mech} & 1,296 & 1 & 0, 1, 2, 3 & $0.1,\ldots,0.9$ & $0.1,\ldots,0.9$ & 0.5, 0.65, 0.8, 1 & 0.1\\
 Mechanisms, portfolios$^{\rm c}$ & \ref{app:num-mech} & 90 & 4 & 0, 1, 2, 3 & 0.25, 0.5, 1 & 0.3, 0.7 & 1 & 0.1, 0.5, 1.5\\
 Priority reversals$^{\rm d}$ & \ref{app:num-mech} & 4,000 & 3 & 0 & 1 & $0,0.1,\ldots,0.9$ & 1 & 0.01, 0.05, 0.2\\
\bottomrule
\end{tabular}
}
\end{table}

\subsection{Evaluation}
\label{app:num-evaluation}

\paragraph{Exact evaluation}
For the benchmark and the mechanism experiments, the expected cost and the expected numbers of searches, submissions, and answers of every policy follow from a recursion over the states it reaches, with self-loops eliminated as in~\eqref{eq:app-dp}. The optimal cost follows from Proposition~\ref{app:prop-dp}. Savings are averages of portfolio-level percentage differences. On a Macbook Pro with an Apple M5 processor, the exact solution of a four-task portfolio with $\alpha=0.8$ takes 5.7\,s on average.

\paragraph{Certificates}
For Theorem~\ref{thm:pcp_guarantee}(ii), we compute $\rho(s)$ by~\eqref{eq:pcp_residual} at every state of the exact dynamic program, which is the set of states reachable from the initial state under any plan of $\mathcal D_+$ and is therefore forward closed. We report $\epsilon_+=\max_s\rho(s)^+/\widetilde\omega(s)$ and $\epsilon_-=\max_s\rho(s)^-/\widetilde\omega(s)$, where $\widetilde\omega(s)=\sum_{i:z_i\ne R}w_i\le\omega(s)$ is the weight of the unresolved tasks and the maxima run over the states with $\widetilde\omega(s)>0$. Because $\widetilde\omega\le\omega$, these constants satisfy~\eqref{eq:residual_condition} and can only overstate the smallest constants that do. At a state with $\widetilde\omega(s)=0$, every requirement is resolved, the remaining cost is the deterministic work of the resolved tasks, and $\rho(s)=0$, so~\eqref{eq:residual_condition} holds there for any constants.

\paragraph{Simulation}
For larger portfolios, each instance is simulated on 200 sample paths. A path fixes, in advance, the recoverability of every task, the availability of the human in every period, and a uniform random number for every search attempt, and every policy is run on the same paths. The table-based implementation of PCP reproduces the exact expected cost of the reference implementation on 40 benchmark portfolios, and its simulated costs agree with the exact values within sampling error.

\subsection{LLM case study}
\label{app:num-llm}

\paragraph{Tasks}
Each task asks the agent to choose a supplier for one purchase order. A task has a purchase identifier, a quantity drawn from $\{40,60,80,100\}$, and four suppliers. Each supplier has a unit price drawn uniformly from $\{15,\ldots,60\}$, a fixed shipping charge drawn from $\{0,\ldots,90\}$, and a lead time, and the four lead times are a random permutation of $\{4,8,12,16\}$ days. The hidden requirement is a maximum lead time drawn from $\{6,10,14\}$ days, independently of the supplier data. The correct choice is the supplier with the lowest landed cost, the quantity times the unit price plus the shipping charge, among the suppliers whose lead time does not exceed the requirement.

\paragraph{Independent work}
The first unit of work presents eight quotes in random order: the current approved quote of each supplier, two superseded quotes with higher prices and shorter lead times, one draft with a lower price, and one approved quote for another purchase with a higher shipping charge. The agent must return the current approved quote of each supplier. The second unit asks the agent to compute each supplier's landed cost from the quotes it extracted. Neither unit uses the requirement.

\paragraph{Search packets}
A search presents six documents in random order, and Table~\ref{tab:app-packet} shows an example from the run. Five documents are decoys: an approved requirement for another purchase, a superseded revision of this purchase's requirement, a draft proposal, a supplier recommendation, and an operations note without a number. The sixth document is a planning estimate for another purchase when the search fails and the binding requirement when it succeeds. The binding document is approved, issued by the buyer, and matches the purchase and its current revision. It also mentions a second figure that is not binding, such as an abandoned proposal or an internal planning estimate, so the agent must identify the binding number. Document identifiers and the length of the packet carry no information about success.

\begin{table}[!htbp]
\centering\small
\caption{A search packet from the run in which the search succeeds. The binding document is the last one.}
\label{tab:app-packet}
\begin{tabularx}{\textwidth}{@{}lllp{0.11\textwidth}X@{}}
\toprule
Purchase & Revision & Status & Issuer & Text\\
\midrule
 other & 3 & approved & buyer & Approved requirement: the maximum permitted lead time is 7 calendar days.\\
 this & 3 & informational & operations & The team would prefer delivery near the end of the planning week. No numeric calendar-day limit is authorized in this note.\\
 this & 2 & approved & buyer & Superseded requirement: offers must arrive within 5 calendar days. See revision 3 for the current decision.\\
 this & 3 & draft & buyer & Proposal under discussion: set a ceiling of 18 calendar days. This proposal is not approved.\\
 this & 3 & approved & supplier & Supplier recommendation: allow 13 calendar days. This is our delivery recommendation, not buyer authorization.\\
 this & 3 & approved & buyer & For this purchase, accept lead times of no more than 10 calendar days. An internal 11-day planning estimate is not the binding requirement.\\
\bottomrule
\end{tabularx}
\end{table}

\paragraph{Search outcomes and availability}
For each task and portfolio, recoverability is drawn once with probability $p$. If the task is recoverable, each search attempt succeeds independently with probability $\alpha$. Outcomes are indexed by attempt, so the $k$th search of a task has the same outcome under every policy. The availability of the human is drawn once for each period and is the same for all policies. The scripted human answers at most one question in each available period and returns the true requirement.

\paragraph{Prompt}
Every agent step uses the prompt below, followed by a JSON object that lists the records to process. A record contains the action, which is \texttt{search}, \texttt{extract}, \texttt{calculate}, or \texttt{final}, and the public data that the step needs. The final step receives the requirement learned from search or from the human, together with the agent's own earlier lead times and costs.
{\footnotesize
\begin{verbatim}
Execute the independent synthetic procurement records supplied below.
Use ONLY the content of each record; do not use tools, files, internet or outside
knowledge. Records are independent: never transfer a fact from one id/purchase to
another. Return JSON {"records":[...]}, exactly one response for each supplied id,
and no prose outside JSON. Every response must include its original id.

For action=search: inspect the document packet using its authority_rule. Only a
buyer-issued APPROVED document for the exact purchase_id and current
requirement_revision establishes the limit. Return found (boolean),
maximum_lead_days (integer or null), evidence_document_id (string or null), and
evidence_text (an exact supporting passage from that document, or null). A draft,
old revision, different purchase, supplier recommendation or vague deadline is
not sufficient. If there is no qualifying explicit numeric calendar-day limit,
return found=false and null for the other three fields. Documents can mention
both an abandoned proposal and the binding limit: apply the binding statement.

For action=extract: use quote_rule and current_quote_revisions to choose one
current approved offer for each supplier. Return suppliers (objects containing
only supplier, unit_price, fixed_shipping, lead_days, sorted by supplier ID) and
lead_days (map from supplier ID to its selected numeric lead time). Do not copy
draft, obsolete or other-purchase offers.

For action=calculate: use the supplied previously extracted suppliers and
quantity. Return costs, a map from supplier ID to quantity*unit_price+fixed_shipping.
Do not substitute numbers from another record or invent missing inputs.

For action=final: use the supplied observed maximum_lead_days and previous
lead_days/costs artifacts. Choose any supplier of minimum supplied landed cost
among those whose supplied lead time is at most maximum_lead_days. Return
supplier, landed_cost, lead_days. If required evidence is missing, return null
for all three. Do not infer a missing requirement from the available suppliers.
\end{verbatim}
}

\paragraph{Execution}
Calls use \texttt{gpt-6.1-sol} with low reasoning effort through \texttt{codex exec}, with an ephemeral session and a read-only sandbox. Responses that use tools are rejected. Each call starts a fresh context and processes up to 24 records from portfolios run by the same policy, so copies of a portfolio under different policies never share a call. Up to 4 calls run at the same time, each with a timeout of 120 seconds and no retries. A malformed or failed response would count as a failed step, and none occurred. The policies use the nominal parameters and the service-preferring tie rule, and PCP and the singleton policy minimize~\eqref{eq:greedy} exactly. The cost of a portfolio is the total cost defined in Section~\ref{sec:setup}, and the final choice of supplier does not add to it. The documents, the prompt, and all random draws were fixed before the first call. All portfolios finished within 27 periods, below the limit of 60.

\paragraph{Usage}
The four policies required 175 calls with 2,287 records, about 4.1 million input tokens, of which 2.6 million were cached, and 0.15 million output tokens. A call took 23.7\,s on average.

\paragraph{Execution diagnostics}
Section~\ref{subsec:num-llm} reports that every agent step was correct. The agents made 133, 74, 0, and 160 searches under PCP, the singleton policy, Always Ask, and Search Once Then Ask, respectively.

\paragraph{Evaluation under the model}
Because the agents made no errors, the run follows the model exactly, and we evaluate each policy and each decision with the exact dynamic program. Table~\ref{tab:app-llm-exact} reports the expected cost of each policy and the optimal cost. For each decision $d$ taken at a state $s$ in the run, we also compute its expected excess cost $g_d(s)+\sum_{s'}P_d(s,s')V^*(s')-V^*(s)$, the increase in expected cost caused by taking $d$ at $s$ and acting optimally afterward. Table~\ref{tab:app-llm-regret} reports its average over the decisions of each policy. The savings in Section~\ref{subsec:num-llm} average percentages over portfolios. For comparison with them, we estimate the model's expected saving per portfolio from 5,000 simulated portfolios per setting with common random numbers. Relative to the singleton policy, it is 0.0\%, 0.0\%, 0.6\%, and 9.4\% in the four settings, in the order of Table~\ref{tab:app-llm-exact}, with standard errors of at most 0.3\%.

\begin{table}[!htbp]
\centering\small
\caption{Expected cost under the model in the LLM case study, and PCP's saving (\%) relative to the singleton policy computed from these expected costs.}
\label{tab:app-llm-exact}
\begin{tabular}{@{}ccrrrrrr@{}}
\toprule
$a$ & $c$ & $V^*$ & PCP & Singleton & Always Ask & Search Once Then Ask & Saving\\
\midrule
 1 & 0.1 & 12.89 & 12.89 & 12.89 & 13.40 & 20.73 & 0.0\\
 1 & 1.5 & 13.40 & 13.40 & 13.40 & 13.40 & 26.33 & 0.0\\
 0.25 & 0.1 & 30.63 & 30.68 & 30.97 & 42.65 & 40.25 & 1.0\\
 0.25 & 1.5 & 37.12 & 37.25 & 42.65 & 42.65 & 45.85 & 12.7\\
\bottomrule
\end{tabular}
\end{table}

\begin{table}[!htbp]
\centering\small
\caption{Average expected excess cost per decision in the LLM case study.}
\label{tab:app-llm-regret}
\begin{tabular}{@{}ccrrrr@{}}
\toprule
$a$ & $c$ & PCP & Singleton & Always Ask & Search Once Then Ask\\
\midrule
 1 & 0.1 & 0.000 & 0.000 & 0.128 & 1.351\\
 1 & 1.5 & 0.000 & 0.000 & 0.000 & 2.268\\
 0.25 & 0.1 & 0.004 & 0.060 & 2.626 & 0.921\\
 0.25 & 1.5 & 0.011 & 1.496 & 1.693 & 0.598\\
\bottomrule
\end{tabular}
\end{table}

\subsection{Computing environment}
\label{app:num-computing}

The dynamic programs, the policies, and the simulations are implemented in Python. The timings reported in Section~\ref{sec:numerics} and Appendix~\ref{app:num-evaluation} are from a Macbook Pro with an Apple M5 processor. The LLM case study uses the same implementation of the policies and calls the Codex command-line interface from that computer. Every number in Section~\ref{sec:numerics} and in this appendix is computed from the stored results by a script, without manual transcription.

\subsection{The mechanisms beyond the settings of the theorems}
\label{app:num-mech}

Theorem~\ref{thm:overlap} and Proposition~\ref{prop:withdraw} identify two mechanisms that can increase demand for human attention: useful independent work and the option to withdraw a question after asking. In both, the mechanism can shift the optimal initial action from searching to asking. The two results assume a single search attempt ($\alpha=1$) and at most one unit of work. This subsection shows that both mechanisms also arise with repeated search, several units of work, and four tasks that share one human, and it measures how often the priority reversal of Proposition~\ref{prop:noindex} occurs. All comparisons are between optimal policies, evaluated exactly as in Appendix~\ref{app:num-evaluation}, and Table~\ref{tab:app-instances} lists the instances. The single-task experiments use the cost parameters of Figure~\ref{fig:regions}.

\begin{figure}[!htbp]
\centering
\includegraphics[width=\textwidth]{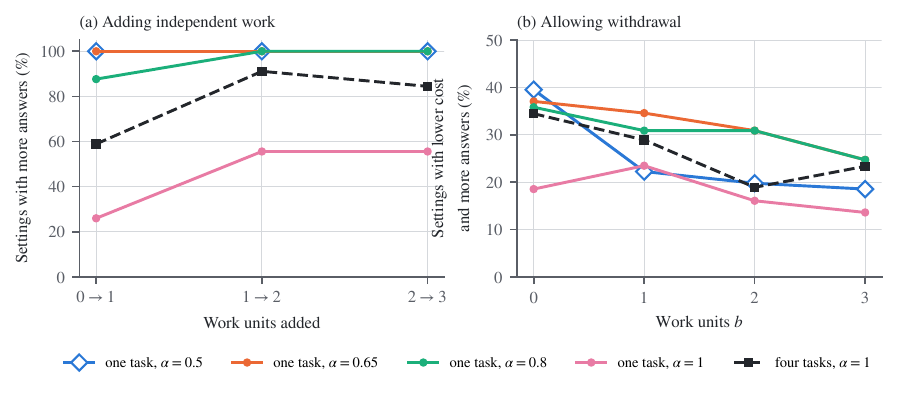}
\caption{The mechanisms of Section~\ref{sec:structure} beyond the settings of the theorems. Single tasks have $w=\kappa=\eta=1$, $c=0.1$, and 81 combinations of $a$ and $p$. (a) Share of settings in which one more unit of work raises the optimal expected number of human answers. The curve for $\alpha=0.5$ coincides with that for $\alpha=0.65$. (b) Share of settings in which withdrawal lowers the optimal cost and raises the expected number of answers. The dashed curves show the 90 portfolios of four tasks that share one human, with $\alpha=1$.}
\label{fig:mechanisms}
\end{figure}

\paragraph{Independent work}
Panel~(a) of Figure~\ref{fig:mechanisms} shows that useful work makes the optimal policy ask more. When search can be repeated ($\alpha<1$), adding one unit of work raises the optimal expected number of human answers in at least 88\% of settings at every work level, and in every setting when $\alpha\le0.65$. With work, an agent gives up searching sooner, because each failed search delays an answer that could overlap its work. With a single search attempt, extra work changes the number of answers only when it changes whether the agent searches before asking, which happens in 26\% to 56\% of settings. The effect persists when four tasks share the human: when the work of every task rises from zero to one, one to two, and two to three units, the optimal number of answers rises in 59\%, 91\%, and 84\% of portfolios, respectively.

\paragraph{Withdrawal}
Panel~(b) shows that withdrawal often raises human demand while lowering cost. When search can be repeated, withdrawal lowers the optimal cost in every setting, by 1.9\% to 6.6\% on average depending on $\alpha$ and $b$, and in 19\% to 40\% of settings it does so while raising the expected number of answers. In four-task portfolios, withdrawal lowers the optimal cost by 2.6\% to 4.8\% on average and raises answers while lowering cost in 19\% to 34\% of portfolios.

\paragraph{Priority reversals}
The reversal behind Proposition~\ref{prop:noindex} is rare. We draw 4,000 random portfolios of three tasks of the type in Example~\ref{ex:priority} (Table~\ref{tab:app-instances}) and compare the optimal costs of serving each task first. We keep the 3,942 portfolios in which one of the first two tasks should be strictly preferred for the first answer, both with and without the third task. Among them, adding the third task reverses which of the other two should be answered first in 80 (2.0\%). In 32 of these (0.8\% of all 3,942), the optimal policy serves the reversed choice first, so the reversal changes the optimal decision. In more than 99\% of these portfolios, therefore, the third task does not change which of the other two the optimal policy answers first.

\FloatBarrier
\subsection{Additional results}
\label{app:num-results}

Table~\ref{tab:app-factors} reports the benchmark results of Section~\ref{subsec:num-bench} by factor, and Table~\ref{tab:app-robust} reports every scenario of the experiment with misestimated inputs in Section~\ref{subsec:num-robust}.

\begin{table}[!htbp]
\centering\small
\caption{Benchmark results by factor: PCP's mean saving relative to the singleton policy (\%), mean gaps to the optimal cost (\%), and the mean difference between PCP and the singleton policy in the expected numbers of answers and searches.}
\label{tab:app-factors}
\begin{tabular}{@{}lrrrrr@{}}
\toprule
Level & Saving & PCP gap & Singleton gap & $\Delta$answers & $\Delta$searches\\
\midrule
 $n=3$ & 0.83 & 0.039 & 0.91 & -0.09 & 0.26\\
 $n=4$ & 1.38 & 0.091 & 1.57 & -0.19 & 0.58\\
\addlinespace
 $b=0$ & 1.23 & 0.002 & 1.35 & -0.15 & 0.45\\
 $b=1$ & 1.34 & 0.046 & 1.45 & -0.17 & 0.48\\
 $b=2$ & 1.49 & 0.100 & 1.69 & -0.13 & 0.43\\
 $b=4$ & 0.35 & 0.113 & 0.47 & -0.11 & 0.33\\
\addlinespace
 $a=0.25$ & 1.31 & 0.098 & 1.49 & -0.17 & 0.60\\
 $a=0.5$ & 1.23 & 0.066 & 1.37 & -0.19 & 0.51\\
 $a=1$ & 0.77 & 0.031 & 0.87 & -0.06 & 0.15\\
\addlinespace
 $p=0.3$ & 1.04 & 0.089 & 1.20 & -0.13 & 0.58\\
 $p=0.7$ & 1.17 & 0.041 & 1.28 & -0.15 & 0.27\\
\addlinespace
 $\alpha=0.8$ & 1.26 & 0.072 & 1.43 & -0.15 & 0.52\\
 $\alpha=1$ & 0.94 & 0.058 & 1.05 & -0.13 & 0.32\\
\addlinespace
 $c=0.1$ & 0.22 & 0.024 & 0.26 & -0.01 & 0.12\\
 $c=0.5$ & 1.08 & 0.069 & 1.20 & -0.14 & 0.46\\
 $c=1.5$ & 2.00 & 0.103 & 2.27 & -0.26 & 0.69\\
\bottomrule
\end{tabular}
\end{table}

\begin{table}[!htbp]
\centering\small
\caption{Policies built with misestimated inputs and evaluated under the true parameters: gap to the true optimal cost (\%), 144 portfolios.}
\label{tab:app-robust}
\begin{tabular}{@{}lrrrr@{}}
\toprule
& \multicolumn{2}{c}{PCP} & \multicolumn{2}{c}{Singleton}\\
\cmidrule(lr){2-3}\cmidrule(lr){4-5}
Inputs used by the policy & Mean & Worst & Mean & Worst\\
\midrule
 Correct inputs & 0.10 & 1.76 & 1.51 & 14.19\\
 $p$ halved & 0.64 & 5.79 & 4.02 & 33.04\\
 $p$ multiplied by 1.5 & 0.67 & 9.29 & 0.99 & 12.82\\
 $\alpha$ multiplied by 0.75 & 0.70 & 5.57 & 2.44 & 19.08\\
 $a$ decreased by 0.2 & 0.74 & 10.26 & 0.53 & 5.79\\
 $a$ increased by 0.2 & 0.17 & 1.76 & 2.27 & 18.33\\
\bottomrule
\end{tabular}
\end{table}

\end{document}